    \NewDocumentCommand{\citet}{o m}{
      \IfNoValueTF{#1}
        {\citeauthor{#2} (\citeyear{#2})}
        {\citeauthor{#2} (\citeyear[#1]{#2})}
    }
    \NewDocumentCommand{\citep}{o m}{
      \IfNoValueTF{#1}
        {\cite{#2}}
        {\cite[#1]{#2}}
    }
  \renewcommand{\algorithmiccomment}[1]{\hfill \# #1}
  \def\State\STATE
  \def\If\IF
  \def\Then\THEN
  \def\Elsif\ELSIF
  \def\Else\ELSE
  \def\Endif\ENDIF
  \def\For\FOR
  \def\Forall\FORALL
  \def\Do\DO
  \def\Endfor\ENDFOR
  \def\While\WHILE
  \def\Endwhile\ENDWHILE
  \def\Repeat\REPEAT
  \def\Until\UNTIL
  \def\Return\RETURN
  \def\Require\REQUIRE
  \def\Ensure\ENSURE
  \def\Comment\COMMENT
  \algrenewcommand\algorithmicindent{0.7em}
  \algrenewcommand{\algorithmiccomment}[1]{\hfill \# #1}
  \def\STATE\State
  \def\IF\If
  \def\THEN\Then
  \def\ELSIF\ElsIf
  \def\ELSE\Else
  \def\ENDIF\EndIf
  \def\FOR\For
  \def\FORALL\ForAll
  \def\DO\Do
  \def\ENDFOR\EndFor
  \def\WHILE\While
  \def\ENDWHILE\EndWhile
  \def\REPEAT\Repeat
  \def\UNTIL\Until
  \def\RETURN\Return
  \def\REQUIRE\Require
  \def\ENSURE\Ensure
  \def\COMMENT\Comment
\def\eqref#1{equation~\ref{#1}}
\def\1{\bm{1}}
\def\0{\bm{0}}
\def\rh{{\textnormal{h}}}
\def\vf{{\bm{f}}}
\DeclareMathAlphabet{\mathsfit}{\encodingdefault}{\sfdefault}{m}{sl}
\SetMathAlphabet{\mathsfit}{bold}{\encodingdefault}{\sfdefault}{bx}{n}
\def\R{{\mathbb{R}}}
\def\Z{{\mathbb{Z}}}
\DeclareMathOperator*{\argmin}{arg\,min}
\newcommand{\brackets}[1]{{\left<#1\right>}}
\newcommand{\braces}[1]{{\left\{#1\right\}}}
\newcommand{\dbrackets}[1]{{\left\llbracket#1\right\rrbracket}}
\newcommand{\then}{\therefore \qquad}
\NewDocumentCommand{\diffby}{s m O{}}{
 \IfBooleanTF{#1}
  {\frac{\partial#3}{\partial#2}}
  {\frac{d#3}{d#2}}
}
\newcommand{\satisfies}{\vDash}
\RenewDocumentCommand{\to}{o o}{
 \IfNoValueTF{#1}
  {\rightarrow}
  {\IfNoValueTF{#2}
   {\xrightarrow{#1}}
   {\xrightarrow[#2]{#1}}}
}
\NewDocumentCommand{\affect}{o o}{
 \IfNoValueTF{#1}
  {\rightsquigarrow}
  {
   \IfNoValueTF{#2}
   {\rightsquigarrow^{#1}}
   {\rightsquigarrow^{#1}_{#2}}
  }
}
\renewcommand{\then}{\Rightarrow}
\newcommand{\todo}[1]{\iftoggle{dev}{\red{\textbf{#1}}}{}}
\newcommand\tsup[2][2]{
 \def\useanchorwidth{T}
  \ifnum#1>1
    \stackon[-.5pt]{\tsup[\numexpr#1-1\relax]{#2}}{\scriptscriptstyle\sim}
  \else
    \stackon[.5pt]{#2}{\scriptscriptstyle\sim}
  \fi
}
\newcommand{\function}[1]{\textsc{#1}}
\newcommand{\mycolor}[2]{\textcolor{#1}{#2}}
\newcommand{\red}[1]{\mycolor{red}{#1}}
\newcommand{\green}[1]{\mycolor{Green}{#1}} 
\newcommand{\darkblue}[1]{\mycolor{RoyalBlue}{#1}}
\newcommand{\cyan}[1]{\mycolor{cyan}{#1}}
\newcommand{\magenta}[1]{\mycolor{magenta}{#1}}
\newcommand{\brown}[1]{\mycolor{Brown}{#1}}
\newcommand{\orange}[1]{\mycolor{orange}{#1}}
\def\_{\\[-0.3em]}
\newlength{\maxwidth}
\newtheorem{theo}{Theorem}
\let\@myref\ref
\renewcommand{\ref}[1]{\@myref{#1}\iftoggle{dev}{\todo{(Do not use ``ref'' directly!)}}{}}
\newcommand{\refsec}[1]{Sec.\,\@myref{#1}}
\newcommand{\refseq}[1]{Sec.\,\@myref{#1}}
\newcommand{\refig}[1]{Fig.\,\@myref{#1}}
\newcommand{\reftbl}[1]{Table \@myref{#1}}
\newcommand{\refstep}[1]{Step \@myref{#1}}
\newcommand{\refalgo}[1]{Alg.\,\@myref{#1}}
\newcommand{\refchap}[1]{Chap.\,\@myref{#1}}
\newcommand{\reflst}[1]{List \@myref{#1}}
\newcommand{\refeq}[1]{Eq.\,\@myref{#1}} 
\newcommand{\reftheo}[1]{Thm.\,\@myref{#1}}
\newcommand{\refline}[1]{line\,\@myref{#1}}
\newcommand{\refdef}[1]{Def.\, \@myref{#1}}
\newcommand{\refex}[1]{Example\,\@myref{#1}}
\newcommand{\refconv}[1]{Conv.\,\@myref{#1}}
\newcommand{\reffact}[1]{Fact.\,\@myref{#1}}
\newcommand{\reflemma}[1]{Lemma.\,\@myref{#1}}
\newcommand{\refcorol}[1]{Col.\,\@myref{#1}}
\newcommand{\refalg}[1]{Alg.\,\@myref{#1}}
\newcommand{\refsecs}[2]{Sec.\,\@myref{#1}-\@myref{#2}}
\newcommand{\refseqs}[2]{Sec.\,\@myref{#1}-\@myref{#2}}
\newcommand{\refigs}[2]{Fig.\,\@myref{#1}-\@myref{#2}}
\newcommand{\reftbls}[2]{Tables \@myref{#1}-\@myref{#2}}
\newcommand{\refsteps}[2]{Steps \@myref{#1}-\@myref{#2}}
\newcommand{\refalgos}[2]{Alg.\,\@myref{#1}-\@myref{#2}}
\newcommand{\refchaps}[2]{Chap.\,\@myref{#1}-\@myref{#2}}
\newcommand{\reflsts}[2]{Lists \@myref{#1}-\@myref{#2}}
\newcommand{\refeqs}[2]{Eq.\,\@myref{#1}-\@myref{#2}}
\newcommand{\refpages}[2]{p.\pageref{#1}-\@myref{#2}}
\newcommand{\reftheos}[2]{Thm.\,\@myref{#1}-\@myref{#2}}
\newcommand{\reflines}[2]{line\,\@myref{#1}-\@myref{#2}}
\newcommand{\refdefs}[2]{Def.\,\@myref{#1}-\@myref{#2}}
\newcommand{\refexs}[2]{Example\,\@myref{#1}-\@myref{#2}}
\newcommand{\refconvs}[2]{Conv.\,\@myref{#1}-\@myref{#2}}
\newcommand{\reffacts}[2]{Facts.\,\@myref{#1}-\@myref{#2}}
\newcommand{\reflemmas}[2]{Lemma.\,\@myref{#1}-\@myref{#2}}
\newcommand{\refcorols}[2]{Col.\,\@myref{#1}-\@myref{#2}}
\newcommand{\refalgs}[2]{Alg.\,\@myref{#1}-\@myref{#2}}
\newcounter{list}[section]
\newcommand{\coolname}{N\textepsilon{}bula\xspace}
\newcommand{\pre}{\function{pre}}
\newcommand{\adde}{\function{add}}
\newcommand{\dele}{\function{del}}
\newcommand{\cost}{\function{cost}}
\def\hash{\text{\relsize{-1}\#}}
\newcommand{\ar}[1]{\hash{}#1}
\newcommand{\sota}{State-of-the-Art\xspace}
\newcommand{\lsota}{state-of-the-art\xspace}  
\newcommand{\astar}{\ifmmode{A^*}\else{A$^*$}\fi\xspace}
\newcommand{\gbfs}{\ifmmode{\mathrm{GBFS}}\else{GBFS}\fi\xspace}
\NewDocumentCommand{\uct}{s}{\ifmmode{\mathrm{UCT}{\IfBooleanT{#1}{^*}}}\else{UCT{\IfBooleanT{#1}{*}}}\fi\xspace}
\NewDocumentCommand{\guct}{s}{\ifmmode{\mathrm{GUCT}{\IfBooleanT{#1}{^*}}}\else{GUCT{\IfBooleanT{#1}{*}}}\fi\xspace}
\newcommand{\topen}{tree-based open list\xspace}
\newcommand{\newheuristic}[2]{
 \def#1{
  \relax\ifmmode
  h^\mathrm{#2}\xspace
  \else
  \text{#2}\xspace
  \fi
 }
}
\newheuristic{\lmcut}{LMcut}
\newheuristic{\mands}{M\&S}
\newheuristic{\pdb}{PDB}
\newheuristic{\ff}{FF}
\newheuristic{\ce}{CEA}
\newheuristic{\cg}{CG}
\newheuristic{\gc}{GC}
\newheuristic{\ad}{add}
\newheuristic{\hmax}{max}
\newheuristic{\lc}{LC}
\newheuristic{\blind}{blind}
\newcommand{\newlearnedheuristic}[2]{
 \def#1{
  \relax\ifmmode
  H^\mathrm{#2}\xspace
  \else
  \text{#2}\xspace
  \fi
 }
}
\newlearnedheuristic{\Hlmcut}{LMcut}
\newlearnedheuristic{\Hmands}{M\&S}
\newlearnedheuristic{\Hpdb}{PDB}
\newlearnedheuristic{\Hff}{FF}
\newlearnedheuristic{\Hce}{CEA}
\newlearnedheuristic{\Hcg}{CG}
\newlearnedheuristic{\Had}{add}
\newlearnedheuristic{\Hmax}{max}
\newlearnedheuristic{\Hlc}{LC}
\newlearnedheuristic{\Hblind}{blind}
\newcommand{\newUnitCostHeuristic}[2]{
 \def#1{
  \relax\ifmmode
  \hat{h}^\mathrm{#2}\xspace
  \else
  \text{#2}\xspace
  \fi
 }
}
\newUnitCostHeuristic{\lmcuto}{LMcut}
\newUnitCostHeuristic{\mandso}{M\&S}
\newUnitCostHeuristic{\ffo}{FF}
\newUnitCostHeuristic{\ceo}{CEA}
\newUnitCostHeuristic{\cgo}{CG}
\newUnitCostHeuristic{\ado}{add}
\newUnitCostHeuristic{\gco}{GoalCount}
\newUnitCostHeuristic{\lco}{LC}
\newcommand{\newrandomheuristic}[2]{
 \def#1{
  \ifmmode
  \rh^\mathrm{#2}\xspace
  \else
  \text{#2}\xspace
  \fi
 }
}
\newrandomheuristic{\rlmcut}{LMcut}
\newrandomheuristic{\rmands}{M\&S}
\newrandomheuristic{\rpdb}{PDB}
\newrandomheuristic{\rff}{FF}
\newrandomheuristic{\rce}{CEA}
\newrandomheuristic{\rcg}{CG}
\newrandomheuristic{\rad}{add}
\newrandomheuristic{\rhmax}{max}
\newrandomheuristic{\rlc}{LC}
\def\strips@initialize{
\def\@transitiononly{0}
\def\@conditiontype{0}
\def\@usecondeffect{0}
\def\@cost{0}
\def\@useaxiom{0}
\def\@lifted{0}
\def\@track{0}
}
\def\conditionset{
\if\@useaxiom0
P
\else
P\cup P_X
\fi
}
\let\satisfies@orig\satisfies
\def\satisfies{
\if\@conditiontype0
\supseteq
\else
\satisfies@orig
\fi
}
\def\condition{
\if\@conditiontype0
\conditionset
\else
\mathcal{F}(\conditionset)
\fi
}
\def\ga{
\if\@lifted0
a
\else
a^{\dagger}
\fi
}
\def\applyformula{
\if\@usecondeffect0
(s \setminus \dele(a)) \cup \adde(a)
\else
(s
 \setminus \braces{e \mid (c \triangleright e) \in \dele(\ga), c\satisfies s})
 \cup      \braces{e \mid (c \triangleright e) \in \adde(\ga), c\satisfies s}
\fi
}
\NewDocumentCommand{\strips}{O{}}{
\strips@initialize
\setkeys{strips}{#1}
\if\@lifted1
  \strips@propositional\par
  \strips@lifted
\else
  \strips@propositional
\fi
}
\newcommand{\strips@propositional}{
\if\@conditiontype1
Given a set of propositional variables $V$,
let $\mathcal{F}(V)$ be a propositional formula consisting of $V$ and
logical operations $\braces{\land,\lnot}$.
\fi
\if\@conditiontype2
Given a set of propositional variables $V$,
let $\mathcal{F}(V)$ be a propositional formula consisting of $V$ and
logical operations $\braces{\land,\lor,\lnot}$.
\fi
\if\@useaxiom0
We define a propositional STRIPS Planning problem
as a 4-tuple $\brackets{P,A,I,G}$
where
 $P$ is a set of propositional variables,
 $A$ is a set of actions,
 $I\subseteq P$ is the initial state, and
 $G\subseteq \conditionset$ is a goal condition.
\else
We define a propositional STRIPS Planning problem
as a 6-tuple $\brackets{P,A,X,P_X,I,G}$
where
 $P$ is a set of propositions,
 $A$ is a set of actions,
 $X$ is a set of axioms,
 $P_X$ is a set of derived propositions ($P\cap P_X=\emptyset$),
 $I\subseteq P$ is the initial state, and
 $G\subseteq \conditionset$ is a goal condition.
\fi
\ifnumcomp{\@transitiononly}{>}{0}{
We omit the details of action applications as they are not important in this paper.
It suffices to say an action $a\in A$ transitions from a state $s\subseteq P$ to a successor $s'=a(s)\subseteq P$.
}{
\ifnumcomp{\@cost}{<}{1}{
Each action $a\in A$ is a 3-tuple $\brackets{\pre(a),\adde(a),\dele(a)}$ where
}{
Each action $a\in A$ is a 4-tuple $\brackets{\pre(a),\adde(a),\dele(a),\cost(a)}$ where
$\cost(a) \in \Z^{0+}$ is a cost\ifnumcomp{\@cost}{=}{2}{ (We assume unit-cost: $\forall a\in A; \cost(a)=1$)}{},
}
$\pre(a) \subseteq \condition$ is a precondition and
\if\@usecondeffect0
$\adde(a), \dele(a)\subseteq P$ are the add-effects and delete-effects.
\else
$\adde(a), \dele(a)$ are the add-effects and delete-effects.
Each effect is denoted as $c \triangleright e$ where
$c \in \condition$ is an \emph{effect condition} and
$e \in P$.
\fi
\if\@useaxiom1
The set of axioms $X$ consists of clauses $f \Rightarrow p$ where
$f \in \condition$ is a body and $p \in P_X$ is a head.
\fi
A state $s\subseteq \conditionset$ is a set of true propositions
(all of $P\setminus s$ is false),
an action $a$ is \emph{applicable} when $s \satisfies \pre(a)$ (read: $s$ \emph{satisfies} $\pre(a)$),
and applying action $a$ to $s$ yields a new successor state
\if\@useaxiom0
$a(s) = \applyformula$.
\else
$a(s)$.
To compute $a(s)$, we first obtain a non-derived state
$s' \gets \applyformula \setminus P_X $.
Then we perform a fix-point calculation
$s' \gets s' \cup \braces{p \in P_X \mid (f\Rightarrow p)\in X \land s \satisfies f}$.
\fi
\par
}                               
The task of classical planning is to find a sequence of actions called a \emph{plan} $(\ga_1,\cdots,\ga_n)$
where, for $1\leq t\leq n$,
 $s_0=I$,
 \ifnumcomp{\@transitiononly}{>}{0}{}{$s_t\satisfies \pre(a_{t+1})$,}
 $s_{t+1}=a_{t+1}(s_t)$,
 and $s_n\satisfies G$.
\ifnumcomp{\@track}{>}{0}{
 A plan is \emph{optimal} if
 \ifnumcomp{\@cost}{<}{1}{
   there is no shorter plan.
 }{
   there is no plan with a lower cost $\sum_t \cost(a_t)$.
 }
 \ifnumcomp{\@track}{>}{1}{
   A plan is otherwise called \emph{satisficing}.
   \ifnumcomp{\@track}{>}{2}{
     A problem setting that completely ignores the solution quality is called \emph{agile} setting,
     while \emph{satisficing} setting implies that the solver still attempts to find a
     \ifnumcomp{\@cost}{<}{1}{shorter}{cheaper}
     plan.
     This paper focuses on the \emph{agile} setting.
   }{
     This paper focuses on the \emph{satisficing} setting.
   }
 }{}
}{}
}
\newcommand{\strips@lifted}{
In \emph{Lifted STRIPS}, each propositional variable is an \emph{instantiation}/\emph{grounding} of
a first-order logic predicate.
Each predicate $p(x_1,\ldots,x_{\ar{p}})$ is parameterized by a list of parameters/variables/arguments $X=(x_1,\ldots,x_{\ar{p}})$,
where $\ar{p}$ is an \emph{arity} of $p$.
A proposition is obtained by substituting each $x_i$ with an \emph{object} in a set $O$.
Each $p$ therefore has $O^{\ar{p}}$ instantiations.
Similarly, each action $a\in A$ is now called a \emph{ground action},
which is an instantiation of a \emph{lifted action} $a(x_1,\ldots,x_{\ar{p}})$ parameterized by $\ar{a}$ parameters.
A ground action is obtained by substituting the arguments as well as
the parameters used in the preconditions and the effects.
}
\long\def\addto#1#2#3{
  \ifinlist{#3}{#1}{
  }{
    \listadd{#1}{#3}
    \ifdefempty#2{
     \expandafter\def\expandafter#2\expandafter{#2#3}
    }{
     \expandafter\def\expandafter#2\expandafter{#2,#3}
    }
  }
}
 \def\heuristics@ff{#1}
 \ifnumcomp{\heuristics@ff}{>}{0}{
  \addto{\heuristiclist}{\heuristicstr}{\ff}
  \addto{\heuristiccitelist}{\heuristiccitestr}{hoffmann01}
 }{}
 \def\heuristics@ad{#1}
 \ifnumcomp{\heuristics@ad}{>}{0}{
  \addto{\heuristiclist}{\heuristicstr}{\ad}
  \addto{\heuristiccitelist}{\heuristiccitestr}{bonet2001planning}
 }{}
 \def\heuristics@hmax{#1}
 \ifnumcomp{\heuristics@hmax}{>}{0}{
  \addto{\heuristiclist}{\heuristicstr}{\hmax}
  \addto{\heuristiccitelist}{\heuristiccitestr}{bonet2001planning}
 }{}
 \def\heuristics@gc{#1}
 \ifnumcomp{\heuristics@gc}{>}{0}{
  \addto{\heuristiclist}{\heuristicstr}{\gc}
  \addto{\heuristiccitelist}{\heuristiccitestr}{FikesHN72}
 }{}
 \def\heuristics@cea{#1}
 \ifnumcomp{\heuristics@cea}{>}{0}{
  \addto{\heuristiclist}{\heuristicstr}{\ce}
  \addto{\heuristiccitelist}{\heuristiccitestr}{helmert2008unifying}
 }{}
 \def\heuristics@cg{#1}
 \ifnumcomp{\heuristics@cg}{>}{0}{
  \addto{\heuristiclist}{\heuristicstr}{\cg}
  \addto{\heuristiccitelist}{\heuristiccitestr}{Helmert04}
 }{}
\NewDocumentCommand{\heuristics}{O{}}{
\def\heuristics@ff{0}
\def\heuristics@ad{0}
\def\heuristics@hmax{0}
\def\heuristics@gc{0}
\def\heuristics@cea{0}
\def\heuristics@cg{0}
\def\heuristics@relaxation{0}
\def\heuristics@simplified{0}
\def\heuristics@properties{0}
\def\heuristics@expansion{0}
\def\heuristiclist{}
\def\heuristiccitelist{}
\def\heuristicstr{}
\def\heuristiccitestr{}
\setkeys{heuristics}{#1}
\ifnumcomp{\heuristics@simplified}{>}{0}{
A domain-independent heuristic function $h(s)$
returns an estimate of the cumulative cost from a state $s$ to one of the goal states (states that satisfy $G$).
}{
Given a problem $\brackets{P,A,I,G}$ and a state $s$,
a domain-independent heuristic function $h(s, \brackets{P,A,I,G})$
returns an estimate of the cumulative cost from $s$ to one of the goal states (states that satisfy $G$),
typically through a symbolic, non-statistical means including problem relaxation and abstraction.
It is often abbreviated as $h(s)$ or $h(s,G)$.
}
\ifdefempty\heuristiclist{}{
Notable \lsota functions that appear in this paper includes
$\heuristicstr$ \citep{\heuristiccitestr}.
}
\ifnumcomp{\heuristics@properties}{<}{1}{}{
  Often, the true optimal cost from a state $s$ is called
  the \emph{perfect heuristics} $h^*(s)$ \citep{helmert2008good}.
  \ifnumcomp{\heuristics@properties}{<}{2}{}{
    \emph{Admissible} heuristics are those which never overestimate $h^*$,
    i.e., $\forall s; h(s)\leq h^*(s)$.
    Optimizing algorithms like \astar \citep{hart1968formal} are
    guaranteed to find the optimal solutions with such heuristics.
    \ifnumcomp{\heuristics@expansion}{<}{1}{}{
      Moreover, \astar is the optimal expansion algorithm, i,e.,
      expands the fewest nodes among all algorithms under the same admissible $h$.
    }
    \ifnumcomp{\heuristics@properties}{<}{3}{}{
      Otherwise they are called \emph{inadmissible} heuristics,
      and are typically combined with satisficing/agile algorithms like GBFS \citep{doran1966experiments,bonet2001planning}.
      \ifnumcomp{\heuristics@properties}{<}{4}{}{
        Furthermore, heuristics that preserve the same ordering as $h^*$ are called
        \emph{perfect satisficing heuristics} $h^\leq$ \citep{kuroiwa2022biased},
        i.e., $\forall s,t; h^\leq(s)\leq h^\leq(t)\then h^*(s)\leq h^*(t)$.
        \ifnumcomp{\heuristics@expansion}{<}{1}{}{
          GBFS is the optimal expansion algorithm under $h^\leq$.
        }
        \ifnumcomp{\heuristics@properties}{<}{5}{}{
          Given a monotonic \emph{inflation} function $t:\R^{0+}\to\R^{0+}$
          s.t. $\forall x; t(x)\geq x$ and $\forall x,y; x\geq y \then t(x)\geq t(y)$,
          heuristics that preserve the same ordering as $h^*$ when inflated are called
          \emph{$t$-dominating heuristics},
          i.e., $\forall s,t; t(h(s))\leq h(t)\then h^*(s)\leq h^*(t)$.
        }
      }
    }
  }
}

\if\heuristics@relaxation1
A significant class of heuristics is called delete relaxation heuristics,
which solve a relaxed problem which does not contain delete effects,
and then returns the cost of the solution of the relaxed problem as an output.
The cost of the optimal solution of a delete relaxed planning problem from a state $s$ is
denoted by $h^+(s)$, but this is too expensive to compute in practice (NP-complete) \citep{bylander1996}.
Therefore, practical heuristics typically try to obtain its further relaxations
that can be computed in polynomial time.
\fi

\ifnumcomp{\heuristics@hmax}{>}{1}{
Max heuristics $\hmax$ \citep{bonet2001planning} is recursively defined as follows:
\begin{align}
 \hmax(s,G) = \max_{p\in G}
 \left\{
  \begin{array}{l}
   0\ \text{if}\ p\in s.\ \text{Otherwise,}\\
   \min_{\braces{a\in A\mid p\in\adde(a)}} \\
    \quad \left[\cost(a)+\ad(s, \pre(a))\right].
  \end{array}
 \right.
\end{align}
}{}

\ifnumcomp{\heuristics@ad}{>}{1}{
Additive heuristics $\ad$ \citep{bonet2001planning} is recursively defined as follows:
\begin{align}
 \ad(s,G) = \sum_{p\in G}
 \left\{
  \begin{array}{l}
   0\ \text{if}\ p\in s.\ \text{Otherwise,}\\
   \min_{\braces{a\in A\mid p\in\adde(a)}} \\
    \quad \left[\cost(a)+\ad(s, \pre(a))\right].
  \end{array}
 \right.
\end{align}
}{}

\ifnumcomp{\heuristics@ff}{>}{1}{
FF heuristics $\ff$ \citep{hoffmann01} is defined based on another heuristics $h$, such as $h=\ad$, as a subprocedure.
For each proposition $p$,
the action $a$ that adds $p$ with the minimal $\cost(a)+h(s, \pre(a))$
is conceptually ``the cheapest action that achieves a subgoal $p$'',
called the \emph{cheapest achiever} / \emph{best supporter} $\text{bs}(p,s,h)$ of $p$.
Using this, $\ff$ is defined as the sum of actions in a relaxed plan $\Pi^+$ constructed as follows:
\begin{align}
 \ff(s,G,h) &= \sum_{a\in \Pi^+(s,G,h)} \cost(a)\\
 \Pi^+(s,G,h) &= \bigcup_{p\in G}
 \left\{
  \begin{array}{l}
   \emptyset\ \text{if}\ p\in s.\ \text{Otherwise,}\\
   \braces{a} \cup \Pi^+(s,\pre(a)) \\
   \qquad \text{where}\ a=\text{bs}(p,s,h).
  \end{array}
 \right.\\
 \text{bs}(p,s,h)&=\argmin_{\braces{a\in A\mid p\in \adde(a)}} \left[\cost(a)+h(s, \pre(a))\right].
\end{align}
\ifnumcomp{\heuristics@ff}{>}{2}{
  In practice, $\ff$ can be implemented in several ways, each producing different values
  due to the tie-breaking difference in the $\argmin$ in $\text{bs}(p,s,h)$.
  \citet{kuroiwa2019case} showed that Graphplan-based implementation yields the best planner performance
  due to the combination of low-level efficiency and heuristic accuracy.
}{}
}{}

\ifnumcomp{\heuristics@gc}{>}{1}{
Goal Count heuristics $\gc$ is a simple heuristic proposed in \citep{FikesHN72}
that counts the number of propositions that are not satisfied yet.
$\brackets{\text{condition}}$ is a cronecker's delta / indicator function that returns 1 when the condition is satisfied.
\begin{align}
 \gc(s,G) &= \sum_{p\in G} \dbrackets{p\not \in s}.
\end{align}
}{}
}
\author{Masataro Asai}
\title{Bilevel MCTS for Amortized O(1) Node Selection in Classical Planning}
\begin{document}

\maketitle

\begin{abstract}
We study an efficient implementation of
Multi-Armed Bandit (MAB)-based Monte-Carlo Tree Search (MCTS) for classical planning.
One weakness of MCTS is that
it spends a significant time deciding which node to expand next.
While selecting a node from an OPEN list with $N$ nodes has
$O(1)$ runtime complexity with traditional array-based priority-queues for dense integer keys,
the tree-based OPEN list used by MCTS requires $O(\log N)$,
which roughly corresponds to the search depth $d$.
In classical planning, $d$ is arbitrarily large (e.g., $2^k-1$ in $k$-disk Tower-of-Hanoi)
and the runtime for node selection is significant,
unlike in game tree search, where the cost is negligible compared to the node evaluation (rollouts)
because $d$ is inherently limited by the game (e.g., $d\leq 361$ in Go).
To improve this bottleneck,
we propose a bilevel modification to MCTS that
runs a best-first search from each selected leaf node with an expansion budget proportional to $d$,
which achieves amortized $O(1)$ runtime for node selection, equivalent to the traditional queue-based OPEN list.
In addition, we introduce Tree Collapsing, an enhancement that reduces action selection steps
and further improves the performance.
\end{abstract}

\section{Introduction}

The exploration-exploitation trade-off has been a common research topic across AI literature,
including Reinforcement Learning, Game Tree Search, and Classical Planning.
Bandit-based Monte-Carlo Tree Search (MCTS) has been widely successful in addressing the trade-off in the first two areas,
while it is not popular in the recent Classical Planning literature.
As a result, successful methods (e.g., \citep{nakhost2009monte, xie14type, kuroiwa2022biased})
that try to address the trade-off in classical planning have commonly been
approaches that are not backed by or benefiting from the statistical tools developed in the bandit community,
even if they have their theoretical justifications.

Inspired by the initial work on applying MCTS to classical planning \citep{schulte2014balancing}
which suggested that MCTS can implement traditional best-first search by
viewing the tree as an OPEN list and modifying the backpropagation and selection rules,
recent work \citep{wissow2023scale}
empirically showed that it can outperform queue-based diversified search
as long as it uses the correct multi-armed bandit (MAB) algorithm that suits the classical planning context
(UCB1-Normal2).
However,
the paper spends much of its space on the theoretical analysis
and comparisons against simple baselines (e.g., GBFS) and other MCTS,
making it unclear whether the approach can empirically compete against \lsota planners
such as LAMA \citep{richter2011lama}.

One weakness of MCTS is that
it spends a significant amount of effort on node selection.
While the traditional priority-queue-based OPEN list has $O(1)$ runtime for \function{popmin} operation,
the tree-based OPEN list used by MCTS requires $O(D)$ runtime for recursively descending the tree,
where $D$ is the depth of the path from the root to the selected leaf.
In a perfectly balanced tree with a constant branching factor,
$D$ grows at an $O(\log N)$ rate for $N$ leaf nodes and causes a significant bottleneck.
This cost is significant in single-agent planning,
unlike in game tree search, where the depth tends to be inherently limited by the game.
For example, the maximum search depth of the Game of Go is $19\times 19=361$,
while the solution length of Tower of Hanoi with $k$ disks is exponential ($2^k-1$).

\begin{figure}[tb]
 \includegraphics[width=\linewidth]{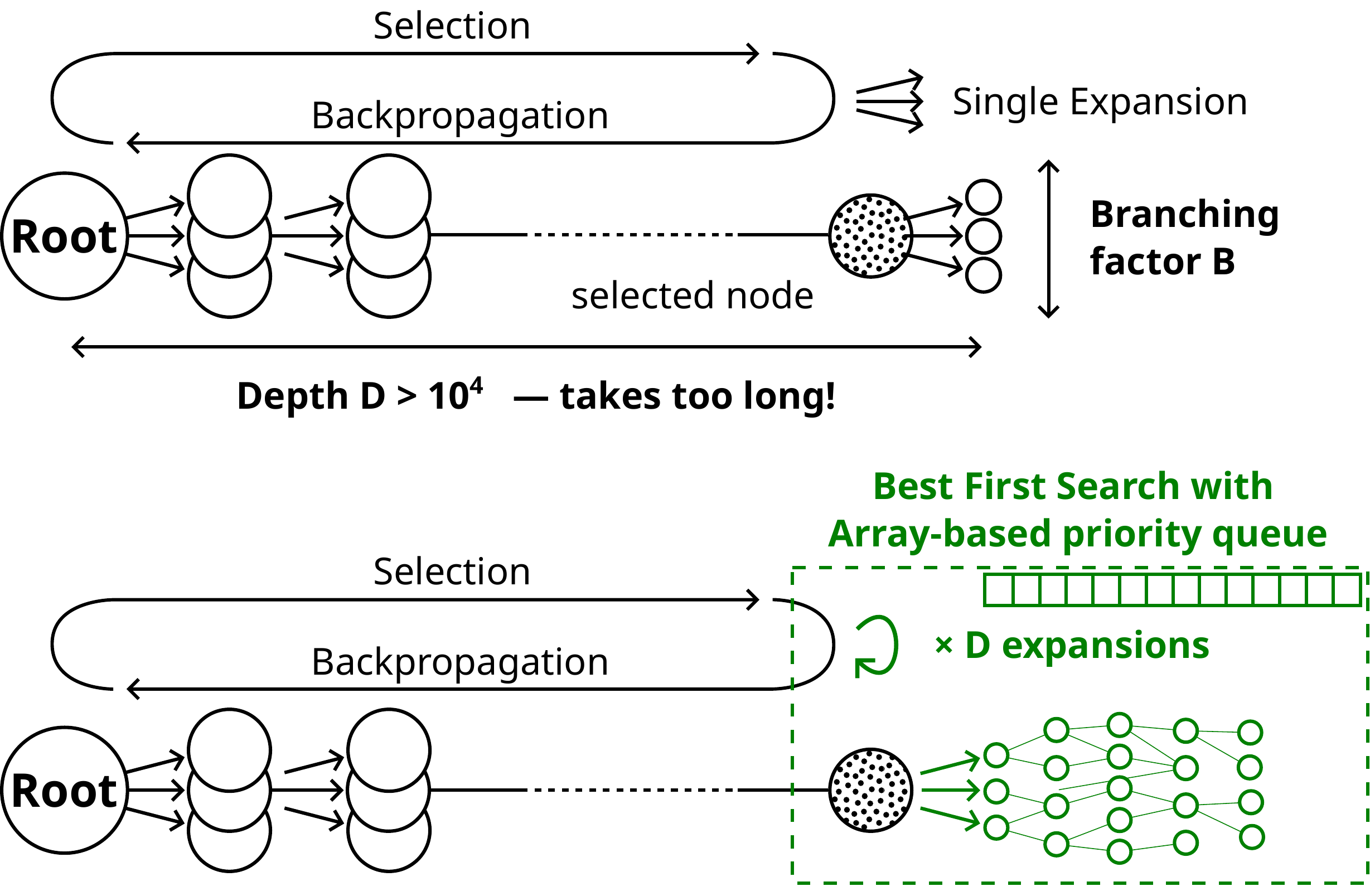}
 \caption{MCTS vs. \green{Bilevel MCTS}.}
 \label{fig:bilevel}
\end{figure}

To improve this behavior,
we propose \green{Bilevel MCTS} (\refig{fig:bilevel}) that
runs a best-first search from each selected leaf with an expansion budget
proportional to the depth of the node,
achieving an amortized $O(1)$ runtime for selection per node.
To further improve the performance,
we propose an enhancement to MCTS called
\emph{\cyan{Tree Collapsing}},
which reduces the depth of the tree by letting the grandparents
adopt their grandchildren as long as the number of children is below a threshold $\theta$,
thereby bypassing action selection steps.

Finally, to match the state of the art performance,
we combine UCB1-Normal2 MAB,
\emph{novelty metric} \citep{lipovetzky2017bfws},
\emph{alternation queue} \citep{RogerH10},
and \emph{boosted preferred operators} \citep{RichterH2009}
based on LAMA's configuration \citep{richter2011lama}.
The resulting configuration named \textbf{\coolname}
outperformed LAMA,
SM-Type-LAMA \cite{kuroiwa2022biased},
and more recent advanced planners such as
NOLAN \cite{correa2025nolan}, DecStar \cite{gnad2018decstar}, and ApxNovelty \cite{singh2021approximate}.

This paper is organized in a way that alternates between
progressively introducing improvements to the base algorithm and then performing a small experiment.
All experiments are conducted on a compute cluster with Intel(R) Xeon(R) 6258R CPUs @ 2.70GHz
in the Agile setting, i.e.,
under a 5-minute time limit and the 8GB memory limit
(including the PDDL-to-SAS+ translation/grounding time), and with unit-cost conversion.
Each algorithm is implemented in Fast Downward \cite[FD]{Helmert2006} unless otherwise noted,
and is evaluated on benchmark instances in IPC2018 and IPC2023.
Randomized algorithms are evaluated with 5 seeds.
Our code is going to be published in \url{github.com/guicho271828/downward-mcts}.

\section{Preliminary}

\strips[transition-only,agile]

\heuristics[simplified,ff,cea,cg]

\subsection{Forward Heuristic Best-First Search}
\label{sec:mcts}

Classical planning problems are typically solved as a path finding problem
defined over a state space graph induced by the transition rules,
and the current dominant approach is based on \emph{forward search}.
Forward search maintains a set of search nodes called an \emph{OPEN list}.
They repeatedly
(1) (\emph{selection}) select a node from OPEN,
(2) (\emph{expansion}) generate its successor nodes,
(3) (\emph{evaluation}) evaluate the successor nodes, and
(4) (\emph{queueing}) reinsert them into OPEN.
Termination typically occurs when a node is expanded that satisfies a goal condition,
but a satisficing/agile algorithm can perform \emph{early goal detection},
which immediately checks whether any successor node generated in step (2) satisfies the goal condition.
Since this paper focuses on agile search, we use early goal detection for all algorithms.

Within forward search,
forward \emph{best-first} search defines a particular ordering in OPEN
by defining \emph{node evaluation criteria} (NEC) $f$ for selecting the best node in each iteration.
Let us denote a node by $n$ and the state represented by $n$ as $s_n$.
As NEC,
Dijkstra's search \citep{dijkstra1959note} uses $f_{\mathrm{Dijkstra}}(n)=g(n)$ ($g$-value), the minimum cost from the initial state $I$ to the state $s_n$ found so far.
\astar \citep{hart1968formal} uses $f_{\astar}(n)=g(n)+h(s_n)$, the sum of $g$-value and the value returned by a heuristic function $h$ ($h$-value).
Greedy Best First Search \citep{doran1966experiments,bonet2001planning} uses $f_{\gbfs}(n)=h(s_n)$.
Forward best-first search that uses $h$ is called forward \emph{heuristic} best-first search.

MCTS is a class of forward heuristic best-first search
that represents OPEN as the leaves of a tree.
We call the tree a \emph{\topen}.
Our MCTS is based on the description in \citep{keller2013trial,schulte2014balancing}.
Overall, MCTS works in the same manner as other best-first search with a few key differences.
(1) (\emph{selection}) To select a node from the \topen,
it recursively selects an action on each branch of the tree, start from the root, using the NEC
to select a successor node,
descending until reaching a leaf node.
(Sometimes the action selection rule is also called a \emph{tree policy}.)
At the leaf, it
(2) (\emph{expansion}) generates successor nodes,
(3) (\emph{evaluation}) evaluates the new successor nodes,
(4) (\emph{queueing}) attaches them to the leaf, and
\emph{backpropagates} (or \emph{backs-up}) the information to the leaf's ancestors, all the way up to the root.

The evaluation obtains a heuristic value $h(s_n)$ of a leaf node $n$.
In adversarial games like Backgammon or Go, it is obtained either by
(1) hand-crafted heuristics,
(2) \emph{playouts} (or \emph{rollouts})
where the behaviors of both players are simulated
by uniformly random actions (\emph{default policy}) until the game terminates,
or (3) a hybrid \emph{truncated simulation},
which returns a hand-crafted heuristic after performing a short simulation \citep{gelly2011monte}.
In recent work, the default policy is replaced by a learned policy \citep{alphago}.

Trial-based Heuristic Tree Search \citep[THTS]{keller2013trial,schulte2014balancing},
a MCTS for classical planning,
is based on two key observations:
(1) the rollout is unlikely to terminate in classical planning due to sparse goals,
unlike adversarial games, like Go, which are guaranteed to finish in a limited number of steps with a clear outcome (win/loss); and
(2) a \topen can efficiently handle a large number of node reordering due to the updates to NEC, and
thus is more flexible than a priority queue-based OPEN list, and
can readily implement standard search algorithms such as \astar and GBFS without significant performance penalty.
In this paper, we use THTS and MCTS interchangeably.

Finally, Upper Confidence Bound applied to trees \citep[UCT]{kocsis2006bandit}
is a MCTS that uses UCB1 \citep{auer2002finite} Multi-Armed Bandit \cite[MAB]{thompson1933likelihood,robbins1952some,bush1953stochastic} algorithm for action selection and became widely popular in adversarial games.
\citet{schulte2014balancing} proposed several variants of UCT including GreedyUCT (\guct),
which differs from UCT in that
the NEC assigned to the node is simply its heuristic value $h(s_n)$ just like in GBFS,
rather than $g(n)+h(s_n)$ like in \astar.
This paper
only discusses the greedy variants
due to our focus on the agile planning.

\citet{wissow2023scale} demonstrated that, for a MAB-based MCTS to work on a particular task,
the MAB algorithm must be chosen with careful theoretical considerations.
They highlighted that the rewards based on heuristic values have no upper bound,
unlike adversarial games which typically provide binary win (1) / loss (0) rewards,
thus they do not meet the theoretical requirement to be used with the commonly used UCB1 MAB algorithm:
To use UCB1, the reward distributions of all arms must have the common known finite support $[0,c]$.
As a result, combining heuristics-based rewards with UCB1 means
the algorithm does not guarantee any of UCB1's known theoretical properties,
such as a logarithmic regret bound, leading to poor performance.

They then demonstrated that MAB algorithms that assume Gaussian reward distributions
can handle heuristics-based rewards properly due to the assumption of infinite support $\R$.
They used two Gaussian bandits, UCB1-Normal \citep{auer2002finite} and UCB1-Normal2 \citep{wissow2023scale},
where the latter outperformed the former as well as other finite-support bandits.
UCB1-Normal2 is a frequentist version of Bayes-UCB2 \citep{tesauro2010bayesian}.
We refer to the MCTS that uses UCB1-Normal2 as GreedyUCT-Normal2 (GUCTN2).
\begin{align}
 \text{UCB1}_i &\textstyle= {\hat{\mu}_i + c\sqrt{{(2\log T)}/{t_i}}}\\
 \text{UCB1-Normal2}_i &\textstyle= \hat{\mu}_i + \hat{\sigma}_i \sqrt{ 2\log T }
 \label{eq:ucb-normal2}
\end{align}

\refalgo{alg:mcts} shows the pseudocode of MCTS adjusted for graph search, taken from \citep{schulte2014balancing}.
Since efficient graph search algorithms must avoid visiting the same state multiple times,
\refalgo{alg:mcts} marks certain nodes as \emph{locked}, and excludes them from the selection candidates.
In an offline search,
a node is locked
either (1) when a node is a dead-end that will never reach a goal
(detected by having no applicable actions, by a heuristic function, or other facilities),
(2) when there is a node with the same state in the search tree with a smaller g-value, 
or
(3) when all of its children are locked.

\refalgo{alg:mcts} differs slightly from existing work by omitting \emph{tree grafting},
which involves moving a subtree to a new parent
when it is \emph{reopened} (reached via a path with a smaller $g$-value).
While it could help satisficing search,
it is generally detrimental in agile search.
Appendix \refsec{sec:tree-grafting} contains additional evaluations with this feature.

\begin{algorithm}[tbh]
 \begin{algorithmic}[1]
  \WHILE{True}
  \STATE Parent node $p\gets r$ \label{line:mcts-selection-begin}
  \WHILE[Selection]{not leaf $p$}
  \STATE $p\gets \argmin_{n\in S(p)} f(n)$
  \ENDWHILE \label{line:mcts-selection-end}
  \FOR[Expansion]{a child node $n\in S(p)$} \label{line:mcts-expansion-for}
  \STATE \textbf{return} $n$ \textbf{if} $n$ is a goal. \COMMENT{Early goal detection} \label{line:mcts-expansion-begin}
  \STATE \textbf{continue} to \refline{line:mcts-expansion-for} \textbf{if} state $s_n$ is not new
  \STATE compute $h(s_{n})$ \COMMENT{Evaluation}
  \label{line:mcts-expansion-end}
  \ENDFOR
  \STATE $B\gets B\cup\{p\}$
  \WHILE[Backpropagation]{\textbf{not} $B.\function{empty}()$} \label{line:mcts-backprop-begin}
  \STATE $n\gets B.\function{popmax}()$ \COMMENT{largest $g$ = depth first}
  \STATE Update $n$'s statistics and its lock status from $S(n)$
  \IF{$n$'s statistics or its lock status has changed}
  \STATE $B\gets B\cup\{n.\function{parent}\}$
  \ENDIF
  \label{line:mcts-backprop-end}
  \ENDWHILE
  \ENDWHILE
 \end{algorithmic}
 \caption{
 High-level general MCTS.
 \textbf{Input}:
 Root node $r$,
 successor function $S$,
 NEC $f$,
 heuristic function $h$,
 ordered set $B$ sorted by $g$.
 Initialize $\forall n; g(n)\gets \infty$, $B\gets\emptyset$.
}
 \label{alg:mcts}
\end{algorithm}

\section{\green{Bilevel MCTS}}

\begin{algorithm}[tb]
 \caption{Bilevel MCTS. (\green{The difference from \refalg{alg:mcts}})}
 \label{alg:bilevel-mcts}
 \begin{algorithmic}[1]
  \WHILE{True}
  \STATE Parent node $p\gets r$, \green{Budget $b\gets 0$}
  \WHILE[Selection]{not leaf $p$}
  \STATE $p\gets \argmin_{n\in S(p)} f(n)$, \green{$b\gets b+1$ \label{line:budget}}
  \ENDWHILE
  \STATE \green{Priority queue sorted by $f$: $Q \gets \braces{p}$} \label{line:Q}
  \WHILE[BFS]{\green{$b>0$ \textbf{and not} $Q.\function{empty}()$}}
  \STATE \green{Parent node $p\gets Q.\function{popmin}()$}, \green{$b\gets b-1$}
  \FOR[Expansion]{a child node $n\in S(p)$}
  \STATE [identical to \reflines{line:mcts-expansion-begin}{line:mcts-expansion-end} in \refalg{alg:mcts}]
  \STATE \green{$Q \gets Q\cup \braces{n}$}
  \ENDFOR
  \STATE \cyan{$p\gets \function{collapse}(p, \theta)$} \textbf{if} $\theta$ is optionally given \label{line:collapse}
  \STATE $B\gets B\cup\{p\}$
  \ENDWHILE
  \STATE [identical to \reflines{line:mcts-backprop-begin}{line:mcts-backprop-end} in \refalg{alg:mcts}]
  \ENDWHILE
 \end{algorithmic}
\end{algorithm}

Theoretical regret bounds of MAB algorithms are often functions of the number of total pulls $T$,
e.g.,
the regret of an \emph{asymptotically optimal} MAB is logarithmically upper bounded by $O(\log T)$.
In the application to the heuristic search,
$T$ corresponds to the total number of node evaluations in the current subtree.
These theoretical guarantees \emph{could} make MCTS efficient if the performance is measured by the number of node evaluations,
but in reality it is a moot point because node evaluations are not the sole bottleneck in a search algorithm.
Namely,
under a simplified assumption,
the runtime of \emph{(1) selection} step in a tree-based OPEN list
grows logarithmically to the size of OPEN.
\begin{theo}
 Assume that the search space forms a tree with a constant branching factor $B$, and
 we have a tree-based OPEN list of depth $D$, containing $N = B^D$ leaves.
 The runtime of \emph{selection} step is $BD = O(\log N)$.
 \label{theo:action-selection-complexity}
\end{theo}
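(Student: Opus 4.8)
The plan is to decompose the \emph{selection} step into its constituent operations and count them. Recall from \refalgo{alg:mcts} that selection starts at the root $r$ and repeatedly replaces the current node $p$ by $\argmin_{n \in S(p)} f(n)$ until $p$ is a leaf. So the cost of one selection is the sum, over the nodes $p$ visited on the root-to-leaf path, of the cost of evaluating $\argmin_{n \in S(p)} f(n)$, and the proof is just a matter of bounding the two factors: the per-node cost of the $\argmin$, and the length of the path.

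First I would argue that each $\argmin$ costs $\Theta(B)$. The children set $S(p)$ has exactly $B$ elements by the constant-branching-factor assumption, and the NEC value $f(n)$ of each child is already stored at the node — it is precisely what the \emph{backpropagation} loop (\reflines{line:mcts-backprop-begin}{line:mcts-backprop-end}) maintains. Hence computing the minimizer is a linear scan over $B$ cached scalars, i.e.\ $B-1$ comparisons. Next I would bound the number of nodes on the path: because the tree is perfectly balanced with all $N$ leaves at depth $D$, the root-to-leaf descent visits exactly $D$ internal nodes (levels $0,\dots,D-1$) before reaching a leaf at level $D$. Summing the per-level cost gives $\sum_{i=0}^{D-1}\Theta(B)=\Theta(BD)$ work per selection.

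Finally I would convert to a bound in $N$. From $N = B^{D}$ we get $D = \log_B N = \log N / \log B$, so $BD = (B/\log B)\log N$; since $B$ is a constant this is $O(\log N)$, which is the claimed bound.

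The main obstacle — such as it is — is not the arithmetic but making the accounting conventions explicit: (i) that $f(n)$ is cached at each node and not recomputed during descent (otherwise each visited node would incur an extra, possibly expensive, NEC evaluation), and (ii) that ``runtime of the selection step'' refers to the tree-traversal bookkeeping and not to any heuristic evaluation, which in \refalgo{alg:mcts} happens in the separate \emph{expansion}/\emph{evaluation} phase. Once these modeling assumptions are pinned down, the informal ``descend the tree'' intuition becomes the precise statement $BD = O(\log N)$ and the count above goes through immediately.
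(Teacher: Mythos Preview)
Your proof is correct and matches the paper's reasoning. In fact the paper does not give an explicit proof of this theorem at all---it states the result as essentially self-evident from the structure of the \emph{selection} loop in \refalgo{alg:mcts}---so your decomposition (a root-to-leaf path of length $D$, each step costing a $\Theta(B)$ argmin over cached $f$-values, then $D=\log_B N$) is exactly the implicit argument the paper relies on, just spelled out carefully.
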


To demonstrate this,
in the scatter plot in \refig{fig:evalsec-bilevel} (left),
we compare
the number of node evaluations per second using $\ff$ heuristics
between GBFS and GUCTN2.
The latter sometimes has more than three orders of magnitude slower evaluations compared to the former,
indicating that
action selections in MCTS can slow down the search significantly.
(See appendix \refig{fig:evalsec-bilevel-more} for larger plots with more heuristics.)

Further, to empirically validate that
our cost model in \reftheo{theo:action-selection-complexity} under the simplified assumption
applies to the actual instances,
we visualized the correlation between the node processing speed and
the average depth of the evaluated nodes on instances successfully solved by GUCTN2 with $\ff$.
\refig{fig:evalsec-correlation} shows an inverse correlation between the search depth
and the number of node evaluations per second.
(See appendix \refig{fig:evalsec-correlation-more} for larger plots with more heuristics.)

\begin{figure*}[tb]
 \centering
 \includegraphics[height=0.22\linewidth]{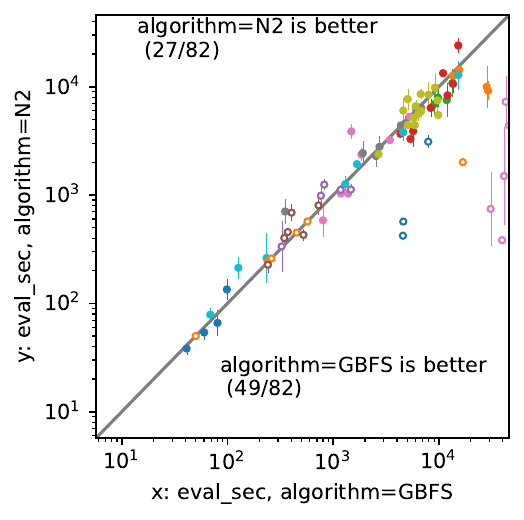}
 \includegraphics[height=0.2\linewidth]{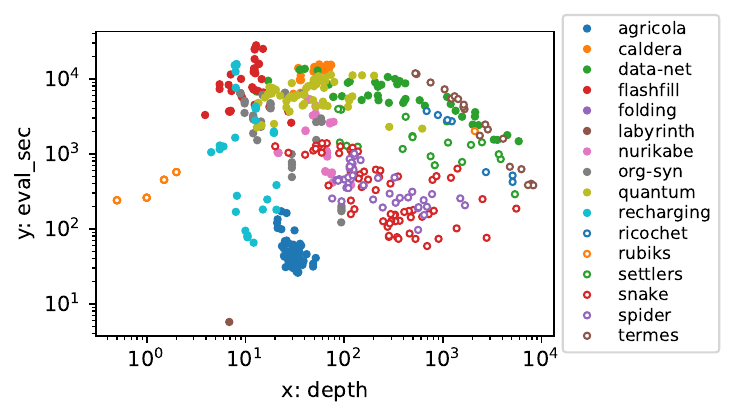}
 \includegraphics[height=0.2\linewidth]{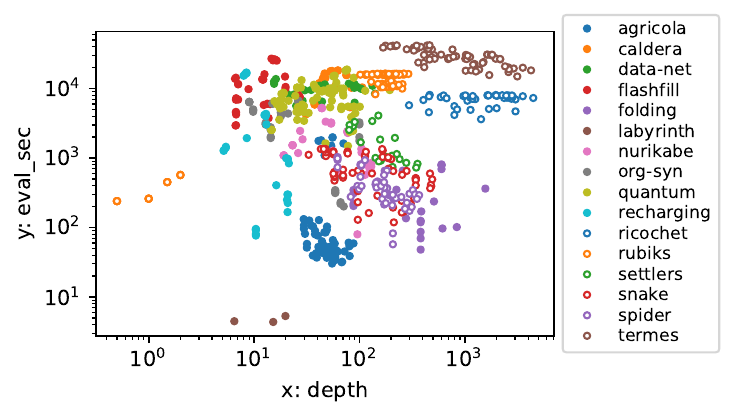}
 \caption{
 \textbf{Left:}
 Comparing the number of node evaluations per second
 on IPC instances solved by both GBFS ($x$-axis) vs. GUCTN2 ($y$-axis) within the limit, using $\ff$ heuristics.
 The points below the diagonal indicate that GUCTN2 has significantly slower node evaluations.
 \textbf{Middle, Right:}
 Log-log plots comparing the number of node evaluations per second ($y$-axis)
 versus the average depth of the nodes evaluated during the search ($x$-axis)
 for $\ff$.
 GUCTN2 (middle) shows that the search becomes slower as it goes deeper.
 \green{Bilevel GUCTN2} (right) explores deeper yet shows less degradation in the node/sec.
 The effect is pronounced in \brown{\textbf{termes}}, \darkblue{\textbf{ricochet}}, \orange{\textbf{rubiks}}, \green{\textbf{data-network}}.
 }
 \label{fig:evalsec-bilevel}
 \label{fig:evalsec-correlation}
\end{figure*}

Based on these theoretical and empirical observations, we propose \emph{Bilevel MCTS} (\refalgo{alg:bilevel-mcts}),
a modification to MCTS that performs a budgeted best-first search (BFS) from the leaf node reached by the action selection.
It uses a priority queue $Q$ (\refline{line:Q}) for the BFS,
and the runtime complexity of its \function{popmin} operation affects the runtime of \emph{selection} in MCTS.
It has the following theoretical characteristics (Proofs in the appendix \refsec{theo:action-selection-complexity-bilevel-proofs}):
\begin{theo}
 \label{theo:action-selection-complexity-bilevel-heap}
 In the Bilevel MCTS,
 the amortized runtime of each \emph{selection} is $O(\log \log N)$ if $Q$ is a heap.
\end{theo}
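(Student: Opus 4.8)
The plan is to bound, within one iteration of the outer \textbf{while} loop of \refalgo{alg:bilevel-mcts}, the combined cost of (i) the single tree descent and (ii) the budgeted BFS that follows it, and then to divide by the number of node expansions the BFS performs — this division is what turns the $O(\log N)$ descent of \reftheo{theo:action-selection-complexity} into an amortized bound. First I would pin down the budget: each step of the Selection loop does an \(\argmin\) over at most \(B\) children and increments \(b\) (line \refline{line:budget}), so reaching a leaf at depth \(D\) costs \(O(BD)\) (this is exactly \reftheo{theo:action-selection-complexity}) and leaves \(b = D\). The BFS then starts from \(Q \gets \{p\}\) (line \refline{line:Q}) and on each step does one \function{popmin}, up to \(B\) \function{insert}s, and decrements \(b\). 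Under the simplified model of \reftheo{theo:action-selection-complexity} (a tree, hence no duplicate states, so all \(B\) successors of an expanded node are genuinely new), \(|Q|\) grows by \(B-1 \ge 1\) per step, so \(Q\) never empties before the budget runs out; the BFS therefore performs exactly \(D\) expansions (unless a goal is found en route, which ends the whole search). The key structural observation is that \(Q\) is a \emph{local} priority queue, rebuilt from scratch each iteration, with \(|Q| \le 1 + D(B-1) = O(BD)\) throughout — its size is governed by the depth \(D\), not by the global OPEN size \(N\).

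Then I would simply add up the costs. With \(Q\) a binary heap, each \function{popmin}/\function{insert} is \(O(\log|Q|) = O(\log(BD))\), so the \(D\) expansions contribute \(O(BD\log(BD))\) in heap operations, to which the descent adds \(O(BD)\) and the per-expansion bookkeeping (successor generation, evaluation) a lower-order \(O(BD)\). One iteration thus costs \(O(BD\log(BD))\) and yields \(D\) node expansions, for an amortized cost of \(O(B\log(BD)) = O(\log(BD))\) per selection once \(B\) is treated as a constant. Finally, invoking \(N = B^D\) from \reftheo{theo:action-selection-complexity} gives \(D = \log_B N = O(\log N)\), hence \(\log(BD) = O(\log\log N)\), which is the claim. (The same bookkeeping recovers the companion \(O(1)\) result: replacing the heap by a bucket array on the bounded, dense-integer \(h\)-values makes every \(Q\) operation \(O(1)\), leaving only the \(O(B)\) amortized descent share per expansion.)

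The step I expect to be the main obstacle is justifying that each \(O(BD)\) descent is genuinely "paid for" by \(\Theta(D)\) subsequent expansions — i.e., that the BFS does not terminate early. On a real state-space graph \(Q\) can shrink when successors are discarded as non-new or as dead ends, so fewer than \(b\) expansions may occur and the descent cost amortizes over a smaller denominator. I would handle this by staying inside the tree-with-constant-branching-factor idealization of \reftheo{theo:action-selection-complexity}, where the shrinking cannot happen, and separately remark that whenever \(Q\) does empty early the selected leaf (and possibly some ancestors) becomes \emph{locked} and is permanently removed from the selection candidates, so the leftover descent cost can be charged to those one-time locking events. A secondary subtlety is that expanding one leaf's subtree by a BFS breaks perfect balance, so "\(N = B^D\)" and "all leaves at depth \(D\)" should be read — as throughout \reftheo{theo:action-selection-complexity} and the present statement — as the standing idealization, with \(N\) and \(D\) interpreted as the current size and depth of the \topen.
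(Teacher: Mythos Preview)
Your proposal is correct and follows essentially the same argument as the paper: bound the descent by $O(BD)$, bound $|Q|$ by $O(BD)$ so each heap operation is $O(\log BD)$, total the work over the $D$ BFS expansions, divide by $D$, and substitute $D=O(\log N)$. If anything you are slightly more careful than the paper, which only tallies the \function{popmin} costs (summing $\sum_{d=1}^{D-1}\log Bd$) and silently absorbs the inserts, and which does not discuss the early-termination or balance caveats you raise; none of that changes the asymptotics.
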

Although this is already an improvement over the original $O(\log N)$,
we are inspired by \citet{burns2012implementing}
to improve it further
by leveraging the unit cost structure of agile classical planning
with array-based priority queues \citep{dial1969shortest}
whose \function{popmin} have an $O(1)$ runtime.
\begin{theo}
 \label{theo:action-selection-complexity-bilevel-bucket}
 In the Bilevel MCTS,
 the amortized runtime of each \emph{selection} is $O(1)$ if $Q$ is an array-based priority queue.
\end{theo}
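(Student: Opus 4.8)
The plan is to keep the skeleton of the proof of \reftheo{theo:action-selection-complexity-bilevel-heap} intact and to change only the bound charged to operations on the priority queue $Q$ of \refalg{alg:bilevel-mcts}. The one structural fact that drives everything is this: a single descent of the tree policy that reaches a leaf at depth $b$ costs $\Theta(Bb)$ --- an $\argmin$ over the $B$ children at each of $b$ levels, exactly as in \reftheo{theo:action-selection-complexity} --- and in \refalg{alg:bilevel-mcts} this descent is immediately followed by a budgeted best-first search that performs exactly $b$ node expansions, since the budget equals $b$ (\refline{line:budget}) and, under the branching assumption of \reftheo{theo:action-selection-complexity}, every generated state is fresh, so $|Q|$ is non-decreasing and it is the budget, not an empty queue, that terminates the inner loop. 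Thus each outer iteration pairs a $\Theta(Bb)$ descent with $b$ expansions.

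First I would make the amortization precise. Over a whole run with $E$ total node expansions, the descents contribute $\sum_i \Theta(Bb_i) = \Theta(B\sum_i b_i) = \Theta(BE)$, i.e.\ $O(1)$ per expansion, with the degenerate first outer iteration (where the root is still a leaf and $b=0$) handled by a one-off preliminary expansion of the root. Next I would account for the inner search itself: each of its iterations does one $Q.\function{popmin}$, then expands one node generating $B$ successors (each costing $O(1)$ for the goal test, $O(1)$ for the hash-based duplicate/novelty test, plus a heuristic evaluation, which is deliberately excluded from the node-selection accounting), then $B$ calls to $Q.\function{insert}$. So the queue-management overhead attributable to one expansion is one \function{popmin} plus $B$ \function{insert}s, and the optional \function{collapse} step (\refline{line:collapse}) only deletes interior nodes and cannot raise any of these costs.

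The key step --- the only place this differs from the proof of \reftheo{theo:action-selection-complexity-bilevel-heap} --- is to argue that on $Q$ both \function{popmin} and \function{insert} run in $O(1)$ rather than the $O(\log|Q|)\le O(\log(bB)) = O(\log\log N)$ of a binary heap. Here I invoke the array-based (bucket) priority queue of \citet{dial1969shortest} as used by \citet{burns2012implementing}: in the agile setting with unit-cost conversion the NEC $f$ takes values in a bounded range of non-negative integers (a heuristic value such as $\ff$, $\ce$ or $\cg$, a novelty count, or their lexicographic pair), so each node maps to a bucket index, \function{insert} appends to a bucket in $O(1)$, and \function{popmin} returns an element of the least non-empty bucket in $O(1)$ amortized --- the ``dense integer keys'' regime already cited in the introduction. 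Substituting, one expansion costs $O(1)$ (amortized descent) $+\ O(1)$ ($\function{popmin}$) $+\ O(B)$ (\function{insert}s) $= O(1)$, which is the claim; specializing to the balanced-tree model $b = D = \log_B N$ then makes the contrast with the $O(\log\log N)$ heap bound explicit.

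I expect the main obstacle to be tightening that last step. A bucket queue's \function{popmin} is constant-time per call only when the running minimum key is non-decreasing (as in Dijkstra) or when one is willing to amortize an $O(f_{\max})$ scan for the next non-empty bucket; a greedy NEC's minimum $h$-value is not monotone, so the honest statement is ``$O(1)$ amortized under the assumption $f_{\max} = O(1)$'', i.e.\ treating the bounded heuristic range as a constant --- precisely the modeling convention already made for ``dense integer keys'' and in \citet{burns2012implementing}. I would state this dependence openly rather than bury it. The remaining care-points --- a descent with $b=0$, or fewer than $B$ fresh children so that $Q$ shrinks before the budget is spent --- are routine: the former is the one-off root expansion above, and the latter only shortens the inner search, which makes the per-expansion amortization of the descent more favorable, never less, and is in any case ruled out by the fresh-state assumption of \reftheo{theo:action-selection-complexity}.
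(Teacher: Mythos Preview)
Your proposal is correct and follows the same amortization as the paper: charge the $\Theta(Bb)$ descent across the $b$ subsequent BFS expansions, each costing $O(1)$ for \function{popmin} on a bucket queue, to get $O(1)$ per expansion. The paper's own proof is a single line --- total cost $BD + (D-1)$ over $D$ expansions gives $(BD+D-1)/D = O(1)$ --- and omits the insert cost, the $b=0$ edge case, and the $f_{\max}=O(1)$ caveat you raise, so your version is strictly more careful than the original.
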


Our bilevel modification \emph{sacrifices search efficiency to gain low-level efficiency},
as it deviates from the policy suggested by the bandit.
Therefore,
we must empirically verify that the trade-off pays off in finding the goal faster.

We compared \green{Bilevel GUCTN2} with GUCTN2, as well as baseline GBFS
and a \lsota diversified search algorithm Softmin-type(h) \citep{kuroiwa2022biased}
by measuring the number of instances solved within the time limit (\# solved)
and the Agile IPC score $\sum_i \min \braces{1,1-\nicefrac{\log t_i}{\log 300}}$ where $t_i>0$ is the runtime for solving instance $i$ in seconds.
We used $\ff$, $\cg$, and $\ce$ heuristics for this evaluation
in order to demonstrate the heuristic independence.
\reftbl{tbl:base-table} shows that
\green{Bilevel GUCTN2} (base) significantly outperforms the standard GUCTN2 (base) and GBFS.
We also observed that
GUCTN2 struggles in IPC23,
which \green{Bilevel GUCTN2} improves significantly.
As a result, \green{Bilevel GUCTN2} outperforms Softmin
in the IPC18+23 total \# solved and IPC score in $\ce$ and $\ff$.
\green{Bilevel GUCTN2} did not outperform Softmin
in a relatively cheap $\cg$ heuristic, which still emphasizes the remaining bottleneck in MCTS,
but it narrowed the gap significantly compared to GUCTN2.
The detailed domain-wise breakdown is available in the Appendix \refsec{sec:eager-domainwise}.
\refig{fig:evalsec-correlation} (right) also shows that
\green{Bilevel GUCTN2} significantly improves the node processing speed over GUCTN2,
especially in a deeper search.

This tradeoff is surprising
because the majority of its search is
performed by the BFS when the search is deep.
In other words,
with a strong exploration policy (UCB1-Normal2),
such a small amount of exploration is often sufficient
to gain significant performance improvements.

\begin{table*}[tb]
\centering
 \begin{adjustbox}{max width=\linewidth}
\setlength{\tabcolsep}{0.3em}
\begin{tabular}{r|r|rr|rrrrrrr|rrrrrrr|r|rrrr}
\toprule
 &      & GBFS & Soft & base & \multicolumn{6}{c|}{GUCTN2} & base & \multicolumn{7}{c|}{\green{Bilevel GUCTN2}} & \multicolumn{4}{c}{\magenta{Fixed Budget Bilevel}}\\
 & $h$, year &      & min
 & \cyan{$\theta=0$} & \cyan{10} & \cyan{20} & \cyan{40} & \cyan{80} & \cyan{160} & \cyan{320}
 & \cyan{$\theta=0$} & \cyan{10} & \cyan{20} & \cyan{40} & \cyan{80} & \cyan{160} & \cyan{320} & \orange{DTC}
 & $b=10$ & 100 & 1000 & 10000
 \\
\midrule\multirow{7}{*}{\rotatebox{90}{\# solved}}
 & $\ce$, \textit{ipc18} & 40 & 48.4 & 50.2 & 53.4 & 51.4 & 50 & 51.2 & 52.6 & 50 & 53.8 & 56.4 & \textbf{61.4} & 58.6 & 58.8 & 57.8 & 58.2 & \textbf{61} & 57.6 & 51.8 & 49 & 46.6 \\
 & $\ce$, \textit{ipc23} & 30 & 34.2 & 25.8 & 26.6 & 27 & 27.2 & 25.4 & 26.4 & 26.4 & 30.8 & 32 & 32.2 & 34.2 & 33 & 34.4 & \textbf{34.8} & 33.4 & 32.4 & \textbf{36.8} & 33.8 & 33.6 \\
 & $\cg$, \textit{ipc18} & 35 & 59.2 & 42.4 & 43.2 & 44.6 & 44.4 & 45.6 & 41.4 & 37.8 & 56.2 & 58.4 & 59.6 & \textbf{60.2} & \textbf{59.8} & 56.2 & 57.2 & 57.6 & 51.6 & 55.8 & 48.6 & 48.8 \\
 & $\cg$, \textit{ipc23} & 36 & 39.4 & 25 & 25.4 & 27.2 & 26.8 & 28 & 26.8 & 27.6 & 41.6 & 41.6 & 42 & \textbf{45.8} & 45.4 & 43.4 & 42 & \textbf{46.4} & 34.2 & 42.2 & 44 & 42.2 \\
 & $\ff$, \textit{ipc18} & 60 & 80.2 & 79.8 & 82 & 79.8 & 82.8 & 77.8 & 81.4 & 79.6 & 87.4 & 87.2 & 88.6 & 90.4 & \textbf{90.6} & \textbf{91.4} & 87 & 90.2 & 85.8 & 83.4 & 82.2 & 72.8 \\
 & $\ff$, \textit{ipc23} & 51 & 55.4 & 23.8 & 24.6 & 27.2 & 27.2 & 25.8 & 24.8 & 25.6 & 58.2 & 56.8 & 57.8 & 58 & 56.2 & 55 & 55 & 58.4 & 45.4 & 56.8 & \textbf{60} & \textbf{58.6} \\
\cmidrule{2-23} & \textit{total} & 252 & 316.8 & 247 & 255.2 & 257.2 & 258.4 & 253.8 & 253.4 & 247 & 328 & 332.4 & 341.6 & \textbf{347.2} & 343.8 & 338.2 & 334.2 & \textbf{347} & 307 & 326.8 & 317.6 & 302.6 \\
\midrule\multirow{7}{*}{\rotatebox{90}{Agile IPC Score}}
 & $\ce$, \textit{ipc18} & 17.7 & 20.6 & 23.8 & 23.4 & \textbf{24.5} & 23.8 & 23.7 & 24.1 & 23.4 & 22.4 & 23.4 & \textbf{25.2} & 23.6 & 23.9 & 23.3 & 23.4 & 24.4 & 23.1 & 22.1 & 20.8 & 18.8 \\
 & $\ce$, \textit{ipc23} & 23.1 & 24.1 & 21.5 & 21.5 & 22.3 & 22.3 & 20.9 & 21.3 & 20.8 & 23.8 & 24.3 & 24.3 & \textbf{25.3} & 23.9 & 25.2 & 25.0 & 24.2 & 25.0 & \textbf{26.3} & 24.8 & 23.7 \\
 & $\cg$, \textit{ipc18} & 18.4 & 27.2 & 19.9 & 19.7 & 21.1 & 21.6 & 21.5 & 20.4 & 19.6 & 26.1 & 27.6 & 28.6 & \textbf{30.0} & \textbf{28.7} & 27.7 & 26.8 & 28.0 & 23.9 & 25.1 & 23.3 & 21.8 \\
 & $\cg$, \textit{ipc23} & 23.6 & 24.8 & 16.4 & 16.9 & 18.6 & 18.4 & 18.0 & 17.8 & 19.0 & 25.5 & 23.7 & 24.9 & \textbf{25.9} & 25.7 & 25.3 & 22.9 & \textbf{26.5} & 22.0 & 24.4 & 24.0 & 24.6 \\
 & $\ff$, \textit{ipc18} & 30.4 & 37.7 & 40.4 & 41.4 & 41.7 & 41.1 & 40.1 & 39.8 & 38.0 & 43.0 & 44.2 & 44.6 & 44.9 & \textbf{45.7} & \textbf{46.4} & 43.9 & 45.5 & 44.3 & 43.8 & 39.5 & 35.6 \\
 & $\ff$, \textit{ipc23} & 31.0 & \textbf{35.5} & 19.0 & 19.1 & 21.5 & 20.6 & 19.9 & 20.0 & 20.6 & 32.9 & 32.7 & 33.8 & \textbf{34.4} & 34.4 & 32.2 & 32.0 & 33.7 & 27.3 & 34.0 & 32.7 & 33.4 \\
\cmidrule{2-23} & \textit{total} & 144.2 & 169.9 & 141.0 & 142.1 & 149.7 & 147.8 & 144.0 & 143.3 & 141.2 & 173.8 & 176.0 & 181.4 & \textbf{184.1} & 182.3 & 180.1 & 174.0 & \textbf{182.4} & 165.6 & 175.7 & 165.2 & 158.1 \\
 \bottomrule
\end{tabular}

 \end{adjustbox}
 \caption{
 \textbf{Synopses}:
 Evaluating the effect of MCTS modifications (\green{Bilevel}, \cyan{Tree Collapsing}, \orange{Dynamic Tree Collapsing}, \magenta{Fixed Budget Bilevel})
 and the baselines (\gbfs, Softmin-type(h)) over multiple heuristics $h\in\braces{\ce,\cg,\ff}$.
 Each cell shows the number of IPC18+23 instances solved within 5 min. and 8GB,
 or the Agile IPC score $\sum_i \min \braces{1, 1-\nicefrac{\log t_i}{\log 300}}$ where $t_i>0$ is the runtime for instance $i$ in seconds.
 \textbf{Bold scores} indicate top-2 (including ties).
 \textbf{Summary}:
 \green{Bilevel} generally improves GUCTN2.
 \cyan{Tree Collapsing} generally improves \green{Bilevel}.
 \cyan{Tree Collapsing} does not significantly improve GUCTN2 by itself except $\ff$ in IPC 2023 (thus overall better total).
 The best \cyan{$\theta$} tends to be between 40 and 80, but depends on the domain and $h$.
 \orange{DTC} avoids hyperparameter tuning and is overall in the second place.
 \magenta{The fixed budget bilevel} improved over GUCTN2,
 but performed significantly worse than \green{the dynamic depth-based budget strategy}.
  }
 \label{tbl:base-table}
\end{table*}

\subsection{\green{Depth-Based} vs. \magenta{Fixed} Budget Strategy}
\label{sec:budget}

The budget $b$ of BFS in \green{Bilevel MCTS} is
automatically adjusted by the depth of the selected node (\refline{line:budget}).
This dynamic budgeting is indeed crucial in the runtime complexity analysis
(\reftheos{theo:action-selection-complexity-bilevel-heap}{theo:action-selection-complexity-bilevel-bucket}),
but it also has a significant empirical effect.

To demonstrate this,
we evaluated a variant named \magenta{Fixed Budget Bilevel MCTS},
where the BFS budget $b$ is a hyperparameter $\in \braces{10,100,1000,10000}$.
This variant spends too much time on the BFS when the tree is shallow,
and too little when the tree is deep.
\reftbl{tbl:base-table} (rightmost columns)
shows that, while
\magenta{Fixed Budget Bilevel GUCTN2} improved the \#solved over the base GUCTN2
largely thanks to $\ff$+IPC2023 configurations (+34 instances),
it performed much worse overall compared to \green{Bilevel GUCTN2} whose budget is dynamic.
The Agile IPC scores also tend to show inferior performance.

\subsection{\cyan{Tree Collapsing}}

We propose a further modification to \green{Bilevel-MCTS}, called \cyan{\emph{Tree Collapsing}}.
The number of NEC computations grows proportionally to the depth of the tree.
However, many action selection steps are not informative enough
because some nodes contain only a small number of children.
Since each selection in MCTS narrows down the set of candidate leaf nodes for expansion,
the nodes with fewer children fails to reduce the subset \emph{fast enough}.

We address this issue by collapsing the tree (see \refig{fig:collapse} and \refalgo{alg:collapse})
when the width of the tree is below a certain threshold $\theta$ given as a hyperparameter.
Let $p$ be an expanded node,
$S(p)$ be its successors,
and $p'$ be the parent of $p$.
Before inserting $p$ into the backpropagation queue $B$ (\refalgo{alg:bilevel-mcts}, \refline{line:collapse}),
we compute $S(p')+S(p)-1$.
If it is less than $\theta$,
we deem that $p'$ should have more children.
Then we connect each $n\in S(p)$ directly to $p'$ and discard $p$.
Note that this happens separately from the table that tracks the predecessor of each node,
which is necessary for extracting the final plan.

In addition, we remove the need for hyperparameter tuning with \orange{\emph{{Dynamic Tree Collapsing (DTC)}}}.
DTC is inspired by the success of dynamic budgeting ---
The collapsing threshold $\theta$ is dynamically set to the depth of the node being collapsed.

We evaluated \cyan{Tree Collapsing} with various hyperparameter \cyan{$\theta$},
for both the base GUCTN2 and the \green{Bilevel GUCTN2}.
The result (\reftbl{tbl:base-table}) shows that
it generally improves \green{Bilevel}
in both \# solved and the IPC scores.
We also observed improvements in the node/sec,
supporting our hypothesis.
Due to space, we included the plots in the appendix \refsec{sec:evalsec-collapsing}.
However, it tends to have a negligible impact
when applied to the base GUCTN2, except in $\ff$+IPC 2023 configuration
where both the \#solved and the IPC scores have improved.

This indicates a synergetic effect between \green{Bilevel} and \cyan{Tree Collapsing},
which is explained by its interaction with the queue operations in the backpropagation as follows.
In \green{Bilevel MCTS}, backpropagation is paused during the BFS,
and the expanded nodes wait in the backpropagation queue $B$.
Meanwhile, \cyan{\function{collapse}} often returns the same grandparent node
which are detected by $B$ as duplicates.
This results in a significant reduction of the size of $B$,
as well as the runtime for the backpropagation.
This additional effect is not present in the base MCTS because
backpropagation happens after every expansion.

Next, we evaluated \orange{DTC}.
It managed to closely match the performance of the best hyperparameter \cyan{$\theta=40$},
demonstrating its ability to eliminate the need for hyperparameter tuning.
To avoid the tuning, we use the \orange{DTC} exclusively hereafter.

\begin{algorithm}[tb]
 \caption{
 $\function{\cyan{collapse}}(\text{Parent}\ p, \text{threshold}\ \theta)$
 }
 \label{alg:collapse}
 \begin{algorithmic}[1]
  \IF{$p$ is not root}
  \STATE Grandparent $p'\gets p.\function{parent}$
  \IF{$|S(p')|+|S(p)|-1<\theta$}
  \STATE $S(p')\gets (S(p') \setminus \braces{p}) \cup S(p); \quad p\gets p'$
  \ENDIF
  \ENDIF
  \RETURN $p$
 \end{algorithmic}
\end{algorithm}

\begin{figure}[tb]
 \centering
 \includegraphics[width=0.8\linewidth]{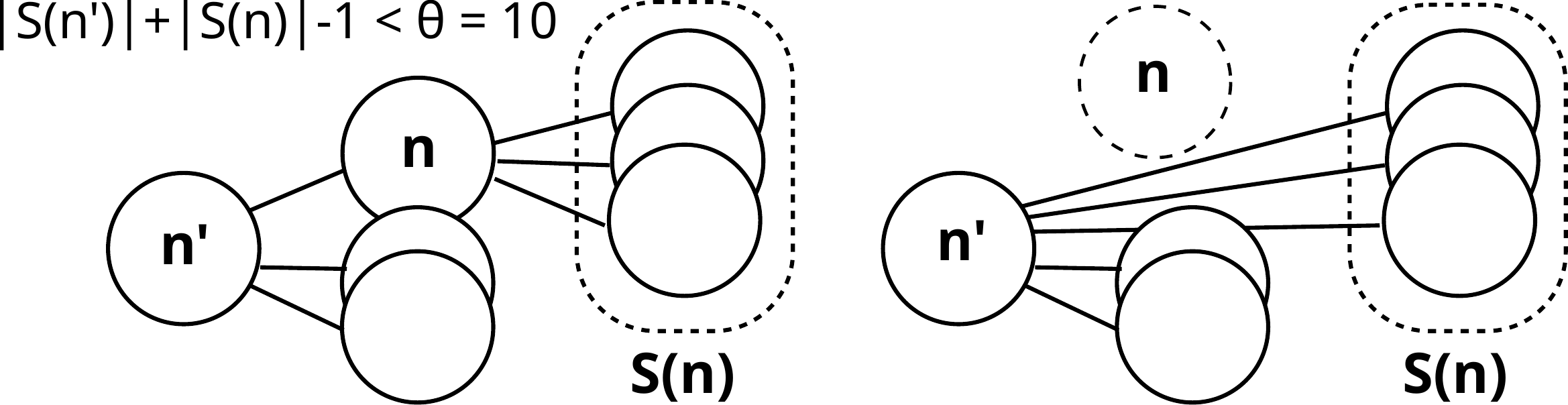}
 \caption{Example illustrating \cyan{Tree Collapsing} with $\theta=10$.}
 \label{fig:collapse}
\end{figure}

\subsection{Lazy Evaluation}
\label{sec:lazy}

Finally, we combined GUCTN2 with lazy evaluation,
a well-known enhancement that trades heuristic accuracy with speed by
assigning the parent's heuristic value to the children, instead of the children's own value.
The results are primarily negative, and we discuss this in the appendix (\reftbl{tbl:lazy-table}).

\section{Orthogonality from Existing Techniques} 

\begin{table}[tb]
 \begin{adjustbox}{max width=\linewidth}
 \begin{tabular}{rl}
  LAMA & $[\ff, \function{pr}(\ff), \ar{l}, \function{pr}(\ar{l})]$\\
  SM-Type-LAMA & $[\ff, \function{pr}(\ff), \function{sm}([\ff,g]), \ar{l}, \function{pr}(\ar{l}), \function{sm}([\ar{l},g]) ]$\\
  NOLAN & $[\ff, \function{pr}(\ff), \ar{l}, \function{pr}(\ar{l}), [w_{\ar{l}}, \ar{l}] ]$\\
  \midrule
  LAMAe+BFWS& $[[w_{\ar{l}, \ff}, \ff], \function{pr}(\ff), [w_{\ar{l}, \ff}, \ar{l}], \function{pr}(\ar{l}) ]$\\
  \green{N2}\orange{DTC}+BFWS & $\function{n2}([w_{\ff}, \ff])$\\
  \green{N2}\orange{DTC}+LAMAe & $[\function{n2}([\ff]), \function{pr}(\ff), \function{n2}([\ar{l}]), \function{pr}(\ar{l}) ]$\\
  \midrule
  \coolname & $[\function{n2}([w_{\ar{l}, \ff}, \ff]), \function{pr}(\ff), \function{n2}([w_{\ar{l}, \ff}, \ar{l}]), \function{pr}(\ar{l}) ]$\\
 \end{tabular}
 \end{adjustbox}
 \caption{
 Summary of the configurations.
 LAMAe denotes the same configuration as LAMA, but with eager evaluations.}
 \label{tbl:configurations}
\end{table}

\begin{table*}[tb]
 \centering
 \begin{adjustbox}{max height=0.15\linewidth}
\begin{tabular}{rrc|rr|rr||rrrr||rrrrr}\toprule
& & IPC   & \multicolumn{2}{c|}{Lapkt-BFWS}           & \multicolumn{2}{c||}{FD-BFWS} & \multicolumn{2}{c}{LAMA} & Dec  & NO   & \multirow{2}{*}{LAMAe} & LAMAe  & \green{N2}\orange{DTC} & \green{N2}\orange{DTC} & \multirow{2}{*}{\coolname}\\
& & year  & Apx$^{\text{fd}}$ & $\vf_5$$^{\text{fd}}$ & $\vf_4$ & $\vf_5$             &  & +SM                   & Star & LAN  &                        & +BFWS  & +BFWS                  & +LAMAe                 & \\
\midrule
\multirow{6}{*}{\rotatebox{90}{5 min. agile}} &
\multirow{3}{*}{\rotatebox{90}{\# solved}} &
     ipc18 & 99  & 83  & 109 & 92  & 111 & \textbf{120.4} & 99  & 105         & 98 & 100         & 107   & 94.6  & \textbf{122}   \\
 & & ipc23 & 67  & 67  & 55  & 71  & 66  & 66             & 63  & \textbf{72} & 61 & \textbf{72} & 60.4  & 61.6  & 70.2           \\
 & & total & 166 & 150 & 164 & 163 & 177 & \textbf{186.4} & 162 & 177         & 159& 172         & 167.4 & 156.2 & \textbf{192.2} \\
\cmidrule{2-16}  & \multirow{3}{*}{\rotatebox{90}{score}} &
     ipc18 & 51.3 & 41.4 & 53.9 & 45.3          & 57.1 & \textbf{60.3} & 51.1 & 47.5          & 47.2& 45.8 & 52.1 & 44.3 & \textbf{60.0} \\
 & & ipc23 & 39.0 & 39.9 & 36.4 & \textbf{43.9} & 38.3 & 39.0          & 42.6 & \textbf{45.2} & 38.2& 39.3 & 34.7 & 34.8 & 39.4          \\
 & & total & 90.3 & 81.3 & 90.2 & 89.2          & 95.4 & \textbf{99.3} & 93.7 & 92.8          & 85.4& 85.2 & 86.8 & 79.1 & \textbf{99.4} \\
\midrule
\multirow{6}{*}{\rotatebox{90}{30 min. extended}} &
\multirow{3}{*}{\rotatebox{90}{\# solved}} &
     ipc18 & 118 & 90  & 132 & 102 & 123 & 133.4 & 110 & 133         & 117& \textbf{140} & 126.8 & 111.6 & \textbf{145}   \\
 & & ipc23 & 83  & 69  & 76  & 72  & 81  & 80    & 70  & \textbf{86} & 69& \textbf{86}  & 80.4  & 73.4  & 85.6           \\
 & & total & 201 & 159 & 208 & 174 & 204 & 213.4 & 180 & 219         & 186& \textbf{226} & 207.2 & 185   & \textbf{230.6} \\
\cmidrule{2-16} & \multirow{3}{*}{\rotatebox{90}{score}} &
     ipc18 & 65.4  & 52.5 & 69.8  & 58.1          & 71.0  & \textbf{76.3}  & 63.6  & 65.0          & 62.1  & 63.5  & 67.8  & 58.4  & \textbf{77.9}  \\
 & & ipc23 & 48.5  & 46.8 & 43.8  & \textbf{50.4} & 46.6  & 47.2           & 48.4  & \textbf{53.7} & 45.0  & 49.2  & 44.0  & 42.7  & 48.9           \\
 & & total & 113.9 & 99.2 & 113.7 & 108.6         & 117.6 & \textbf{123.4} & 112.0 & 118.7         & 107.1 & 112.8 & 111.8 & 101.0 & \textbf{126.8} \\
\bottomrule
\end{tabular}
 \end{adjustbox}
 \caption{
Comparing \lsota planners.
Top-2 (including ties) are highlighted in \textbf{bold}.
Synopsis:
Apx$^{\text{fd}}$ = Approximate Novelty Search with FD-based grounder,
$\vf_5^{\text{fd}}$ = The original Lapkt-based BFWS with FD-based grounder,
$\vf_4, \vf_5$ = Our BFWS reimplementations,
+SM = Softmin-Type-LAMA.
}
 \label{tbl:bfws-table}
\end{table*}

Existing \sota planners have employed a number of enhancements to achieve the performance,
including
\emph{width-based search} \citep{lipovetzky2017bfws},
\emph{alternation queue} \citep{RogerH10},
and \emph{boosted preferred operators} \citep{RichterH2009}.
Interestingly,
these techniques all address different aspects of exploration
(i.e., deviates from the best-first behavior based on cost-to-go estimates).
Width-based search encourages exploring states that are dissimilar to states visited before,
the alternation queue tries to diversify the heuristics to use,
and bandits in MCTS perform statistical reasoning on a single heuristic
to decide whether to trust it more (exploit) or less (explore).

Since each technique targets a different aspect of exploration,
we naturally expect them to offer orthogonal and modular performance improvement,
which we demonstrate in this section through extensive evaluations.

\paragraph{Novelty Metric}

Before evaluating our novelty-enhanced MCTS later,
we must first compare our novelty metric implementation with existing work
in order to distinguish the effect of the implementation difference.

Novelty metric is a metric can be seen as a soft approximation of close list:
It tries to rank states
by checking how dissimilar a newly generated state is from the set of states that have been generated.
Best-first search algorithms that incorporate the novelty metric as part of their sorting criteria are
generally referred to as Best-First Width Search (BFWS).
The metric is defined as the minimum size of the conjunctions of propositions that have never been true
in a set of states visited before.
The set is divided into subsets by their heuristic values,
and the metric is measured for each subset.
Since computing the novelty with large conjunctions requires an exponentially large storage,
practical implementations often restrict the size of conjunctions (thus the maximum value of the metric) to $w_{\max}=2$,
or try to approximate the computation through a bloom filter \citep[ApxNovelty]{singh2021approximate}.
More recently, NOLAN \cite{correa2025nolan} proposed
selecting $w_{\max}=1$
when the number $|P|$ of propositions exceeds a threshold $V=100$, and $w_{\max}=2$ otherwise.
We reimplemented the metric in FD. 
For a controlled experiment, all configurations use the FD-based grounder,
although Lapkt-based planners could use FF- or Tarski-based grounders.

\citeauthor{lipovetzky2017bfws} claimed that
the open-list configuration which lexcographically sorts the nodes
with cheap heuristics $\vf_5=[w_{\ar{g},\ar{r}},\#g]$ achieved a good result,
where $\ar{g}$ denotes goal-count heuristics \citep{fikes1972strips} and
$\ar{r}$ counts how many propositions in the relaxed plan's actions' preconditions became true so far
since the last time $\ar{g}$ decreased in the path from the root.
The low computational cost makes $\vf_5$ an ideal baseline for comparing the low-level efficiency.
\reftbl{tbl:bfws-table} shows that our reimplementation of $\vf_5$
(appendix \refsec{sec:novelty-implementation} for implementation details) was faster than the original Lapkt-based BFWS.
Both implementations use the maximum novelty 2.
NOLAN's dynamic $w_{\max}$ degraded the performance (appendix \refsec{sec:nolan-dynamic-width}),
presumably because their $V=100$ was tuned from pre-IPC14 domains.
We also observed that
their $\vf_4=[w_{\ar{l}, \ff},\ar{l},\ff]$ performed better overall (\reftbl{tbl:bfws-table}),
where $\ar{l}$ is landmark-sum heuristic \citep{buchner2023landmark}.
Based on these results, we use $w_{\max}=2$ and $\vf_4$ as the base of our novelty-enhanced MCTS.
We also reproduced $\vf_6$, the second phase of Dual-BFWS \citep{lipovetzky2017bfws},
which underperformed (appendix \refsec{sec:f6}).

\paragraph{Alternation with Boosted Preferred Operators}

\emph{Alternation queue} \cite{RogerH10} 
is a method for combining multiple heuristics
by expanding nodes from different queues in a round-robin manner.
A \emph{preferred operator queue} is a priority queue that is populated only by
the nodes generated by the operators that are marked by a heuristic function as preferred,
and is often used as a member of alternated queues.
When a preferred operator queue is \emph{boosted},
the alternation queue watches the smallest value returned by each heuristics observed so far.
When the value is updated, it expands from the preferred operator queues exclusively
for a fixed number of iterations (the boost value).
Let $\function{pr}(h)$ denote a preferred operator queue for a heuristic $h$.
LAMA \citep{richter2011lama}, which won IPC11, alternates 4 queues using the lazy evaluation.
\citet{kuroiwa2022biased} added two Softmin-Type(h) queue (denoted as $\function{sm}$) to LAMA,
resulting in a 6-queue SM-Type-LAMA that outperformed LAMA.
Both configurations use boost=1000.
See \reftbl{tbl:configurations} for a summary of configurations.
Note that, due to the agile setting,
LAMA and variants evaluated here perform only the first iteration of the anytime search.

\paragraph{\coolname}

We combined
\green{Bilevel GUCT\textbf{N2}} + \orange{DTC},
\textbf{n}ovelty \textbf{B}FWS, and \textbf{LA}MA
into a configuration named \emph{\coolname}.
Let $\function{n2}([w_{\ar{l}, \ff}, h])$ denote a tree-based OPEN list
that backpropagates the minimum $w_{\ar{l}, \ff}$ (Full-Bellman backup)
and the average and the standard deviation of $h$ (Monte-Carlo Backup) from the children.
In each action selection, it selects the child with the best $w_{\ar{l}, \ff}$,
then breaks ties with UCB1-Normal2$_i$ (\refeq{eq:ucb-normal2}).

Due to the negative result of lazy evaluations (\refsec{sec:lazy}) with MCTS,
\coolname is based on \textbf{LAMAe} which uses eager evaluations and boost=10
(eager evaluation can rely less on preferred operators due to the accurate estimates.)
It also avoids the issue of evaluating the novelty metric lazily,
whose effect has not been studied in the literature (outside the scope of this paper).
Then
\coolname replaces the standard best-first queues in LAMAe with $\function{n2}([w_{\ar{l}, \ff}, h])$,
rather than adding new queues like in SM-Type-LAMA,
resulting in a simpler 4-queues configuration.

We compared \coolname with \textbf{ApxNovelty}, BFWS variants, \textbf{LAMA}, \textbf{SM-Type-LAMA}, \textbf{NOLAN}, as well as
\textbf{DecStar} \cite{gnad2018decstar} which won IPC2023 Agile.
\textbf{NOLAN} adds
a $\vf_2$ BFWS configuration by \citet{lipovetzky2017bfws} to LAMA's queue.
(We use $w_{\max}=2$; appendix \refsec{sec:nolan-dynamic-width}.)

The 5-minute agile result (\reftbl{tbl:bfws-table}, top) shows that
\coolname outperforms them in terms of \# solved,
and is on par with SM-Type-LAMA in the IPC score.
It outperformed ApxNovelty with the same FD-based grounder despite using a simpler novelty metric.
Reimplementing approximate novelty in Fast Downward and combine it with \coolname is an interesting avenue of future work.
Despite not an IPC participant, SM-Type-LAMA was also quite strong,
outperforming ApxNovelty without novelty metric.
Domain-wise results and solution cost comparisons are included in the appendix \refsec{sec:domain-wise}.

As \coolname combines 3 techniques, we perform an ablation study removing one of the 3 components.
In \reftbl{tbl:bfws-table},
\textbf{LAMAe+BFWS} replaces the best-first queues in LAMAe with tiebreaking queues $[w_{\ar{l}, \ff}, h]$ for $h\in \braces{\ar{l}, \ff}$.
\textbf{\green{N2}\orange{DTC}+LAMAe} replaces the best-first queues in LAMAe with $\function{n2}([h])$ for $h\in \braces{\ar{l}, \ff}$.
\textbf{\green{N2}\orange{DTC}+BFWS} is a single queue configuration using $\function{n2}([w_{\ff}, \ff])$.
All configurations use the eager search.
Appendix \reftbl{tbl:lamanormal2-collapsing} also contains other extensive evaluations,
including the result of varying the collapsing parameter $\theta$ in \coolname.

\paragraph{Extended 30 Minutes Run}

\begin{figure}[tb]
 \includegraphics[width=\linewidth]{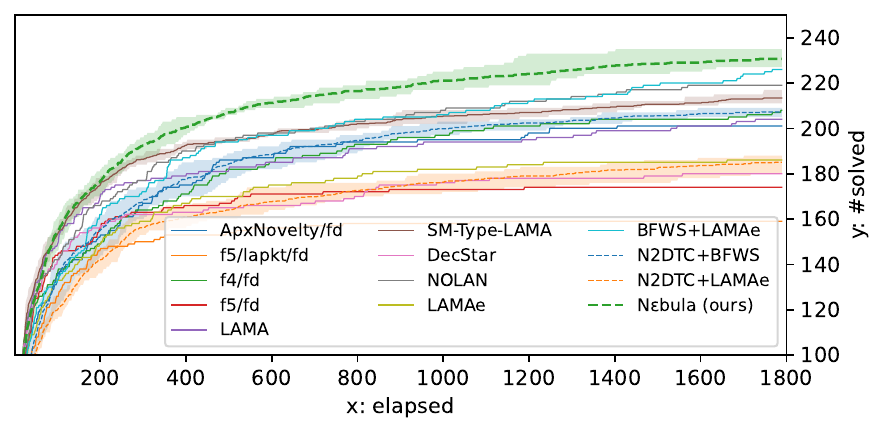}
 \caption{
 A histogram plot of the number of IPC18+IPC23 instances solved within 30 minutes.
 The lines indicate the average of 5 seeds, while the bands indicate the maximum and the minimum among the seeds.}
 \label{fig:histogram}
\end{figure}

Bilevel MCTS is still a computationally demanding algorithm in terms of the runtime and the memory
in comparison to simple queue-based search because of the tree management cost.
Its true potential is unlocked in the longer, extended 30 minutes runs.
\refig{fig:histogram} shows a coverage histogram plot which demonstrates this.
\reftbl{tbl:bfws-table} (bottom half) also shows the instances solved and the extended IPC score
$\sum_i \min \braces{1, 1-\nicefrac{\log t_i}{\log 1800}}$ which mimics the Agile IPC score.
\coolname significantly exceeds existing work (230.6 instances solved),
including once closely matched SM-Type-LAMA (213.4).
However, near the end of 30 minute time limit,
the space for trees caused memory exhaustion, which allowed queue-based LAMAe+BFWS to catch up (226).
Micro-optimization of the tree management could widen the gap.
See Appendix \refsec{sec:hists} for
the similar histogram plots using the memory usage and the number of node evaluations.

\section{Related Work}

MCTS variants that perform Bilevel search are common in the game literature \citep{kishimoto2012game},
such as PN$^2$ \citep{kishimoto2012game}.
However, the main motivation of existing work tends to be improving the memory usage,
much like linear-memory graph search algorithms (e.g. IDA* \citep{korf1985depth}, RBFS \citep{korf1993linear}),
and they discard the nodes explored by the second-level search to release the memory,
unlike our Bilevel MCTS.

Df-UCT \citep{yoshizoe2011scalable} similarly aims at improving the action selection performance,
but with the main focus on improving the communication overhead in distributed search.

\section{Conclusion}

Classical planning tasks have a significantly longer horizon compared to game tree search,
which makes the application of naive MCTS impractical.
We proposed Bilevel MCTS, a modification that reduces the action selection overhead in MCTS
by running a best-first search from the leaf nodes.
We also proposed tree-collapsing to reduce the overhead further.
Combining them with proven techniques in the literature (novelty, alternation with boosted preferred operator queue)
yielded a \lsota planner, \coolname, that outperforms existing work.

\fontsize{9.5pt}{10.5pt}
\selectfont

\clearpage
\appendix
\section*{Appendix}

\section{Proofs for \emph{selection} runtime complexity}
\label{theo:action-selection-complexity-bilevel-proofs}

\setcounter{theo}{1}
\begin{theo}
 \label{apxtheo:action-selection-complexity-bilevel-heap}
 Assume that the search space forms a tree with a constant branching factor $B$, and
 a tree-based OPEN list of depth $D$, containing $N = B^D$ leaves.
 In the Bilevel MCTS,
 the amortized runtime of each \emph{selection} is $O(\log \log N)$ if $Q$ is a heap.
\end{theo}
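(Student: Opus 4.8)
The plan is to charge the cost of one iteration of the outer \textbf{while} loop of \refalgo{alg:bilevel-mcts} against the node expansions performed inside that iteration, and then simplify using \reftheo{theo:action-selection-complexity}. I would work in the idealized cost model of \reftheo{theo:action-selection-complexity}: the search tree is balanced with constant branching factor $B$, so at any moment it has depth $D$ with $N = B^{D}$ leaves, i.e.\ $D = \log_{B} N = \Theta(\log N)$, and a budgeted BFS always spends its full budget.

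Fix an iteration and let $d$ be the depth of the leaf $p$ returned by the \emph{selection} descent; then $d \le D$, and by \refline{line:budget} the BFS phase receives budget $b = d$ and performs $d$ expansions. First I would bound the size of $Q$ throughout the BFS: it starts as $\{p\}$ (\refline{line:Q}), and each expansion removes one node via \function{popmin} and inserts $B$ successors, so after $k \le d$ expansions $|Q| \le 1 + k(B-1) = O(Bd) = O(d)$. Hence, with a binary heap, every \function{popmin} and every insertion costs $O(\log d)$.

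Next I would total the iteration's \emph{selection} work. The descent costs $O(Bd) = O(d)$ by the same argument as \reftheo{theo:action-selection-complexity}, stopping at depth $d$ instead of $D$. The BFS performs $d$ expansions, each consisting of one \function{popmin} and $B$ insertions into $Q$ (plus $O(B)$ work for successor generation, which is subsumed), i.e.\ $O(\log d)$ per expansion, for $O(d\log d)$ total. The backpropagation on \reflines{line:mcts-backprop-begin}{line:mcts-backprop-end} is unchanged from \refalgo{alg:mcts} and is not part of \emph{selection}. Thus the iteration performs $\Theta(d)$ node selections at total cost $O(d\log d)$, so the amortized cost per selection is $O(\log d) = O(\log D) = O(\log\log N)$, using $d \le D = \Theta(\log N)$.

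I expect the main obstacle to be the amortization bookkeeping itself — specifically, justifying that the $\Theta(d)$ descent cost may be spread evenly over the $d$ expansions the iteration enables, so it contributes only $O(1)$ to each. This is exactly where the idealized model of \reftheo{theo:action-selection-complexity} matters: one needs the BFS to always consume its full budget (equivalently, an infinite, dead-end-free tree), since an iteration that descends deep but whose BFS terminates early would not enjoy the amortization. The queue-size bound, the $O(\log d)$ per-heap-operation cost, and the final substitution are routine.
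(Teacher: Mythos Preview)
Your proposal is correct and follows essentially the same argument as the paper: bound the descent by $O(Bd)$, bound the heap size during BFS by $O(Bd)$ so each heap operation costs $O(\log d)$, total the $d$ BFS expansions to $O(d\log d)$, and amortize over the $d$ expansions to get $O(\log d)=O(\log D)=O(\log\log N)$. Your version is slightly more careful in distinguishing the actual selected depth $d$ from the tree depth $D$ and in making explicit the assumption that the BFS exhausts its budget, but the underlying computation is identical to the paper's.
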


\begin{proof}
 In $D$ consecutive expansions,
 the first expansion performs the recursive action selection which takes $O(BD)$ runtime,
 then the following $D-1$ expansions use the priority queue $Q$.
 Due to the constant branching factor,
 at the time of $d$-th expansion from $Q$,
 it has $B+(B-1)(d-1)=O(Bd)$ nodes,
 having inserted $B$ nodes initially, then $B(d-1)$ nodes by $d-1$ expansions, then having $d-1$ nodes removed.
 If $Q$ is a heap,
 each $d$-th \function{popmin} from $Q$ therefore takes at most $\log Bd$ runtime.
 After $D-1$ expansions,
 this totals at most $\sum_{d=1}^{D-1} \log Bd = O(D\log BD)$.
 Thus, on average, \emph{selection} takes $\frac{BD+D\log BD}{D}= O(\log D) = O(\log \log N)$ per expansion.
 Note that $B$ is a constant.
\end{proof}

\begin{theo}
 \label{apxtheo:action-selection-complexity-bilevel-bucket}
 Assume that the search space forms a tree with a constant branching factor $B$, and
 a tree-based OPEN list of depth $D$, containing $N = B^D$ leaves.
 In the Bilevel MCTS,
 the amortized runtime of each \emph{selection} is $O(1)$ if $Q$ is an array-based priority queue.
\end{theo}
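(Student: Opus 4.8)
The plan is to mirror the amortized argument used in the proof of \reftheo{apxtheo:action-selection-complexity-bilevel-heap}, replacing the logarithmic cost of the heap's \function{popmin} with the constant cost achievable by an array-based (bucket) priority queue à la \citet{dial1969shortest}. Concretely, I would again partition the work into blocks of $D$ consecutive expansions: the first expansion of each block performs one recursive descent through the \topen, costing $O(BD)$, and the remaining $D-1$ expansions operate entirely on $Q$, each inserting the $B$ successors of the popped node into $Q$ and performing one \function{popmin}.

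First I would argue that, under the unit-cost assumption of agile classical planning, the node evaluation criterion $f$ used inside the BFS takes integer values in a range $[0,C]$ whose upper bound $C$ does not depend on $N$ (it is bounded by the largest heuristic estimate produced during the search). This is precisely the setting in which a bucket queue \citep{dial1969shortest,burns2012implementing} supports \function{insert} in worst-case $O(1)$ and \function{popmin} in amortized $O(1)$: the work of advancing the current-bucket pointer past empty buckets is charged against keys that were previously inserted, so over the lifetime of a single BFS the total pointer-advancement cost is linear in the key range plus the number of queue operations.

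Next I would combine the costs. A block of $D$ expansions pays $O(BD)$ for the single recursive descent, $(D-1)\cdot O(B)$ for the successor insertions, and $(D-1)\cdot O(1)$ amortized for the \function{popmin} calls, totalling $O(BD)$. Dividing by the $D$ selections performed in the block gives $O(B)$ per selection, which is $O(1)$ because $B$ is a constant, as claimed; since $N=B^D$ this also means the per-selection cost is independent of the size of the \topen.

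The main obstacle is making the amortized $O(1)$ claim for the bucket queue fully rigorous in this particular context: unlike in Dijkstra's algorithm, the $f=h$ values of nodes generated by a greedy BFS need not be monotone along a path, so the current-bucket pointer can in principle move backward. I would resolve this either by observing that a single BFS run touches at most $BD$ nodes whose keys all lie in the fixed finite range $[0,C]$, so that even a naive full rescan of the array costs only $O(C)$ per \function{popmin} with $C$ independent of $N$ (and hence $O(DC)$ per block, still $O(1)$ amortized per selection once $C$ is treated as a constant), or by invoking the standard potential-function analysis for wraparound bucket queues. Either route removes the dependence on $N$, which is all the statement requires.
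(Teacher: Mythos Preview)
Your proposal is correct and follows essentially the same approach as the paper: partition into blocks of $D$ expansions, pay $O(BD)$ once for the tree descent, pay $O(1)$ per \function{popmin} from the bucket queue for the remaining $D-1$ expansions, and amortize. The paper's own proof is in fact a single line (``The total runtime is $BD+D-1$, therefore each \emph{selection} takes $\frac{BD+D-1}{D}=O(1)$''), so your version is considerably more careful---you explicitly account for the $O(B)$ insertion cost per expansion and you confront the non-monotonicity of $h$ in greedy BFS, neither of which the paper addresses; these additions are sound and strengthen the argument without changing its structure.
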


\begin{proof}
 The total runtime is $BD+D-1$, therefore each \emph{selection} takes $\frac{BD+D-1}{D}= O(1)$ per expansion.
\end{proof}

\section{Eval/Sec Improvements from\\ Bilevel MCTS}
\label{sec:evalsec-bilevel-more}

In the main paper, experiments using \guct-Normal2+$\ff$ showed that
\begin{enumerate}
 \item MCTS can have a significantly slower node evaluation speed than GBFS,
 \item This is due to the search depth, and
 \item Bilevel MCTS improves the node evaluation speed when the search is deep.
\end{enumerate}
In this section we demonstrate the results for other heuristics ($\cg, \ce$).

\refig{fig:evalsec-bilevel-more} shows the result comparing the evaluation speed (item 1).
In $\cg$, we observed that MCTS is even more severely affected by the bottleneck.
This is because the $\cg$ heuristic is relatively lightweight compared to $\ff$,
as seen by some points reaching $10^5$ evaluations per seconds, making the bottleneck more pronounced.
In contrast, the effect on a heavier and slower heuristic like $\ce$ is small,
as the runtime bottleneck shifts toward the cost of heuristic evaluation.

\refig{fig:evalsec-correlation-more} shows
the effect of bilevel MCTS on the correlation between search depth and node/sec (item 2, 3).
We observed a similar trend as seen in the previous figure.

\section{GUCTN2/Bilevel GUCTN2\\ Eager Evaluation Results, Domain-Wise Breakdown}
\label{sec:eager-domainwise}

The domain-wise breakdown of \reftbl{tbl:base-table} is available in \reftbl{tbl:base-table-domainwise}.

\section{GUCTN2/Bilevel GUCTN2\\ Lazy Evaluation Results}
\label{sec:lazy}

\reftbl{tbl:lazy-table} shows the results that correspond to \reftbl{tbl:base-table},
except that all heuristic evaluations are performed lazily, i.e., using the parent's heuristic value.
We added this evaluation because this configuration is a common request from the reviewers.
We however note that
the previous study on the effect of lazy evaluation \citep{RichterH2009}
already extensively showed that it often fails to improve the agile performance
due to the accuracy lost from this mechanism,
and they only work when combined with a boosted preferred operator (discussed later).

Our experiment confirms the previous study \citep{RichterH2009}.
Lazy evaluation tends to be ineffective, not only because
it makes the heuristics (and thus the action selection) less accurate,
but also because
the computational cost of action selections becomes more pronounced due to cheaper heuristic evaluations.
Note that although \citet{schulte2014balancing} reported a successful application of lazy evaluation to MCTS,
they used a different bandit (UCB1 with normalization) that has a significantly different search behavior.

\begin{table*}[tb]
\centering
 \begin{adjustbox}{max width=\linewidth}
\setlength{\tabcolsep}{0.3em}
\begin{tabular}{r|r|rr|rrrrrrr|rrrrrrr|r|rrrr}
\toprule
 &      & GBFS & Soft & base & \multicolumn{6}{c|}{GUCTN2} & base & \multicolumn{7}{c|}{\green{Bilevel GUCTN2}} & \multicolumn{4}{c}{\magenta{Fixed Budget Bilevel}}\\
 & $h=$ &      & min
 & \cyan{$\theta=0$} & \cyan{10} & \cyan{20} & \cyan{40} & \cyan{80} & \cyan{160} & \cyan{320}
 & \cyan{$\theta=0$} & \cyan{10} & \cyan{20} & \cyan{40} & \cyan{80} & \cyan{160} & \cyan{320} & \orange{DTC}
 & $b=10$ & 100 & 1000 & 10000
 \\
\midrule\multirow{4}{*}{\rotatebox{90}{\# solved}}
 & $\ce$ & 70.0  & \textbf{82.6}  & 73.4 & 77.6 & 78.2 & 75.6  & 74.6  & 77.0  & 78.6  & 72.8  & 75.0  & 73.4           & 73.6          & 77.0  & 79.6  & 75.2  & 77.4  & 68.8 & \textbf{82.4} & 81.4  & 76.8  \\
 & $\cg$ & 71.0  & \textbf{98.6}  & 63.2 & 69.2 & 70.6 & 71.0  & 68.0  & 65.0  & 61.6  & 87.2  & 87.4  & 97.2           & \textbf{97.4} & 92.4  & 89.6  & 88.0  & 84.4  & 73.2 & 84.0          & 88.2  & 81.4  \\
 & $\ff$ & 111.0 & \textbf{135.6} & 95.8 & 98.0 & 99.8 & 102.6 & 101.4 & 101.6 & 97.6  & 121.8 & 124.6 & \textbf{125.0} & 123.4         & 118.6 & 117.2 & 112.2 & 110.8 & 87.4 & 113.2         & 124.8 & 122.4 \\
\cmidrule{2-23}
 & total & 252.0 & \textbf{316.8} & 232.4 & 244.8 & 248.6 & 249.2 & 244.0 & 243.6 & 237.8 & 281.8 & 287.0 & \textbf{295.6} & 294.4 & 288.0 & 286.4 & 275.4 & 272.6 & 229.4 & 279.6 & 294.4 & 280.6 \\
\midrule\multirow{4}{*}{\rotatebox{90}{agl score}}
 & $\ce$ & 40.8 & \textbf{44.7} & 42.1 & 42.4 & 43.1 & 41.5 & 41.3 & 41.5 & 42.2  & 39.1 & 39.8 & 37.7 & 40.2          & 40.9 & 43.2 & 40.9 & 39.6 & 35.4 & \textbf{45.1} & 43.0          & 39.7 \\
 & $\cg$ & 42.0 & \textbf{52.1} & 33.6 & 36.0 & 36.7 & 37.0 & 34.9 & 32.6 & 32.1  & 47.0 & 46.8 & 50.3 & \textbf{51.0} & 47.0 & 46.3 & 43.5 & 44.3 & 39.6 & 46.3          & 44.4          & 41.5 \\
 & $\ff$ & 61.4 & \textbf{73.2} & 54.4 & 54.4 & 54.7 & 56.4 & 55.3 & 54.1 & 53.6  & 63.0 & 63.2 & 63.2 & 62.0          & 61.5 & 62.0 & 59.9 & 59.6 & 45.3 & 65.6          & \textbf{66.4} & 63.1 \\
\cmidrule{2-23}
 & total & 144.2 & \textbf{169.9} & 130.2 & 132.7 & 134.5 & 135.0 & 131.5 & 128.2 & 127.8 & 149.1 & 149.8 & 151.3 & 153.1 & 149.4 & 151.5 & 144.2 & 143.5 & 120.4 & \textbf{156.9} & 153.8 & 144.3 \\
 \bottomrule
\end{tabular}
 \end{adjustbox}
 \caption{
 Evaluating various MCTS enhancements with lazy evaluation.
 The performance tend to be inferior to the eager evaluation variants.
  }
 \label{tbl:lazy-table}
\end{table*}

\section{Eval/Sec Improvements from \cyan{Tree Collapsing}}
\label{sec:evalsec-collapsing}

Next, we show the scatter plots demonstrating the improvement due to \cyan{Tree Collapsing}.
\refig{fig:collapse-node-sec} shows the scatter plots of node evaluations per seconds
between GUCTN2 configurations with and without \cyan{Tree Collapsing}.
\cyan{Tree Collapsing} generally improve the performance, except in $\ce$ which is a heavier heuristic (i.e., it takes more time to compute).

\section{Brief Detail on Novelty Implementation}
\label{sec:novelty-implementation}

We briefly explain the implementation details of our novelty metric,
assuming the common background knowledge of SAS formalism and the internal of Fast Downward.
Our implementation is optimized to reduce the number of iterations, use compact bitvectors,
and is extended to handle axioms (derived variables).

Suppose a novelty metric defined for a list of heuristics $h_1, h_2,\ldots, h_k$,
such as $h_1=\ar{l}, h_2=\ff$ in $w_{\ar{l}, \ff}$.
For each tuple of heuristic values $[h_1(s),\ldots h_k(s)]$ of a state $s$ to evaluate,
it instantiate two bitvectors $v_1$ and $v_2$ of length $|P|$ and $|P|^2$, respectively,
where $|P|$ is the number of facts (propositions) in the domain.
Since Fast Downward is a SAS+-based planner, we associate $j$-th value of $i$-th SAS variable with $n$-th fact
as $n=I(i,j) = j+ \sum_{m=1}^{i-1} |d(m)|$, where $d(m)$ is the domain of $m$-th SAS variable.
Let us also denote its inverse as $i, j = I^{-1}(n)$.

The two vectors $v_1$ and $v_2$ represent the tables that store the set of 1-tuples and 2-tuples seen before.
These are represented as bitstrings, i.e., an array of 32bit unsigned ints where each bit represents a boolean.
This is significantly more efficient than simply using an array of bools, which causes each bool to consume one byte.

Rather than iterating over $P$ or $P\times P$ to mark all the 1-tuples and 2-tuples present in the state $s$,
we iterate over the conditional effects that were fired in the predecessor state $s'$,
and mark the tuples only for those that could be newly introduced as a result of the operator application.
Since the number of effects in each operator
tends to be significantly less than the size of the state $|P|$, this saves the number of iterations.
Moreover, since each tuple of facts represents a set,
we record 2-tuples of facts $\braces{P_i,P_j}$ only when $i < j$.

Finally, we handle derived variables.
Note that $P$ is extended to include derived variables,
but often a majority of them are not actually used by the operators.
As all derived variables are derived from existing normal propositions,
recording the new-ness of derived variables is fundamentally redundant.
To avoid recording such useless information,
we filter the derived variables to record for novelty computation
to only those that appear in any action precondition or any effect condition.
The set of such derived variables is precomputed during the initialization.

\section{Evaluating Dynamic Maximum Width in NOLAN, as well as the Second Phase of Dual-BFWS}
\label{sec:nolan-dynamic-width}
\label{sec:f6}

\reftbl{tbl:nolan-dynamic-width} shows the result of using NOLAN's dynamic $w_{\max}$ strategy
in $\vf_4$, $\vf_5$, and our reimplmentation of NOLAN \cite{correa2025nolan}.
We also present the result of $\vf_6 = [w_{\ar{l}, \ff}, \function{preferred}, \ar{l}, w_{\ar{l}}, \ff]$ here,
which is the second complete search phase of Dual-BFWS.
(We do not use the dual-phased search as employed in the original FF planner's EHC or Dual-BFWS.)
In fast downward,
$\function{preferred}$ is an evaluator that takes the value of $1$ if the node is generated by a preferred operator,
and $0$ otherwise. This is mentioned as $\function{help}$ by \citet{lipovetzky2017bfws}.

In NOLAN \cite{correa2025nolan}, it is unclear whether $V$ is measured by the number of SAS variables
or the number of propositional variables.
\reftbl{tbl:nolan-dynamic-width} is produced by a version that counts the SAS variables, but
the results were the same when we counted the propositional variables.

Note that this does not diminish the usefulness of the dynamic width approach presented in NOLAN;
Tuning the hyperparameter $V$ may help the performance in IPC18+23.
However, this topic is outside the main focus of our paper.

\section{\coolname with Various \cyan{Tree-Collapsing} Parameters}
\label{sec:lamanormal2-collapsing}

\reftbl{tbl:lamanormal2-collapsing} shows the domain-wise, IPC-wise and the total \# solved and Agile IPC scores.
It demonstrate that \cyan{Tree Collapsing} still has a significant effect in this complex configuration.
Of a particular note is the effect of \orange{Dynamic Tree Collapsing}, which significantly improved the Agile IPC score.

\section{\sota Planners \\ Domain-wise Results}
\label{sec:domain-wise}

\reftbls{tbl:domain-wise}{tbl:domain-wise-extended}
shows the full domain-wise result of \reftbl{tbl:bfws-table}
for both the 300 seconds agile runs and the 1800 seconds extended runs.

\section{Solution Cost Comparison through Satisficing IPC Scores}
\label{sec:cost-results}

For each configuration and seed,
we computed the satisficing IPC score $\sum_{i} \nicefrac{C^*_i}{C_{i}}$, where,
for instance $i$,
$C^*_i$ is the best solution length found among all algorithms and all 5 seeds,
and $C_{i}$ is the solution length of the configuration and the seed.
When the solution is not found, $C_{i}=\infty$ and $\nicefrac{C^*_i}{C_{i}}=0$.
When none of the configurations and the seeds found a plan,
all configurations get the score of 0 on that instance.

\reftbls{tbl:ipc-sat-300}{tbl:ipc-sat-1800} shows the results for 300-second and 1800-second runs.
\coolname's bandit approach does not optimize the solution length
but rather optimize the theoretical regret bound (i.e., the number of evaluations),
thus fairs worse than the other existing planners
that are simultaneously flirting with the idea of solution cost optimization.
However, because it solved more instances than the other approaches,
its satisficing score is not too bad.

\section{Histogram Coverage Plots based on Elapsed Time, Evaluations, Memory RSS}
\label{sec:hists}

\refig{fig:histograms} shows the enlarged histogram plots
for the elapsed time, the number of evaluations and the memory RSS.

\section{The Effect of Reopening and Tree-grafting in Agile Search}
\label{sec:tree-grafting}

\reftbl{tbl:tree-grafting} shows the 300 seconds agile runs for
GUCTN2 with node reopening and tree grafting features enabled.
All runs are based on eager heuristic evaluations.
Node reopening and grafting had a slight performance penalty.
Note that grafting implies reopening (grafting happens only when the node is reopened),
thus there is no need to run the configuration with reopening disabled and grafting enabled.

\begin{table}[tb]
\centering
\begin{tabular}{rr|rrr}
\toprule
 & Reopening & disabled & enabled & enabled \\
 & Tree Grafting & disabled & disabled & enabled \\
\midrule\multirow{4}{*}{\rotatebox{90}{\# solved}} & $\ce$ & \textbf{76.0} & 75.4 & 75.4 \\
 & $\cg$ & \textbf{67.4} & 64.8 & 65.2 \\
 & $\ff$ & 103.6 & 102.4 & \textbf{103.8} \\
\addlinespace & total & \textbf{247.0} & 242.6 & 244.4 \\
\midrule\multirow{4}{*}{\rotatebox{90}{agl score}} & $\ce$ & 45.3 & 45.0 & \textbf{45.5} \\
 & $\cg$ & \textbf{36.3} & 35.0 & 35.2 \\
 & $\ff$ & 59.4 & \textbf{59.9} & 59.8 \\
\addlinespace & total & \textbf{141.0} & 139.9 & 140.5 \\
\bottomrule
\end{tabular}
\caption{
The result comparing the number of solved instances and the agile scores across IPC18 and IPC23 instances.
The effect of enabling reopening and tree grafting is almost negligible but slightly negative.
}
\label{tbl:tree-grafting}
\end{table}

\clearpage

\begin{table*}[htbp]
 \centering
 \begin{adjustbox}{max height=0.5\textheight}
\setlength{\tabcolsep}{0.3em}
\begin{tabular}{r|r|rr|rrrrrrr|rrrrrrr|r|rrrr}
\toprule
 & domain & GBFS & Soft & base & \multicolumn{6}{c|}{GUCTN2} & base & \multicolumn{7}{c|}{\green{Bilevel GUCTN2}} & \multicolumn{4}{c}{\magenta{Fixed Budget Bilevel}}\\
 & $h=$ &      & min
 & \cyan{$\theta=0$} & \cyan{10} & \cyan{20} & \cyan{40} & \cyan{80} & \cyan{160} & \cyan{320}
 & \cyan{$\theta=0$} & \cyan{10} & \cyan{20} & \cyan{40} & \cyan{80} & \cyan{160} & \cyan{320} & \orange{DTC}
 & $b=10$ & 100 & 1000 & 10000 \\
\midrule\multirow{57}{*}{\rotatebox{90}{\# solved}} & agricola & 4 & 3.6 & 3.6 & 4.2 & 4 & 3.6 & 3.4 & 3.6 & 2.8 & 8.2 & \textbf{9.6} & \textbf{9.6} & 9.2 & 8.8 & 9.4 & 8.8 & 9 & \textbf{9.8} & 5.8 & 4.8 & 5.8 \\
 & caldera & 1 & 2 & \textbf{3.4} & 3.2 & \textbf{3.4} & \textbf{3.4} & \textbf{3.4} & \textbf{3.4} & 3 & 3 & 3.2 & \textbf{3.6} & 3 & 2.6 & 3 & 3 & 3 & 3.2 & 3.2 & \textbf{3.4} & 3 \\
 & data-net & 3 & \textbf{7.2} & 6.8 & \textbf{7.2} & 7 & 6.8 & 6.8 & 6.4 & 6.6 & 3.8 & 3.8 & 3.6 & 3.6 & 3.8 & 3.4 & 4.4 & 3.8 & 3 & 4.2 & 4.6 & 3 \\
 & flashfill & 8 & 8 & 9 & 10 & 9.8 & 9.8 & 9.2 & 10 & 8.4 & \textbf{10.4} & 10 & \textbf{10.4} & \textbf{10.6} & \textbf{10.4} & \textbf{10.4} & 9.4 & \textbf{10.4} & 9.2 & 9.8 & 8.2 & 8 \\
 & nurikabe & 10 & 9.6 & 9 & 9.2 & 7.8 & 9.2 & 9.4 & 10.4 & \textbf{10.6} & 7.2 & 7 & 7.6 & 7.2 & 7.4 & 9 & \textbf{10.8} & 7.2 & 8.6 & 7 & 8 & 9.8 \\
 & org-syn & 7 & 5.8 & 6 & 6 & 6 & 5.8 & 7.2 & 6.4 & 6.8 & 7 & 7 & 7.2 & 7 & 7 & \textbf{7.4} & 6.8 & 7.2 & \textbf{7.6} & 6 & 7.2 & 7 \\
 & settlers & 4 & \textbf{7} & 3.8 & 3.6 & 4.6 & 3.6 & 4.6 & 4.8 & 6.2 & 6.2 & 6.4 & 6.6 & 5.8 & 5 & 3.2 & 3.4 & \textbf{7} & 2 & 5.6 & \textbf{7} & 5 \\
 & snake & 0 & 0 & 5.8 & \textbf{7} & 5.8 & 4.8 & 4.6 & 4.6 & 3.4 & 4 & 5.8 & 6.4 & 6.6 & 6.8 & 6 & 5.6 & \textbf{7} & \textbf{8} & 4.2 & 1.8 & 0.8 \\
 & spider & 1 & 1.6 & 1.2 & 1.8 & 1.8 & \textbf{2} & 1.4 & \textbf{2} & 1.2 & 0 & 0 & 1.8 & \textbf{2} & 1.8 & 1.4 & 1.4 & 1.4 & 1 & 1 & 0 & 1 \\
 & termes & 2 & 3.6 & 1.6 & 1.2 & 1.2 & 1 & 1.2 & 1 & 1 & 4 & 3.6 & 4.6 & 3.6 & \textbf{5.2} & 4.6 & 4.6 & 5 & \textbf{5.2} & 5 & 4 & 3.2 \\
 & \textit{ipc18} & 40 & 48.4 & 50.2 & 53.4 & 51.4 & 50 & 51.2 & 52.6 & 50 & 53.8 & 56.4 & \textbf{61.4} & 58.6 & 58.8 & 57.8 & 58.2 & \textbf{61} & 57.6 & 51.8 & 49 & 46.6 \\
 & folding & 0 & 0 & 0 & 0 & 0 & 0 & 0 & 0 & 0 & 0 & 0 & 0.6 & \textbf{1} & 0 & 0.4 & 0 & \textbf{0.8} & 0 & 0 & 0 & 0 \\
 & quantum & 16 & 16.2 & 16 & 16 & 16 & 16 & 14.6 & 14.6 & 14.4 & 16.4 & 16.4 & 16.4 & 16.4 & 16.2 & 16 & 16 & 16.4 & \textbf{17} & \textbf{17} & \textbf{17} & \textbf{17} \\
 & recharging & \textbf{9} & 8.2 & 5.2 & 5.4 & 6.4 & 5.8 & 6.2 & 6.6 & 6.6 & 6.4 & 6.6 & 6.4 & 7 & 5.8 & 7.8 & 7.6 & 6.4 & 8 & \textbf{9} & \textbf{9} & \textbf{9} \\
 & ricochet & 1 & 4.8 & 0.8 & 1.4 & 0.8 & 1.6 & 0.8 & 1.4 & 1.6 & 3 & 4 & 4 & 5 & 5.2 & 5 & \textbf{6.6} & 4.8 & 3.2 & \textbf{5.8} & 3.6 & 3.4 \\
 & rubiks & 4 & 5 & 3.8 & 3.8 & 3.8 & 3.8 & 3.8 & 3.8 & 3.8 & 5 & 5 & 4.8 & 4.8 & \textbf{5.8} & \textbf{5.2} & 4.6 & 5 & 4.2 & 5 & 4.2 & 4.2 \\
 & \textit{ipc23} & 30 & 34.2 & 25.8 & 26.6 & 27 & 27.2 & 25.4 & 26.4 & 26.4 & 30.8 & 32 & 32.2 & 34.2 & 33 & 34.4 & \textbf{34.8} & 33.4 & 32.4 & \textbf{36.8} & 33.8 & 33.6 \\
 & \textit{$\ce$ total} & 70 & 82.6 & 76 & 80 & 78.4 & 77.2 & 76.6 & 79 & 76.4 & 84.6 & 88.4 & \textbf{93.6} & 92.8 & 91.8 & 92.2 & 93 & \textbf{94.4} & 90 & 88.6 & 82.8 & 80.2 \\
\midrule & agricola & 0 & \textbf{12} & 9.2 & 10 & 10 & 10 & 10 & 9 & 5.2 & 9.4 & 10 & 10 & 10 & \textbf{10.4} & 8.4 & 8.4 & 8.6 & 8.8 & 10 & 7.2 & 7.2 \\
 & caldera & 4 & \textbf{3.6} & 2 & 2 & 2 & 2 & 2 & 2 & 2 & 2 & 2 & 2 & 2 & 2 & 2 & 2 & 2.2 & 2 & 2 & 2 & 3.4 \\
 & data-net & 1 & \textbf{6} & 2 & 2 & 2.4 & \textbf{3} & 2.8 & 2.4 & 2.4 & 2.6 & 2.6 & 2.4 & 2.8 & 2 & 1.2 & 1.6 & 2.4 & 1.4 & 2.6 & \textbf{3} & 2.2 \\
 & flashfill & 0 & \textbf{3.8} & 0 & 0 & 0 & 0 & 0 & 0 & 0 & 1.2 & 1.2 & 1.2 & 1.2 & 1.2 & 1.2 & 1.2 & 1.2 & 0.6 & 0.2 & 1 & \textbf{2} \\
 & nurikabe & 7 & 8 & 9.8 & 8.6 & 9.2 & 9.6 & \textbf{10.6} & 9.8 & 9.8 & 7.8 & 9.6 & 9.8 & 9.4 & \textbf{10} & 9.8 & \textbf{10} & 8.8 & 8.2 & 7.4 & 7 & 7 \\
 & org-syn & \textbf{4} & \textbf{4} & \textbf{4} & \textbf{4} & \textbf{4} & \textbf{4} & \textbf{4} & \textbf{4} & \textbf{4} & \textbf{4} & \textbf{4} & \textbf{4} & \textbf{4} & \textbf{4} & \textbf{4} & \textbf{4} & \textbf{4.2} & \textbf{4} & \textbf{4} & \textbf{4} & \textbf{4} \\
 & settlers & 5 & \textbf{9} & 2 & 2 & 2.4 & 2 & 3 & 1.4 & 2.6 & 6.6 & 5.6 & 6.6 & 5.6 & 2.8 & 3.2 & 2.6 & 5 & 2.8 & \textbf{6.8} & 5.4 & 6.6 \\
 & snake & 3 & 2.8 & 8.8 & 9.8 & 9.4 & 8.6 & 7.8 & 7.2 & 6.4 & 9.8 & 11.2 & 11.6 & 13.6 & 14.6 & 14.2 & \textbf{15} & 13 & \textbf{16.4} & 10.8 & 6 & 3.8 \\
 & spider & \textbf{5} & 4.8 & 3.6 & 3.8 & 4.2 & 4.2 & 4.4 & 4.6 & 4.4 & 4.6 & 4.2 & 4.2 & 4 & 4.4 & 4.6 & 4.6 & 4 & 4 & \textbf{5} & 4.6 & \textbf{5} \\
 & termes & 6 & 5.2 & 1 & 1 & 1 & 1 & 1 & 1 & 1 & 8.2 & 8 & 7.8 & 7.6 & \textbf{8.4} & 7.6 & 7.8 & 8.2 & 3.4 & 7 & \textbf{8.4} & 7.6 \\
 & \textit{ipc18} & 35 & 59.2 & 42.4 & 43.2 & 44.6 & 44.4 & 45.6 & 41.4 & 37.8 & 56.2 & 58.4 & 59.6 & \textbf{60.2} & \textbf{59.8} & 56.2 & 57.2 & 57.6 & 51.6 & 55.8 & 48.6 & 48.8 \\
 & folding & 3 & 3.2 & 2.8 & 2.8 & 2.8 & 3.2 & 3.8 & 2.8 & 3.4 & 8.8 & 9 & 9 & \textbf{10.6} & 9.6 & 8.6 & 8.4 & \textbf{9.8} & 5 & 8.8 & 9 & 7.6 \\
 & labyrinth & 0 & 0 & \textbf{0.2} & 0 & \textbf{0.2} & 0 & 0 & 0 & 0 & \textbf{0.2} & \textbf{0.2} & \textbf{0.2} & 0 & 0 & 0 & \textbf{0.2} & \textbf{0.2} & \textbf{0.2} & 0 & 0 & 0 \\
 & quantum & \textbf{15} & 13.4 & 8.2 & 8 & 9.8 & 9 & 9.2 & 9.4 & 10.2 & 13.6 & 12.4 & 12.8 & 13.4 & 12.6 & 11.2 & 11 & 13.4 & 13 & 14 & \textbf{14.8} & 14.2 \\
 & recharging & \textbf{12} & \textbf{13} & 10 & 10.4 & 10.2 & 10.2 & 11.4 & 10.6 & 10 & 11.4 & 11.6 & 11.2 & 11.6 & 11.4 & 11.2 & 10.6 & 11.2 & 11.6 & 11.8 & 11.8 & 11.4 \\
 & ricochet & 2 & 5 & 0.8 & 1.2 & 1.2 & 1.4 & 0.6 & 1 & 1 & 3.4 & 4.2 & 5.8 & 6.2 & 6.4 & \textbf{7.2} & \textbf{6.8} & 6 & 1.2 & 4.4 & 5.2 & 4.2 \\
 & rubiks & 4 & 4.8 & 3 & 3 & 3 & 3 & 3 & 3 & 3 & 4.2 & 4.2 & 3 & 4 & \textbf{5.4} & 5.2 & 5 & \textbf{5.8} & 3.2 & 3.2 & 3.2 & 4.8 \\
 & \textit{ipc23} & 36 & 39.4 & 25 & 25.4 & 27.2 & 26.8 & 28 & 26.8 & 27.6 & 41.6 & 41.6 & 42 & \textbf{45.8} & 45.4 & 43.4 & 42 & \textbf{46.4} & 34.2 & 42.2 & 44 & 42.2 \\
 & \textit{$\cg$ total} & 71 & 98.6 & 67.4 & 68.6 & 71.8 & 71.2 & 73.6 & 68.2 & 65.4 & 97.8 & 100 & 101.6 & \textbf{106} & \textbf{105.2} & 99.6 & 99.2 & 104 & 85.8 & 98 & 92.6 & 91 \\
\midrule & agricola & 4 & 9.6 & 10 & 9.8 & 10 & 10 & 10 & 9.8 & 9.6 & 12 & 11.4 & \textbf{12.6} & \textbf{12.4} & 11.6 & 11.4 & 10 & 11.6 & 11.8 & 9 & 9.6 & 7 \\
 & caldera & 4 & 6.4 & 6.4 & 6.6 & 6.6 & 7 & 6.8 & 6.8 & 6.6 & 6.8 & 6.8 & 6.2 & 6 & 6.6 & 6.4 & 6.6 & 7 & 6.2 & \textbf{7.2} & \textbf{7.2} & 6.2 \\
 & data-net & 4 & 8 & \textbf{8.6} & 8.4 & 8.4 & 7.8 & 8.4 & 7.6 & \textbf{8.8} & 4.8 & 5.4 & 4.2 & 5.2 & 4.4 & 4.6 & 3.6 & 4.8 & 3 & 5.4 & 6.4 & 6.4 \\
 & flashfill & 9 & 9.2 & 7.2 & 7.6 & 7.4 & 7.6 & 7.4 & 7.6 & 7.8 & 7.4 & 7.6 & 7.6 & 8 & 7.6 & 7.6 & 8.2 & 7.8 & \textbf{11} & \textbf{10.8} & 8.4 & 9.8 \\
 & nurikabe & 7 & 7 & 7.8 & 6.2 & 7.4 & 6.8 & 6.6 & 7 & 6.8 & \textbf{8} & \textbf{8} & 7.8 & 7.6 & 7.4 & 7.6 & 7.4 & \textbf{8} & 6.2 & 7 & 7 & 7 \\
 & org-syn & 9 & 9.2 & \textbf{10} & \textbf{10} & 9 & 9 & 9 & 9.4 & 9 & 9.6 & 8 & 7 & 8 & 8.2 & 8 & 8 & 9.6 & 8 & 8 & 9 & 9 \\
 & settlers & 0 & 5.6 & 3.6 & 3.4 & 2.6 & 3.6 & 3 & 4.4 & 3.8 & 4.2 & 4.4 & 5.8 & 5 & \textbf{6} & \textbf{7} & 5.2 & 3 & 2 & 2.2 & 1.8 & 0.2 \\
 & snake & 5 & 5 & 14.2 & 16 & 16 & 16.6 & 14.6 & 15 & 14 & 12.6 & 14.4 & 15.2 & 16.2 & \textbf{17} & 16.6 & 15 & \textbf{16.8} & 16.2 & 12.8 & 11.6 & 8.6 \\
 & spider & 6 & 8.8 & 8.2 & 9.4 & 8 & \textbf{9.6} & 7.8 & 9 & 9 & 8.8 & 8.2 & 9.2 & 8.6 & 8.8 & 8.8 & 9.2 & 8 & \textbf{10.6} & 8.2 & 8 & 7 \\
 & termes & 12 & 11.4 & 3.8 & 4.6 & 4.4 & 4.8 & 4.2 & 4.8 & 4.2 & 13.2 & 13 & 13 & 13.4 & 13 & 13.4 & \textbf{13.8} & \textbf{13.6} & 10.8 & 12.8 & 13.2 & 11.6 \\
 & \textit{ipc18} & 60 & 80.2 & 79.8 & 82 & 79.8 & 82.8 & 77.8 & 81.4 & 79.6 & 87.4 & 87.2 & 88.6 & 90.4 & \textbf{90.6} & \textbf{91.4} & 87 & 90.2 & 85.8 & 83.4 & 82.2 & 72.8 \\
 & folding & 1 & 1 & 0 & 0.4 & 0.4 & 0.8 & 1 & 0.8 & 0.8 & 5 & 5.2 & \textbf{5.8} & \textbf{5.8} & 4.6 & 3.4 & 3.2 & 5.6 & 3.8 & 4.8 & 4.4 & 1.6 \\
 & labyrinth & 0 & 0 & \textbf{0.2} & 0 & 0 & \textbf{0.2} & \textbf{0.2} & 0 & \textbf{0.2} & \textbf{0.6} & \textbf{0.2} & \textbf{0.2} & 0 & \textbf{0.2} & \textbf{0.2} & \textbf{0.2} & \textbf{0.2} & 0 & 0 & 0 & 0 \\
 & quantum & 17 & \textbf{18} & 11.4 & 11.2 & 14.4 & 13 & 12 & 12.4 & 13 & 17.6 & 17.4 & 17.2 & 16.8 & 16.2 & 15.2 & 15.6 & 17.4 & 16.4 & 17.4 & \textbf{18} & \textbf{18.8} \\
 & recharging & \textbf{9} & \textbf{9.6} & 5.6 & 5.6 & 5 & 5.6 & 5 & 4.6 & 5.2 & 5.8 & 5.4 & 5 & 5.8 & 5.8 & 6.4 & 6.6 & 6.2 & 6.6 & 7 & 8.2 & 8.6 \\
 & ricochet & 4 & 6.8 & 1.4 & 2.2 & 2.4 & 2.6 & 2.6 & 2 & 1.4 & 9.4 & 8.6 & \textbf{9.6} & \textbf{9.6} & 9.4 & \textbf{9.8} & 9.4 & 9 & 2.6 & 7.6 & 9.4 & \textbf{9.6} \\
 & rubiks & \textbf{20} & \textbf{20} & 5.2 & 5.2 & 5 & 5 & 5 & 5 & 5 & 19.8 & \textbf{20} & \textbf{20} & \textbf{20} & \textbf{20} & \textbf{20} & \textbf{20} & \textbf{20} & 16 & \textbf{20} & \textbf{20} & \textbf{20} \\
 & \textit{ipc23} & 51 & 55.4 & 23.8 & 24.6 & 27.2 & 27.2 & 25.8 & 24.8 & 25.6 & 58.2 & 56.8 & 57.8 & 58 & 56.2 & 55 & 55 & 58.4 & 45.4 & 56.8 & \textbf{60} & \textbf{58.6} \\
 & \textit{$\ff$ total} & 111 & 135.6 & 103.6 & 106.6 & 107 & 110 & 103.6 & 106.2 & 105.2 & 145.6 & 144 & 146.4 & \textbf{148.4} & 146.8 & 146.4 & 142 & \textbf{148.6} & 131.2 & 140.2 & 142.2 & 131.4 \\
\midrule & \textit{$\ce+\cg+\ff$} & 252 & 316.8 & 247 & 255.2 & 257.2 & 258.4 & 253.8 & 253.4 & 247 & 328 & 332.4 & 341.6 & \textbf{347.2} & 343.8 & 338.2 & 334.2 & \textbf{347} & 307 & 326.8 & 317.6 & 302.6 \\
\midrule\multirow{57}{*}{\rotatebox{90}{Agile IPC Score}} & agricola & 0.4 & 0.5 & 0.4 & 0.4 & 0.6 & 0.5 & 0.5 & 0.6 & 0.4 & 1.2 & \textbf{1.5} & 1.4 & 1.4 & 1.5 & 1.4 & 1.0 & 1.4 & \textbf{1.6} & 0.8 & 0.6 & 0.8 \\
 & caldera & 0.4 & 1.0 & 1.8 & 1.7 & 1.9 & 1.7 & 1.8 & \textbf{2.3} & 1.7 & 1.7 & 2.0 & 1.8 & 1.5 & 1.8 & 1.6 & 1.8 & 1.6 & 1.6 & \textbf{2.1} & 1.8 & 1.1 \\
 & data-net & 2.2 & 3.6 & 4.4 & 4.0 & 4.1 & 4.4 & \textbf{4.4} & 4.1 & \textbf{4.7} & 2.4 & 2.6 & 2.4 & 2.6 & 2.3 & 2.2 & 2.9 & 2.6 & 2.0 & 3.2 & 2.9 & 2.1 \\
 & flashfill & 4.8 & 4.3 & 5.1 & 5.5 & 5.5 & 5.6 & 5.4 & 5.3 & 4.7 & \textbf{5.6} & 5.3 & 5.5 & 5.3 & 5.3 & \textbf{5.7} & 5.1 & 5.4 & 5.1 & 4.8 & 4.6 & 4.7 \\
 & nurikabe & \textbf{6.4} & 6.1 & 5.9 & 5.9 & 5.5 & 6.0 & 6.0 & 6.2 & \textbf{6.4} & 4.9 & 5.1 & 5.4 & 4.6 & 5.2 & 6.1 & 6.2 & 5.3 & 5.4 & 4.7 & 5.9 & 6.4 \\
 & org-syn & 1.6 & 1.7 & 1.7 & 1.6 & 1.8 & 1.7 & 1.9 & 1.9 & 1.7 & \textbf{2.3} & 2.0 & 1.8 & \textbf{2.3} & 1.9 & 1.8 & 1.9 & 2.2 & 2.2 & 1.6 & 1.9 & 1.7 \\
 & settlers & 1.4 & \textbf{2.5} & 1.8 & 1.9 & 2.2 & 1.7 & 2.0 & 1.7 & 2.5 & 2.2 & 1.9 & \textbf{2.5} & 2.2 & 2.1 & 1.4 & 1.4 & 2.4 & 0.7 & 2.2 & 1.9 & 1.3 \\
 & snake & 0.0 & 0.0 & 1.7 & 1.9 & 2.1 & 1.1 & 0.9 & 1.0 & 0.7 & 0.9 & 1.8 & \textbf{2.3} & 2.1 & 1.9 & 1.5 & 1.1 & 1.8 & \textbf{2.4} & 0.9 & 0.3 & 0.0 \\
 & spider & 0.0 & 0.2 & 0.4 & 0.1 & 0.4 & 0.3 & 0.2 & 0.5 & 0.2 & 0.0 & 0.0 & 0.5 & \textbf{0.7} & \textbf{0.7} & 0.3 & 0.4 & 0.5 & 0.5 & 0.3 & 0.0 & 0.2 \\
 & termes & 0.5 & 0.7 & 0.7 & 0.5 & 0.6 & 0.7 & 0.5 & 0.5 & 0.4 & 1.1 & 1.2 & 1.4 & 0.8 & 1.4 & 1.4 & \textbf{1.6} & 1.2 & \textbf{1.7} & 1.5 & 0.8 & 0.5 \\
 & \textit{ipc18} & 17.7 & 20.6 & 23.8 & 23.4 & \textbf{24.5} & 23.8 & 23.7 & 24.1 & 23.4 & 22.4 & 23.4 & \textbf{25.2} & 23.6 & 23.9 & 23.3 & 23.4 & 24.4 & 23.1 & 22.1 & 20.8 & 18.8 \\
 & folding & 0.0 & 0.0 & 0.0 & 0.0 & 0.0 & 0.0 & 0.0 & 0.0 & 0.0 & 0.0 & 0.0 & \textbf{0.2} & 0.1 & 0.0 & 0.0 & 0.0 & \textbf{0.2} & 0.0 & 0.0 & 0.0 & 0.0 \\
 & quantum & 14.5 & 14.8 & 14.8 & 14.9 & 14.9 & 15.0 & 13.6 & 13.7 & 13.4 & 15.3 & 15.4 & 15.3 & 15.3 & 14.9 & 15.1 & 15.1 & 15.1 & \textbf{15.9} & \textbf{15.7} & 15.3 & 14.7 \\
 & recharging & \textbf{4.3} & 3.5 & 2.4 & 2.4 & 3.0 & 2.7 & 3.0 & 3.2 & 3.2 & 3.4 & 3.3 & 3.5 & 3.6 & 3.1 & 3.8 & 3.7 & 3.3 & 3.9 & 4.3 & 4.2 & \textbf{4.3} \\
 & ricochet & 0.3 & 1.3 & 0.4 & 0.4 & 0.6 & 0.8 & 0.4 & 0.6 & 0.4 & 1.1 & 1.5 & 1.3 & \textbf{2.4} & 1.8 & \textbf{2.2} & 2.2 & 1.6 & 1.3 & 2.0 & 1.2 & 0.7 \\
 & rubiks & 4.0 & \textbf{4.5} & 3.8 & 3.8 & 3.8 & 3.8 & 3.8 & 3.8 & 3.8 & 3.9 & 4.1 & 4.1 & 3.9 & 4.2 & 4.1 & 4.1 & 4.0 & 4.0 & \textbf{4.4} & 4.0 & 4.0 \\
 & \textit{ipc23} & 23.1 & 24.1 & 21.5 & 21.5 & 22.3 & 22.3 & 20.9 & 21.3 & 20.8 & 23.8 & 24.3 & 24.3 & \textbf{25.3} & 23.9 & 25.2 & 25.0 & 24.2 & 25.0 & \textbf{26.3} & 24.8 & 23.7 \\
 & \textit{$\ce$ total} & 40.8 & 44.7 & 45.3 & 44.9 & 46.9 & 46.1 & 44.5 & 45.3 & 44.2 & 46.2 & 47.7 & \textbf{49.5} & \textbf{48.9} & 47.8 & 48.6 & 48.4 & 48.6 & 48.1 & 48.4 & 45.6 & 42.5 \\
\midrule & agricola & 0.0 & \textbf{2.9} & 1.9 & 2.3 & \textbf{2.5} & 2.3 & 2.2 & 1.9 & 1.1 & 1.8 & 1.9 & 2.2 & 2.3 & 2.1 & 1.8 & 1.9 & 1.6 & 1.7 & 1.8 & 1.2 & 1.4 \\
 & caldera & 2.5 & \textbf{2.7} & 1.8 & 1.8 & 1.9 & 1.9 & 1.7 & 1.8 & 1.7 & 1.7 & 2.0 & 1.8 & 1.7 & 1.8 & 1.8 & 1.7 & 1.7 & 1.9 & 1.8 & 1.8 & \textbf{2.6} \\
 & data-net & 0.5 & \textbf{3.5} & 1.1 & 0.9 & 0.8 & 1.5 & 1.3 & 1.1 & 1.1 & 1.2 & 1.5 & 1.1 & 1.5 & 0.8 & 0.7 & 0.9 & 1.3 & 0.5 & 1.2 & \textbf{1.9} & 1.1 \\
 & flashfill & 0.0 & \textbf{1.3} & 0.0 & 0.0 & 0.0 & 0.0 & 0.0 & 0.0 & 0.0 & 0.5 & 0.5 & 0.5 & 0.6 & 0.5 & 0.5 & 0.5 & 0.4 & 0.2 & 0.1 & 0.3 & \textbf{1.0} \\
 & nurikabe & 5.3 & 6.2 & 6.9 & 6.1 & 6.8 & 7.0 & \textbf{7.5} & 7.3 & \textbf{7.6} & 6.2 & 6.9 & 7.1 & 6.7 & 6.9 & 6.8 & 6.8 & 6.7 & 6.3 & 5.6 & 5.5 & 5.5 \\
 & org-syn & 1.5 & 1.6 & 1.7 & 1.6 & 1.6 & 1.8 & 1.6 & \textbf{1.9} & 1.7 & 1.6 & 1.4 & 1.5 & 1.7 & 1.6 & \textbf{1.8} & 1.6 & 1.4 & 1.5 & 1.5 & 1.6 & 1.5 \\
 & settlers & 2.2 & \textbf{3.9} & 1.1 & 1.0 & 1.3 & 1.1 & 1.5 & 0.8 & 1.4 & 2.3 & 1.9 & 2.4 & 2.3 & 1.0 & 1.1 & 1.3 & 1.8 & 1.0 & \textbf{2.8} & 2.2 & 2.4 \\
 & snake & 1.5 & 0.9 & 3.9 & 4.2 & 4.1 & 3.9 & 3.7 & 3.3 & 2.7 & 4.2 & 5.5 & 6.0 & 6.9 & \textbf{6.9} & 6.7 & 6.4 & 6.5 & \textbf{7.3} & 4.4 & 2.2 & 1.4 \\
 & spider & \textbf{2.1} & 1.1 & 0.9 & 1.0 & 1.3 & 1.3 & 1.3 & 1.5 & 1.3 & \textbf{2.0} & 1.5 & 1.1 & 1.6 & 1.6 & 1.7 & 1.6 & 1.9 & 1.6 & 1.8 & 1.7 & 1.7 \\
 & termes & 2.9 & 3.2 & 0.7 & 0.7 & 0.7 & 0.8 & 0.7 & 0.8 & 1.0 & 4.4 & 4.5 & \textbf{4.9} & 4.7 & \textbf{5.5} & 4.9 & 4.2 & 4.6 & 2.0 & 4.0 & 4.7 & 3.2 \\
 & \textit{ipc18} & 18.4 & 27.2 & 19.9 & 19.7 & 21.1 & 21.6 & 21.5 & 20.4 & 19.6 & 26.1 & 27.6 & 28.6 & \textbf{30.0} & \textbf{28.7} & 27.7 & 26.8 & 28.0 & 23.9 & 25.1 & 23.3 & 21.8 \\
 & folding & 1.2 & 1.6 & 0.9 & 1.0 & 1.0 & 1.2 & 1.2 & 0.8 & 1.2 & \textbf{3.4} & 3.1 & 3.1 & 3.3 & 3.2 & 3.1 & 2.8 & 3.3 & 2.2 & 3.3 & \textbf{3.4} & 2.8 \\
 & labyrinth & 0.0 & 0.0 & 0.0 & 0.0 & 0.0 & 0.0 & 0.0 & 0.0 & 0.0 & \textbf{0.0} & 0.0 & 0.0 & 0.0 & 0.0 & 0.0 & \textbf{0.0} & 0.0 & 0.0 & 0.0 & 0.0 & 0.0 \\
 & quantum & \textbf{11.9} & 11.2 & 7.2 & 7.2 & 9.0 & 8.0 & 8.1 & 8.2 & 9.6 & 11.5 & 10.1 & 10.4 & 11.2 & 10.3 & 9.7 & 9.4 & 10.5 & 10.5 & 10.8 & 10.1 & \textbf{11.7} \\
 & recharging & \textbf{6.1} & \textbf{6.4} & 4.9 & 5.3 & 5.2 & 5.4 & 5.3 & 5.2 & 4.8 & 5.7 & 5.6 & 5.6 & 5.7 & 5.6 & 5.4 & 5.0 & 5.6 & 5.7 & 5.7 & 5.8 & 5.9 \\
 & ricochet & 0.4 & 1.7 & 0.5 & 0.4 & 0.4 & 0.7 & 0.4 & 0.5 & 0.4 & 1.7 & 1.6 & 2.8 & 2.5 & 2.6 & \textbf{3.6} & 2.1 & \textbf{2.9} & 0.5 & 1.7 & 1.7 & 0.9 \\
 & rubiks & \textbf{4.0} & 4.0 & 3.0 & 3.0 & 3.0 & 3.0 & 3.0 & 3.0 & 3.0 & 3.1 & 3.3 & 3.0 & 3.2 & 3.9 & 3.4 & 3.5 & \textbf{4.2} & 3.1 & 3.0 & 3.0 & 3.4 \\
 & \textit{ipc23} & 23.6 & 24.8 & 16.4 & 16.9 & 18.6 & 18.4 & 18.0 & 17.8 & 19.0 & 25.5 & 23.7 & 24.9 & \textbf{25.9} & 25.7 & 25.3 & 22.9 & \textbf{26.5} & 22.0 & 24.4 & 24.0 & 24.6 \\
 & \textit{$\cg$ total} & 42.0 & 52.1 & 36.3 & 36.6 & 39.7 & 40.0 & 39.5 & 38.1 & 38.5 & 51.6 & 51.3 & 53.5 & \textbf{55.8} & 54.4 & 52.9 & 49.7 & \textbf{54.6} & 45.9 & 49.5 & 47.3 & 46.5 \\
\midrule & agricola & 0.9 & 1.4 & 1.7 & 1.7 & 1.9 & 1.8 & 1.9 & 1.7 & 1.4 & \textbf{3.2} & 2.8 & 2.9 & \textbf{2.9} & 2.5 & 2.5 & 1.8 & 2.8 & 2.6 & 1.7 & 1.4 & 0.8 \\
 & caldera & 2.9 & 4.6 & 5.4 & 5.6 & 5.8 & 5.6 & 5.7 & 5.4 & 5.2 & \textbf{6.0} & 5.7 & 5.5 & 5.2 & 5.6 & 5.6 & 5.8 & 5.7 & 5.4 & \textbf{5.9} & 5.8 & 4.9 \\
 & data-net & 3.4 & 5.1 & 5.9 & \textbf{6.0} & 6.0 & 5.5 & \textbf{6.1} & 5.3 & 5.6 & 3.3 & 3.6 & 3.1 & 3.2 & 2.9 & 2.8 & 2.7 & 3.5 & 2.4 & 3.4 & 4.2 & 4.4 \\
 & flashfill & 4.4 & 4.4 & 3.4 & 4.0 & 3.9 & 3.9 & 3.8 & 3.8 & 3.6 & 3.8 & 3.5 & 3.7 & 3.5 & 3.7 & 3.4 & 3.5 & 3.5 & \textbf{5.8} & \textbf{5.5} & 4.1 & 4.1 \\
 & nurikabe & 5.4 & 5.1 & \textbf{5.8} & 4.8 & 5.5 & 5.4 & 5.4 & 5.3 & 5.3 & 5.5 & 5.4 & 5.4 & 5.2 & 5.7 & 5.6 & 5.7 & \textbf{5.7} & 5.1 & 5.4 & 5.4 & 5.4 \\
 & org-syn & 3.7 & 3.5 & 3.6 & \textbf{4.3} & 3.7 & 3.8 & 3.6 & 3.6 & 3.4 & 3.3 & 3.4 & 3.3 & 3.5 & 3.6 & 3.7 & 3.7 & 3.5 & 3.4 & \textbf{3.9} & 3.6 & 3.6 \\
 & settlers & 0.0 & 2.2 & 1.9 & 1.8 & 1.5 & 1.6 & 1.6 & 1.7 & 1.8 & 1.5 & 1.6 & 2.2 & 2.5 & \textbf{2.8} & \textbf{3.1} & 2.6 & 1.1 & 1.2 & 0.6 & 0.4 & 0.0 \\
 & snake & 2.0 & 2.5 & 6.8 & 8.0 & 7.6 & 7.4 & 6.9 & 6.7 & 6.1 & 6.2 & 7.4 & 7.5 & \textbf{8.0} & 7.9 & 7.6 & 6.5 & \textbf{8.4} & 7.8 & 6.2 & 5.3 & 3.8 \\
 & spider & 1.5 & 2.9 & 3.3 & 3.0 & 3.1 & 3.3 & 3.0 & 3.2 & \textbf{3.5} & 2.4 & 2.9 & 3.1 & 2.8 & 2.7 & 2.8 & 3.0 & 3.0 & \textbf{3.6} & 2.9 & 2.3 & 2.4 \\
 & termes & 6.2 & 6.0 & 2.4 & 2.3 & 2.7 & 2.8 & 2.1 & 3.0 & 2.0 & 7.8 & 8.0 & 8.0 & 7.9 & 8.3 & \textbf{9.3} & \textbf{8.6} & 8.2 & 7.1 & 8.2 & 7.1 & 6.2 \\
 & \textit{ipc18} & 30.4 & 37.7 & 40.4 & 41.4 & 41.7 & 41.1 & 40.1 & 39.8 & 38.0 & 43.0 & 44.2 & 44.6 & 44.9 & \textbf{45.7} & \textbf{46.4} & 43.9 & 45.5 & 44.3 & 43.8 & 39.5 & 35.6 \\
 & folding & 0.1 & 0.1 & 0.0 & 0.1 & 0.1 & 0.1 & 0.2 & 0.2 & 0.2 & \textbf{1.8} & 1.6 & 1.6 & 1.4 & 1.1 & 1.0 & 0.5 & 1.5 & 1.4 & \textbf{1.9} & 1.1 & 0.3 \\
 & labyrinth & 0.0 & 0.0 & \textbf{0.0} & 0.0 & 0.0 & 0.0 & 0.0 & 0.0 & 0.0 & \textbf{0.0} & 0.0 & 0.0 & 0.0 & 0.0 & 0.0 & 0.0 & 0.0 & 0.0 & 0.0 & 0.0 & 0.0 \\
 & quantum & 13.8 & \textbf{15.4} & 10.9 & 10.8 & 12.5 & 11.9 & 11.0 & 11.5 & 12.1 & 14.8 & 14.2 & 14.7 & 14.1 & 13.4 & 13.0 & 13.1 & 14.2 & 14.7 & 14.5 & \textbf{15.2} & 14.6 \\
 & recharging & \textbf{3.6} & \textbf{4.0} & 2.4 & 2.3 & 2.6 & 2.3 & 2.4 & 2.3 & 2.5 & 2.4 & 2.4 & 2.7 & 2.7 & 2.6 & 2.7 & 3.0 & 2.7 & 2.3 & 2.9 & 2.9 & 3.4 \\
 & ricochet & 1.0 & 2.7 & 0.6 & 1.0 & 1.3 & 1.3 & 1.3 & 1.1 & 0.8 & 3.7 & 4.4 & 4.5 & \textbf{5.1} & \textbf{5.8} & 4.9 & 4.4 & 4.5 & 1.1 & 4.3 & 3.9 & 3.5 \\
 & rubiks & \textbf{12.5} & \textbf{13.3} & 5.0 & 5.0 & 5.0 & 5.0 & 5.0 & 5.0 & 5.0 & 10.2 & 10.2 & 10.5 & 11.2 & 11.5 & 10.6 & 11.0 & 10.9 & 7.9 & 10.3 & 9.6 & 11.7 \\
 & \textit{ipc23} & 31.0 & \textbf{35.5} & 19.0 & 19.1 & 21.5 & 20.6 & 19.9 & 20.0 & 20.6 & 32.9 & 32.7 & 33.8 & \textbf{34.4} & 34.4 & 32.2 & 32.0 & 33.7 & 27.3 & 34.0 & 32.7 & 33.4 \\
 & \textit{$\ff$ total} & 61.4 & 73.2 & 59.4 & 60.6 & 63.2 & 61.7 & 60.0 & 59.9 & 58.5 & 75.9 & 76.9 & 78.4 & \textbf{79.4} & \textbf{80.1} & 78.6 & 75.9 & 79.2 & 71.6 & 77.7 & 72.2 & 69.1 \\
\midrule & \textit{$\ce+\cg+\ff$} & 144.2 & 169.9 & 141.0 & 142.1 & 149.7 & 147.8 & 144.0 & 143.3 & 141.2 & 173.8 & 176.0 & 181.4 & \textbf{184.1} & 182.3 & 180.1 & 174.0 & \textbf{182.4} & 165.6 & 175.7 & 165.2 & 158.1 \\
\bottomrule
\end{tabular}
\end{adjustbox}
 \caption{
 Evaluating various MCTS enhancements with eager evaluation.
 }
 \label{tbl:base-table-domainwise}
\end{table*}

\begin{figure*}[p]
 \centering
 \includegraphics[width=.3\linewidth]{img+static+scatter-eval_sec+791760ca3a107e4fd6a7bc13d38007ac.pdf}
 \includegraphics[width=.3\linewidth]{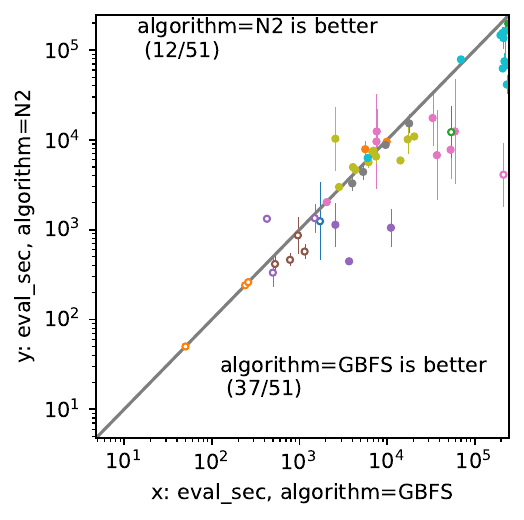}
 \includegraphics[width=.3\linewidth]{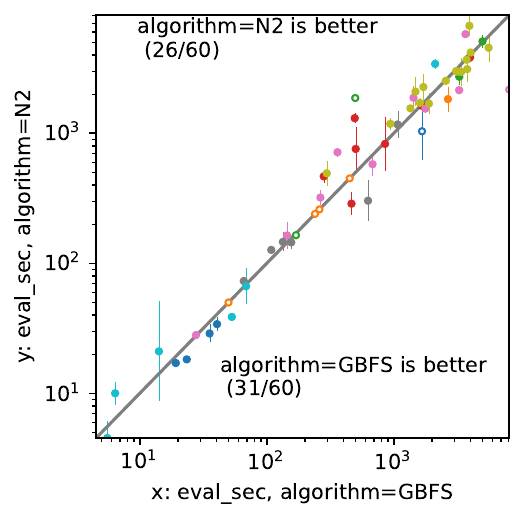}
 \caption{
 ($\ff,\cg,\ce$ results in order)
 Comparing the number of node evaluations per second on IPC instances solved by both GBFS ($x$-axis) vs. GUCTN2 ($y$-axis) within the limit.
 The points below the diagonal indicate that the latter has a significantly slower node evaluations.
 }
 \label{fig:evalsec-bilevel-more}
\end{figure*}

\begin{figure*}[p]
 \centering
 \begin{tabular}{rcc}
  &GUCTN2 &\green{Bilevel GUCTN2} \\
  \raisebox{0.11\linewidth}{$\ff$}
  &\includegraphics[height=0.24\linewidth]{img+static+depth_evalsec+5a7342ee1b712f528c7db74a3c9017a1.pdf}
      &\includegraphics[height=0.24\linewidth]{img+static+depth_evalsec+72efaa6d5cafb9d416f3cbe304f6873e.pdf} \\
  \raisebox{0.11\linewidth}{$\cg$}
  &\includegraphics[height=0.24\linewidth]{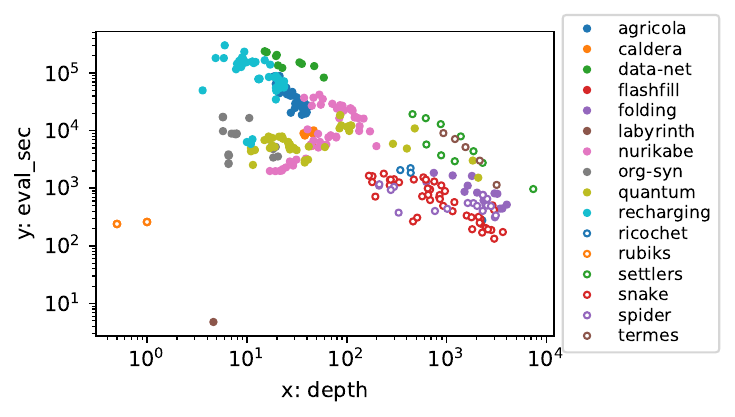}
      &\includegraphics[height=0.24\linewidth]{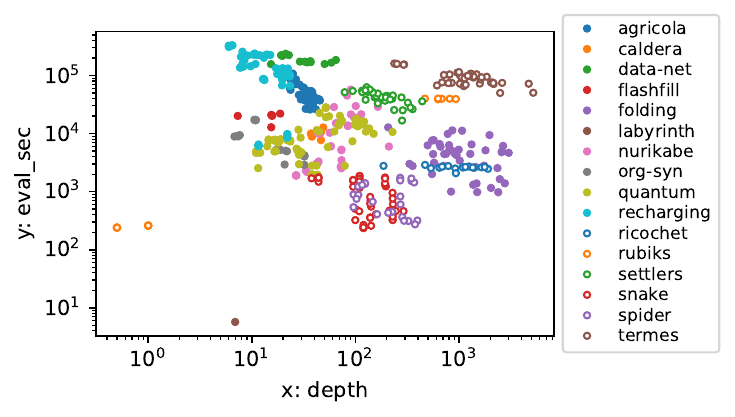} \\
  \raisebox{0.11\linewidth}{$\ce$}
  &\includegraphics[height=0.24\linewidth]{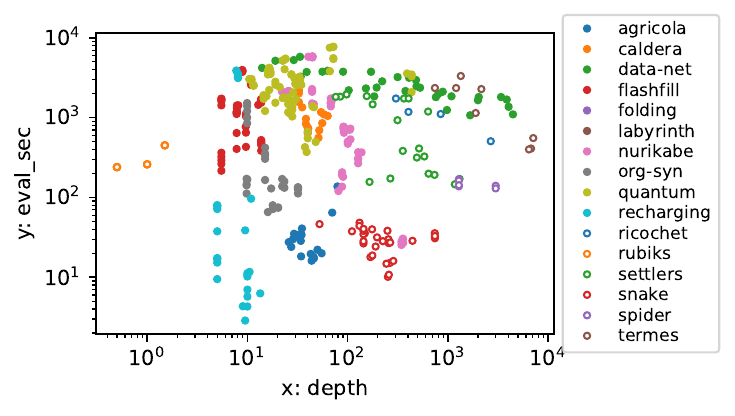}
      &\includegraphics[height=0.24\linewidth]{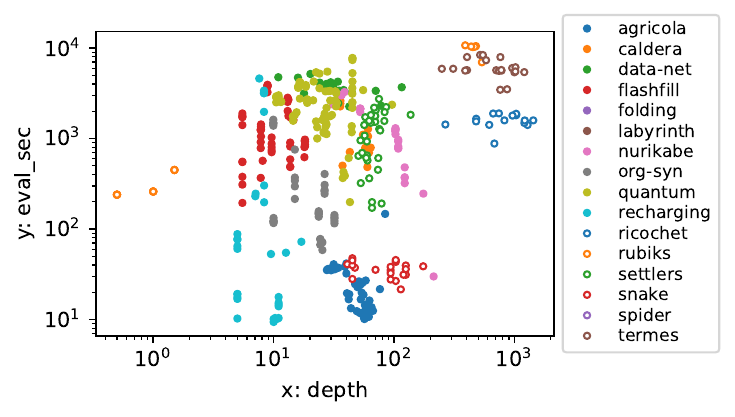} \\
 \end{tabular}
 \caption{
 ($\ff,\cg,\ce$ results from top to bottom, GUCTN2 and Bilevel GUCTN2 from left to right.)
 Log-log plots comparing the number of node evaluations per second ($y$-axis)
 versus the average depth of the nodes evaluated during the search ($x$-axis).
 }
 \label{fig:evalsec-correlation-more}
\end{figure*}

\begin{figure*}[p]
 \centering
 \includegraphics[width=0.24\linewidth]{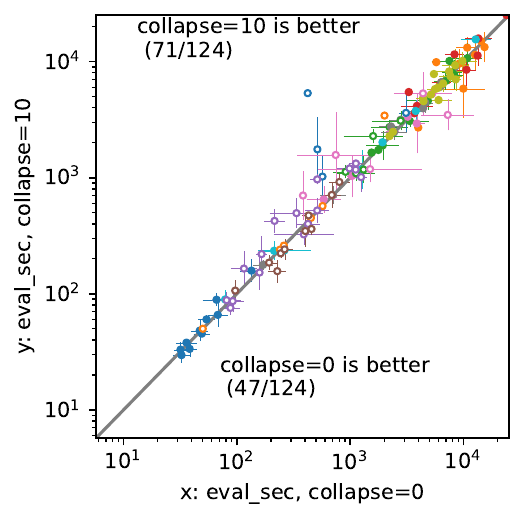}
 \includegraphics[width=0.24\linewidth]{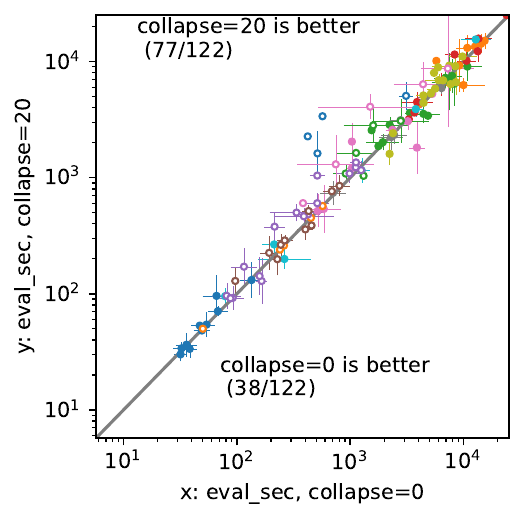}
 \includegraphics[width=0.24\linewidth]{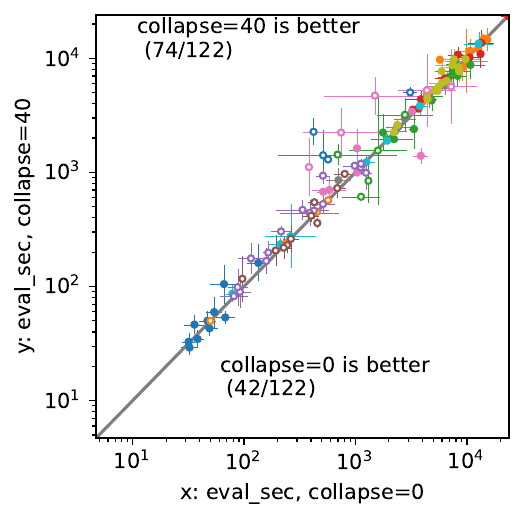}
 \includegraphics[width=0.24\linewidth]{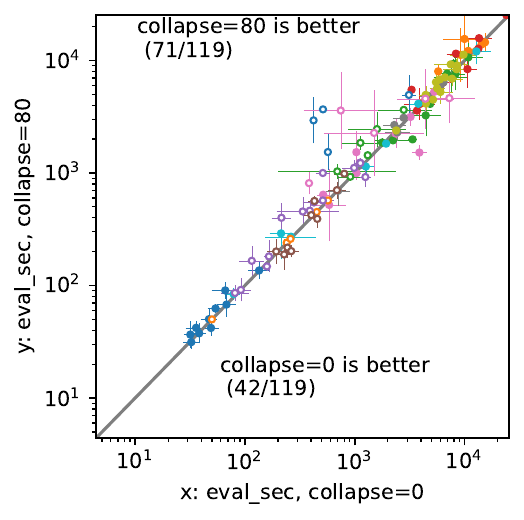}
 \\
 \includegraphics[width=0.24\linewidth]{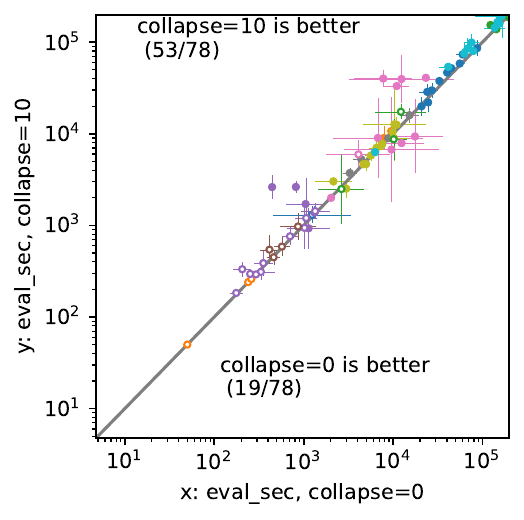}
 \includegraphics[width=0.24\linewidth]{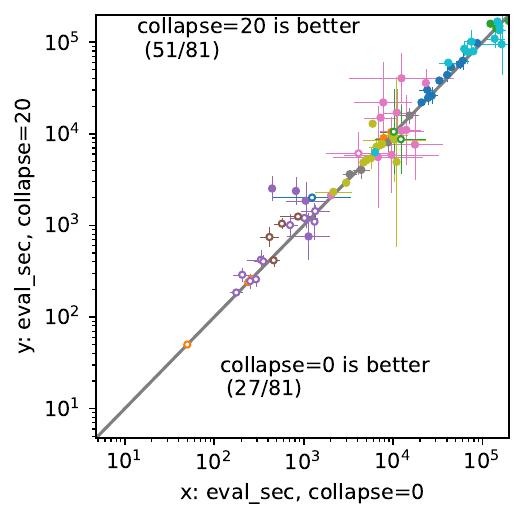}
 \includegraphics[width=0.24\linewidth]{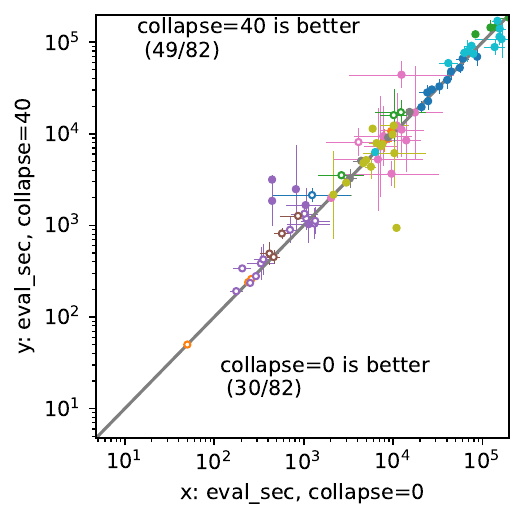}
 \includegraphics[width=0.24\linewidth]{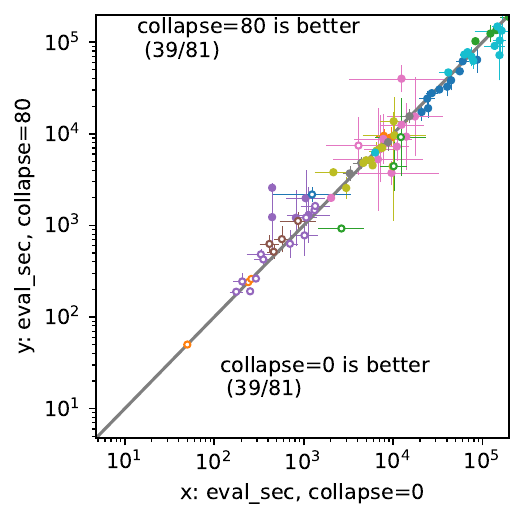}
 \\
 \includegraphics[width=0.24\linewidth]{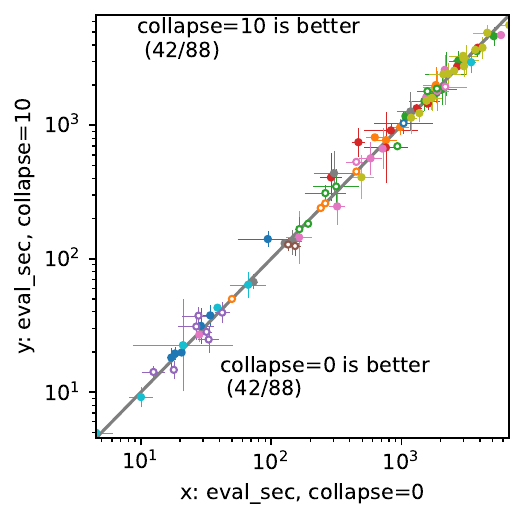}
 \includegraphics[width=0.24\linewidth]{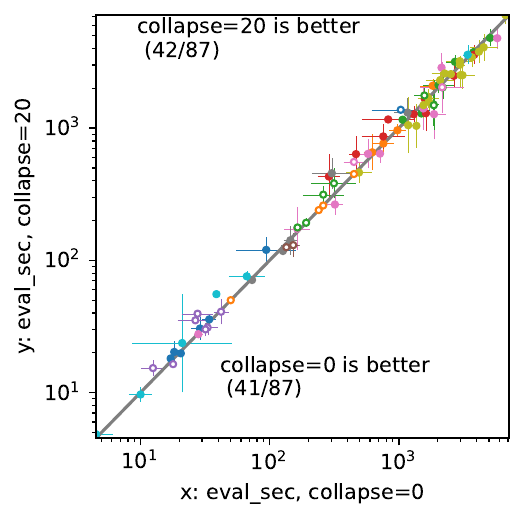}
 \includegraphics[width=0.24\linewidth]{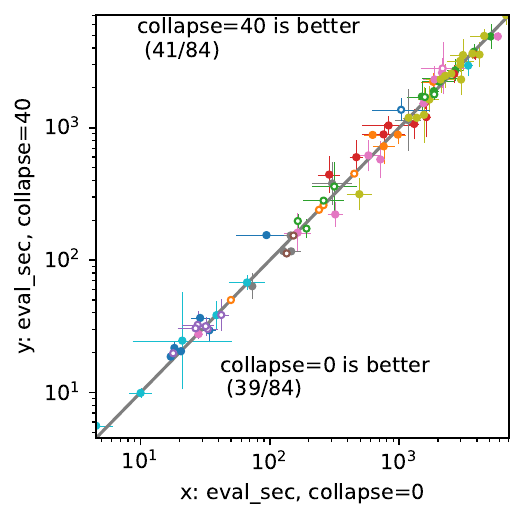}
 \includegraphics[width=0.24\linewidth]{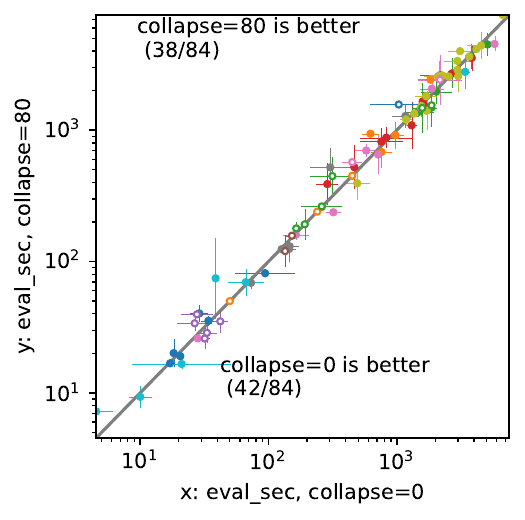}
 \\
 \includegraphics[width=0.3\linewidth]{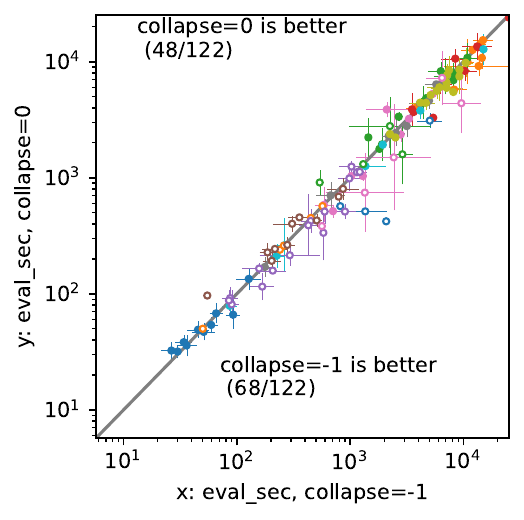}
 \includegraphics[width=0.3\linewidth]{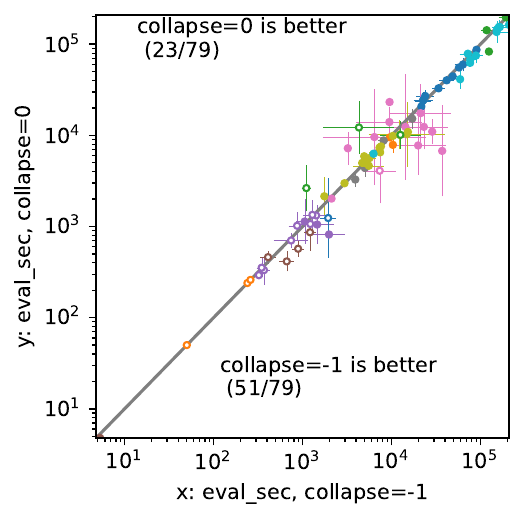}
 \includegraphics[width=0.3\linewidth]{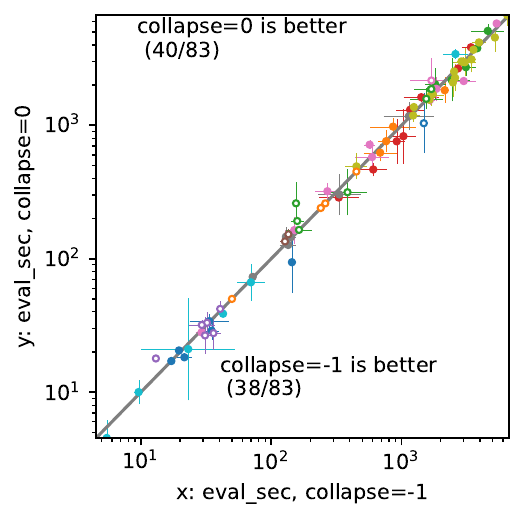}
 \caption{
 \textbf{(Top three rows)}
 Comparing the node/sec with \cyan{Tree Collapsing with $\theta\in \braces{10,20,40,80}$} (left to right) vs. $\theta=0$ (disabled) using $\ff$, $\cg$, $\ce$ (top to bottom).
 The effect is larger in the lighter heuristic ($\cg$) than in $\ff$,
 while the effect is smaller in the heavier heuristic ($\ce$).
 \textbf{(Bottom row)}
 Comparing the node/sec with \orange{Dynamic Tree Collapsing} (denoted as ``collapse=-1'' in the figure)
 vs $\theta=0$ (disabled) using $\ff$, $\cg$, $\ce$.
 The effect is larger in the lighter heuristic ($\cg$) than in $\ff$.
 The effect was insignificantly negative in the heavier heuristic ($\ce$).
 }
 \label{fig:collapse-node-sec}
\end{figure*}

\begin{table*}[p]
\centering
\begin{adjustbox}{max width=\linewidth}
\begin{tabular}{rr|rr|rr|rr|rr|}
\toprule
 &  & \multicolumn{2}{c|}{$\vf_4$}& \multicolumn{2}{c|}{$\vf_5$}& \multicolumn{2}{c|}{$\vf_6$}& \multicolumn{2}{c|}{NOLAN} \\
 &  & $w_{\max}=2$ & $V=100$ & $w_{\max}=2$ & $V=100$ & $w_{\max}=2$ & $V=100$ & $w_{\max}=2$ & $V=100$ \\
\midrule\multirow{19}{*}{\rotatebox{90}{\# solved}} & agricola & \textbf{9} & 3 & \textbf{12} & 1 & \textbf{10} & 5 & \textbf{7} & 5 \\
 & caldera & \textbf{12} & 3 & \textbf{6} & 0.0 & \textbf{12} & 2 & \textbf{11} & 4.0 \\
 & data-net & \textbf{11} & 1 & \textbf{8} & 0.0 & \textbf{10} & 1 & \textbf{8} & \textbf{8.0} \\
 & flashfill & \textbf{12} & 7 & \textbf{0.0} & \textbf{0.0} & \textbf{11} & 5 & \textbf{11.0} & 8.0 \\
 & nurikabe & \textbf{12} & 7 & \textbf{14} & 3 & \textbf{12} & 7 & \textbf{13} & 12 \\
 & org-syn & \textbf{12} & 3 & \textbf{4} & 1 & \textbf{11} & 2 & \textbf{9} & 6 \\
 & settlers & \textbf{5} & 4 & \textbf{5} & 1 & \textbf{4} & \textbf{4} & \textbf{6} & 5 \\
 & snake & \textbf{9} & 0.0 & \textbf{20} & 0.0 & \textbf{11} & 0 & \textbf{10} & 4.0 \\
 & spider & \textbf{17} & 14 & \textbf{13} & 2 & \textbf{16} & 13 & \textbf{17} & 13 \\
 & termes & 10 & \textbf{11} & \textbf{10} & 9 & \textbf{10} & \textbf{10} & \textbf{13} & \textbf{13} \\
 & \textit{ipc18} & \textbf{109} & 53 & \textbf{92} & 17 & \textbf{107} & 49 & \textbf{105} & 78 \\
\addlinespace & folding & 7 & \textbf{8} & \textbf{8} & 0.0 & \textbf{6} & \textbf{6} & \textbf{9} & \textbf{9.0} \\
 & labyrinth & \textbf{0.0} & \textbf{0.0} & \textbf{4} & 1 & \textbf{0.0} & \textbf{0} & 0.0 & \textbf{1} \\
 & quantum & \textbf{20} & 1 & \textbf{20} & 1 & \textbf{20} & 1 & \textbf{20} & 10 \\
 & recharging & 9 & \textbf{10} & \textbf{15} & \textbf{15} & \textbf{9} & \textbf{9} & \textbf{10} & \textbf{10} \\
 & ricochet & \textbf{13} & 0.0 & \textbf{20} & 0.0 & \textbf{16} & 0 & \textbf{14} & 8.0 \\
 & rubiks & \textbf{6} & 4 & \textbf{4} & \textbf{4} & \textbf{5} & 4 & 19 & \textbf{20} \\
 & \textit{ipc23} & \textbf{55} & 23 & \textbf{71} & 21 & \textbf{56} & 20 & \textbf{72} & 58 \\
\addlinespace & \textit{total} & \textbf{164} & 76 & \textbf{163} & 38 & \textbf{163} & 69 & \textbf{177} & 136 \\
\midrule\multirow{19}{*}{\rotatebox{90}{Agile IPC Score}} & agricola & \textbf{1.4} & 0.6 & \textbf{3.5} & 0.4 & \textbf{1.8} & 1.5 & 0.7 & \textbf{1.4} \\
 & caldera & \textbf{7.3} & 1.9 & \textbf{5.1} & 0.0 & \textbf{8.0} & 1.5 & \textbf{6.1} & 2.5 \\
 & data-net & \textbf{6.2} & 0.8 & \textbf{4.8} & 0.0 & \textbf{6.8} & 1.0 & 4.6 & \textbf{5.8} \\
 & flashfill & \textbf{5.8} & 3.7 & \textbf{0.0} & \textbf{0.0} & \textbf{5.3} & 2.7 & \textbf{5.8} & 4.9 \\
 & nurikabe & \textbf{7.4} & 3.6 & \textbf{8.9} & 1.5 & \textbf{7.5} & 3.6 & \textbf{8.1} & 7.0 \\
 & org-syn & \textbf{4.3} & 1.2 & \textbf{1.8} & 0.4 & \textbf{4.0} & 0.6 & \textbf{3.3} & 2.5 \\
 & settlers & 1.2 & \textbf{1.3} & \textbf{1.1} & 0.5 & \textbf{0.9} & 0.2 & 1.7 & \textbf{1.8} \\
 & snake & \textbf{5.1} & 0.0 & \textbf{8.8} & 0.0 & \textbf{5.1} & 0.0 & \textbf{3.2} & 1.7 \\
 & spider & \textbf{7.4} & 5.3 & \textbf{5.8} & 1.0 & \textbf{6.7} & 5.1 & \textbf{5.7} & 5.2 \\
 & termes & \textbf{7.8} & 6.1 & \textbf{5.4} & 4.0 & \textbf{5.3} & 5.3 & \textbf{8.3} & 8.1 \\
 & \textit{ipc18} & \textbf{53.9} & 24.4 & \textbf{45.3} & 7.8 & \textbf{51.4} & 21.6 & \textbf{47.5} & 40.9 \\
\addlinespace & folding & 2.2 & \textbf{2.5} & \textbf{3.5} & 0.0 & 1.8 & \textbf{1.9} & \textbf{3.4} & 3.2 \\
 & labyrinth & \textbf{0.0} & \textbf{0.0} & \textbf{0.2} & 0.1 & \textbf{0.0} & \textbf{0.0} & 0.0 & \textbf{0.1} \\
 & quantum & \textbf{18.1} & 1.0 & \textbf{19.1} & 1.0 & \textbf{18.7} & 0.8 & \textbf{18.0} & 8.3 \\
 & recharging & 3.0 & \textbf{3.7} & \textbf{7.4} & 6.7 & 4.1 & \textbf{4.2} & \textbf{4.9} & 4.5 \\
 & ricochet & \textbf{7.6} & 0.0 & \textbf{9.7} & 0.0 & \textbf{5.8} & 0.0 & \textbf{9.5} & 3.3 \\
 & rubiks & \textbf{5.5} & 3.7 & \textbf{4.0} & 3.8 & \textbf{4.3} & 3.9 & 9.4 & \textbf{12.2} \\
 & \textit{ipc23} & \textbf{36.4} & 11.0 & \textbf{43.9} & 11.6 & \textbf{34.7} & 10.9 & \textbf{45.2} & 31.7 \\
\addlinespace & \textit{total} & \textbf{90.2} & 35.4 & \textbf{89.2} & 19.5 & \textbf{86.1} & 32.4 & \textbf{92.8} & 72.5 \\
\bottomrule
\end{tabular}
\end{adjustbox}
\caption{
Results of $\vf_{4..6}$ and NOLAN comparing $w_{\max}=2$ and dynamic $w_{\max}$ based on the threshold $V=100$.
The dynamic max-width approach did not help the performance in the domains evaluated here.
}
\label{tbl:nolan-dynamic-width}
\end{table*}

\begin{table*}[p]
\centering
\begin{adjustbox}{max width=\linewidth}
\begin{tabular}{rrrrrrrrr}
\toprule
 &   & \multicolumn{7}{c}{\cyan{Tree Collapsing $\theta=$} } \\
 &   & \cyan{10} & \cyan{20} & \cyan{40} & \cyan{80} & \cyan{160} & \cyan{320} & \orange{DTC} \\
\midrule\multirow{19}{*}{\rotatebox{90}{\# solved}} & agricola & 9 & 9.8 & 10.2 & \textbf{10.4} & 9 & \textbf{10.4} & \textbf{11.4} \\
 & caldera & \textbf{11} & 10 & 10 & 10 & 10.4 & \textbf{10.6} & 10.4 \\
 & data-net & \textbf{17} & 16.8 & 15.6 & 16.6 & \textbf{17.2} & 16.8 & \textbf{17} \\
 & flashfill & \textbf{12} & 11.4 & 11.4 & \textbf{11.6} & 11.4 & \textbf{11.6} & \textbf{11.6} \\
 & nurikabe & 9.8 & \textbf{10} & \textbf{10} & 9.8 & \textbf{10.2} & \textbf{10} & 9.8 \\
 & org-syn & 8.4 & 8.4 & \textbf{9.4} & 8.6 & 8.4 & 8.6 & \textbf{9.2} \\
 & settlers & 14.4 & 15 & 15.2 & \textbf{15.6} & 15.4 & \textbf{15.6} & 14.6 \\
 & snake & 10 & 10.2 & \textbf{10.6} & 10.2 & \textbf{11.2} & 9.6 & \textbf{10.6} \\
 & spider & \textbf{13.8} & 13.4 & \textbf{13.8} & \textbf{14} & 12.8 & \textbf{13.8} & \textbf{13.8} \\
 & termes & 12.8 & \textbf{13.6} & \textbf{13.6} & 13.4 & 12.6 & 12.2 & \textbf{13.6} \\
 & \textit{ipc18} & 118.2 & 118.6 & 119.8 & \textbf{120.2} & 118.6 & 119.2 & \textbf{122} \\
\addlinespace & folding & \textbf{7.8} & 7 & \textbf{7.4} & \textbf{7.4} & 6.2 & 7 & \textbf{7.4} \\
 & labyrinth & \textbf{0.4} & \textbf{0.4} & 0.0 & 0.2 & 0.2 & 0.2 & \textbf{0.4} \\
 & quantum & \textbf{20} & \textbf{20} & \textbf{20} & \textbf{20} & \textbf{20} & \textbf{20} & \textbf{20} \\
 & recharging & 10 & \textbf{10.4} & \textbf{10.2} & 10 & 9.4 & \textbf{10.2} & 9.8 \\
 & ricochet & 16.4 & 15.2 & \textbf{18.2} & 13.6 & 16.6 & 14 & \textbf{16.8} \\
 & rubiks & \textbf{17.4} & \textbf{17.4} & 14.6 & 11.6 & 14.8 & 13.8 & 15.8 \\
 & \textit{ipc23} & \textbf{72} & \textbf{70.4} & \textbf{70.4} & 62.8 & 67.2 & 65.2 & 70.2 \\
\addlinespace & \textit{total} & \textbf{190.2} & 189 & \textbf{190.2} & 183 & 185.8 & 184.4 & \textbf{192.2} \\
\midrule\multirow{19}{*}{\rotatebox{90}{Agile IPC Score}} & agricola & 1.8 & 2.0 & 2.0 & 2.1 & 2.1 & \textbf{2.1} & \textbf{2.4} \\
 & caldera & 7.5 & \textbf{7.5} & 7.0 & 7.2 & 7.1 & 7.5 & \textbf{7.8} \\
 & data-net & 8.4 & 8.4 & 8.1 & 8.7 & 8.6 & \textbf{9.2} & \textbf{10.1} \\
 & flashfill & \textbf{6.2} & 6.2 & 5.9 & 5.9 & 6.1 & 6.1 & \textbf{6.3} \\
 & nurikabe & 5.9 & 6.2 & \textbf{6.3} & 6.2 & \textbf{6.4} & 6.0 & 6.0 \\
 & org-syn & 3.0 & 3.0 & \textbf{3.4} & 3.4 & 3.2 & 3.3 & \textbf{3.7} \\
 & settlers & 5.7 & 6.5 & \textbf{6.7} & 6.7 & 6.4 & \textbf{6.7} & 6.4 \\
 & snake & 4.6 & \textbf{4.7} & \textbf{4.7} & 4.2 & 4.5 & 4.4 & 4.5 \\
 & spider & 4.6 & 4.1 & 4.6 & \textbf{4.8} & 4.2 & \textbf{4.6} & 4.5 \\
 & termes & 7.5 & 8.1 & 7.6 & \textbf{8.2} & 7.3 & 7.3 & \textbf{8.4} \\
 & \textit{ipc18} & 55.4 & 56.8 & 56.2 & 57.3 & 55.8 & \textbf{57.3} & \textbf{60.0} \\
\addlinespace & folding & 2.4 & 2.2 & 2.7 & \textbf{2.7} & 2.3 & 2.2 & \textbf{2.7} \\
 & labyrinth & \textbf{0.2} & 0.0 & 0.0 & 0.0 & 0.0 & 0.0 & \textbf{0.0} \\
 & quantum & 18.3 & 18.3 & 18.0 & \textbf{18.3} & 18.2 & 18.2 & \textbf{18.4} \\
 & recharging & 3.9 & 3.8 & \textbf{4.1} & 3.9 & 3.6 & \textbf{4.2} & 3.7 \\
 & ricochet & \textbf{6.2} & 5.5 & 5.8 & 4.9 & 5.0 & 4.4 & \textbf{6.5} \\
 & rubiks & \textbf{8.1} & \textbf{8.1} & 6.8 & 7.2 & 7.3 & 6.7 & 8.0 \\
 & \textit{ipc23} & \textbf{39.0} & 37.8 & 37.4 & 37.1 & 36.4 & 35.8 & \textbf{39.4} \\
\addlinespace & \textit{total} & 94.4 & \textbf{94.6} & 93.5 & 94.4 & 92.1 & 93.1 & \textbf{99.4} \\
\bottomrule
\end{tabular}
\end{adjustbox}
\caption{
Results of \coolname with \cyan{$\theta\in\braces{10,20,40,80,160,320}$},
as well as \orange{Dynamic Tree Collapsing}.
Top-2 (including ties) are highlighted in \textbf{bold}.
}
\label{tbl:lamanormal2-collapsing}
\end{table*}

\begin{table*}[p]
\centering
\begin{adjustbox}{max width=\linewidth}
\begin{tabular}{rc|rr|rr||rrrr|rrrrrc}
\toprule
 & IPC   & \multicolumn{2}{c|}{Lapkt-BFWS} & \multicolumn{2}{c||}{FD-BFWS} & \multicolumn{2}{c}{LAMA} & Dec  & NO & LAMAe  & \green{N2}\orange{DTC} & \green{N2}\orange{DTC} & \multirow{2}{*}{\coolname}\\
 & year  & Apx$^{\text{fd}}$ & $\vf_5$$^{\text{fd}}$ & $\vf_4$ & $\vf_5$ &  & +SM & Star & LAN                                      & +BFWS & +BFWS           & +LAMAe            & \\
\midrule\multirow{19}{*}{\rotatebox{90}{\# solved}} & agricola & \textbf{12} & 10 & 9 & \textbf{12} & 10 & 11 & 11 & 7 & 11 & \textbf{12.6} & 11.4 & 11.4 \\
 & caldera & 10 & 2 & \textbf{12} & 6 & 6 & 6.2 & 6 & 11 & \textbf{12} & 10.2 & 5.8 & 10.4 \\
 & data-net & 15 & 11 & 11 & 8 & 11 & \textbf{16} & 9 & 8 & 9 & \textbf{16} & 8.2 & \textbf{17} \\
 & flashfill & 13 & 12 & 12 & 0.0 & \textbf{14} & 12.6 & \textbf{14} & 11 & 9 & 5.8 & 9.2 & 11.6 \\
 & nurikabe & 0.0 & 0.0 & 12 & \textbf{14} & 9 & 9 & 8 & \textbf{13} & 12 & 10.8 & 8.4 & 9.8 \\
 & org-syn & 7 & 5 & \textbf{12} & 4 & 10 & 10 & 9 & 9 & \textbf{11} & 8.6 & 10.6 & 9.2 \\
 & settlers & 7 & 6 & 5 & 5 & \textbf{17} & \textbf{17} & 12 & 6 & 4 & 0.6 & 6.2 & 14.6 \\
 & snake & \textbf{20} & 18 & 9 & \textbf{20} & 4 & 10.2 & 5 & 10 & 8 & 18.6 & 9.6 & 10.6 \\
 & spider & 12 & 11 & \textbf{17} & 13 & 16 & 15.2 & 13 & \textbf{17} & 15 & 12.2 & 11.6 & 13.8 \\
 & termes & 3 & 8 & 10 & 10 & 14 & 13.2 & 12 & 13 & 9 & 11.6 & \textbf{13.6} & \textbf{13.6} \\
\multicolumn{1}{l}{} & \textit{ipc18} & 99 & 83 & 109 & 92 & 111 & \textbf{120.4} & 99 & 105 & 100 & 107 & 94.6 & \textbf{122} \\
\addlinespace & folding & 4 & 7 & 7 & 8 & \textbf{11} & \textbf{11} & 7 & 9.0 & 8.0 & 5.6 & 10.2 & 7.4 \\
 & labyrinth & \textbf{14} & \textbf{15} & 0 & 4 & 0.0 & 0.2 & 0.0 & 0 & 0 & 0.4 & 0.2 & 0.4 \\
 & quantum & \textbf{20} & \textbf{20} & \textbf{20} & \textbf{20} & 19 & 19 & 19 & \textbf{20} & \textbf{20} & 19.6 & 19.2 & \textbf{20} \\
 & recharging & \textbf{15} & \textbf{16} & 9.0 & \textbf{15} & 10 & 10 & 10 & 10 & 12 & 7.8 & 6 & 9.8 \\
 & ricochet & 7 & 1 & 13 & \textbf{20} & 6 & 5.8 & 7 & 14 & 14 & 14 & 9.2 & \textbf{16.8} \\
 & rubiks & 4 & 5 & 6 & 4 & \textbf{20} & \textbf{20} & \textbf{20} & 19.0 & 18.0 & 13.0 & 16.8 & 15.8 \\
 & slitherlink & \textbf{3} & \textbf{3} & 0 & 0.0 & 0.0 & 0.0 & 0.0 & 0 & 0 & 0 & 0 & 0 \\
 & \textit{ipc23} & 67 & 67 & 55 & 71 & 66 & 66 & 63 & \textbf{72} & \textbf{72} & 60.4 & 61.6 & 70.2 \\
\addlinespace & \textit{total} & 166 & 150 & 164 & 163 & 177 & \textbf{186.4} & 162 & 177 & 172 & 167.4 & 156.2 & \textbf{192.2} \\
\midrule\multirow{19}{*}{\rotatebox{90}{Agile IPC Score}} & agricola & \textbf{2.9} & 2.4 & 1.4 & \textbf{3.5} & 2.0 & 2.1 & 2.5 & 0.7 & 1.5 & 2.7 & 2.0 & 2.4 \\
 & caldera & 6.4 & 1.8 & 7.3 & 5.1 & 3.8 & 4.2 & 4.1 & 6.1 & \textbf{7.8} & \textbf{7.9} & 5.1 & 7.8 \\
 & data-net & 7.1 & 5.8 & 6.2 & 4.8 & 8.3 & \textbf{10.4} & 6.1 & 4.6 & 5.4 & 8.5 & 4.7 & \textbf{10.1} \\
 & flashfill & \textbf{9.0} & \textbf{8.6} & 5.8 & 0.0 & 7.8 & 7.3 & 8.3 & 5.8 & 5.0 & 2.4 & 4.8 & 6.3 \\
 & nurikabe & 0.0 & 0.0 & 7.4 & \textbf{8.9} & 6.3 & 6.3 & 5.7 & \textbf{8.1} & 7.3 & 6.8 & 5.6 & 6.0 \\
 & org-syn & \textbf{3.9} & 3.8 & \textbf{4.3} & 1.8 & 3.3 & 3.5 & 3.5 & 3.3 & 3.7 & 3.4 & 3.6 & 3.7 \\
 & settlers & 2.7 & 2.1 & 1.2 & 1.1 & \textbf{8.5} & \textbf{8.8} & 6.8 & 1.7 & 1.5 & 0.2 & 1.7 & 6.4 \\
 & snake & \textbf{13.1} & \textbf{9.1} & 5.1 & 8.8 & 2.0 & 4.0 & 2.3 & 3.2 & 3.5 & 8.4 & 4.3 & 4.5 \\
 & spider & 4.6 & 4.5 & \textbf{7.4} & \textbf{5.8} & 5.4 & 5.0 & 5.2 & 5.7 & 5.6 & 4.0 & 3.7 & 4.5 \\
 & termes & 1.5 & 3.4 & 7.8 & 5.4 & \textbf{9.6} & 8.6 & 6.5 & 8.3 & 4.6 & 7.7 & \textbf{8.9} & 8.4 \\
 & \textit{ipc18} & 51.3 & 41.4 & 53.9 & 45.3 & 57.1 & \textbf{60.3} & 51.1 & 47.5 & 45.8 & 52.1 & 44.3 & \textbf{60.0} \\
\addlinespace & folding & 1.7 & 2.1 & 2.2 & \textbf{3.5} & 3.4 & 3.4 & 2.4 & \textbf{3.4} & 2.5 & 1.5 & 3.1 & 2.7 \\
 & labyrinth & \textbf{3.9} & \textbf{3.8} & 0.0 & 0.2 & 0.0 & 0.0 & 0.0 & 0.0 & 0.0 & 0.0 & 0.0 & 0.0 \\
 & quantum & \textbf{19.9} & \textbf{19.9} & 18.1 & 19.1 & 16.9 & 17.2 & 17.8 & 18.0 & 17.4 & 16.7 & 17.4 & 18.4 \\
 & recharging & \textbf{7.5} & \textbf{7.7} & 3.0 & 7.4 & 3.6 & 4.0 & 4.0 & 4.9 & 4.9 & 3.3 & 2.3 & 3.7 \\
 & ricochet & 0.6 & 0.4 & 7.6 & \textbf{9.7} & 1.6 & 1.9 & 3.4 & \textbf{9.5} & 5.6 & 5.5 & 3.8 & 6.5 \\
 & rubiks & 4.0 & 4.1 & 5.5 & 4.0 & \textbf{12.7} & 12.6 & \textbf{15.0} & 9.4 & 9.0 & 7.7 & 8.2 & 8.0 \\
 & slitherlink & \textbf{1.5} & \textbf{1.9} & 0.0 & 0.0 & 0.0 & 0.0 & 0.0 & 0.0 & 0.0 & 0.0 & 0.0 & 0.0 \\
 & \textit{ipc23} & 39.0 & 39.9 & 36.4 & \textbf{43.9} & 38.3 & 39.0 & 42.6 & \textbf{45.2} & 39.3 & 34.7 & 34.8 & 39.4 \\
\addlinespace & \textit{total} & 90.3 & 81.3 & 90.2 & 89.2 & 95.4 & \textbf{99.3} & 93.7 & 92.8 & 85.2 & 86.8 & 79.1 & \textbf{99.4} \\
\bottomrule
\end{tabular}
\end{adjustbox}
\caption{
300 seconds agile run comparing \lsota planners.
Top-2 (including ties) are highlighted in \textbf{bold}.
}
\label{tbl:domain-wise}
\end{table*}

\begin{table*}[p]
\begin{adjustbox}{max width=\linewidth}
\begin{tabular}{rc|rr|rr||rrrr|rrrrrc}
\toprule
 & IPC   & \multicolumn{2}{c|}{Lapkt-BFWS} & \multicolumn{2}{c||}{FD-BFWS} & \multicolumn{2}{c}{LAMA} & Dec  & NO & LAMAe  & \green{N2}\orange{DTC} & \green{N2}\orange{DTC} & \multirow{2}{*}{\coolname}\\
 & year  & Apx$^{\text{fd}}$ & $\vf_5$$^{\text{fd}}$ & $\vf_4$ & $\vf_5$ &  & +SM & Star & LAN                                      & +BFWS & +BFWS           & +LAMAe            & \\
\midrule\multirow{19}{*}{\rotatebox{90}{\# solved}} & agricola & \textbf{18} & 10 & 14 & 12 & 12 & 12 & 12 & 11 & \textbf{15} & 14.4 & 12.8 & 13.8 \\
 & caldera & 12 & 2 & \textbf{13} & 6 & 7 & 7.2 & 7 & 12 & \textbf{18} & 12 & 5.8 & 11.8 \\
 & data-net & \textbf{19} & 11 & 12 & 11 & 13 & 17 & 10 & 12 & 10 & 18.2 & 11 & \textbf{19.8} \\
 & flashfill & 13 & 12 & 13 & 0.0 & \textbf{14} & \textbf{14} & \textbf{14} & 13 & 13 & 8.8 & 9.6 & 13.8 \\
 & nurikabe & 0.0 & 0.0 & 12 & \textbf{15} & 9 & 9.2 & 10 & \textbf{14} & \textbf{14} & 11.8 & 8.8 & 11 \\
 & org-syn & 7 & 5 & \textbf{13} & 5 & \textbf{14} & 12.4 & 11 & \textbf{13} & \textbf{13} & 10.8 & 12.2 & 12.6 \\
 & settlers & 11 & 10 & 12 & 7 & \textbf{17} & \textbf{17} & 13 & 11 & 13 & 0.8 & 7.6 & \textbf{17.4} \\
 & snake & \textbf{20} & 18 & 12 & \textbf{20} & 5 & 13 & 6 & 12 & 11 & 19.4 & 13.6 & 14 \\
 & spider & 14 & 13 & 17 & 15 & 16 & 16.6 & 13 & \textbf{19} & \textbf{19} & 16.2 & 14.4 & 16 \\
 & termes & 4 & 9 & 14 & 11 & \textbf{16} & 15 & 14 & \textbf{16} & 14 & 14.4 & 15.8 & 14.8 \\
 & \textit{ipc18} & 118 & 90 & 132 & 102 & 123 & 133.4 & 110 & 133 & \textbf{140} & 126.8 & 111.6 & \textbf{145} \\
\addlinespace & folding & 5 & 8 & 9 & 9 & \textbf{12} & \textbf{12} & 8 & 10.0 & 9.0 & 7 & \textbf{12.2} & 10.4 \\
 & labyrinth & \textbf{15} & \textbf{15} & 3 & 4 & 1.0 & 1 & 0.0 & 3 & 3 & 4 & 2 & 3.8 \\
 & quantum & \textbf{20} & \textbf{20} & \textbf{20} & \textbf{20} & \textbf{20} & 19 & \textbf{20} & \textbf{20} & \textbf{20} & \textbf{20} & 19.2 & \textbf{20} \\
 & recharging & \textbf{16} & \textbf{16} & 13.0 & 15 & 14 & 12.8 & 12 & 14 & 14 & 11 & 7 & 11.4 \\
 & ricochet & 19 & 1 & 20 & 20 & 14 & 15.2 & 10 & 19 & \textbf{20} & 19.8 & 13.2 & \textbf{20} \\
 & rubiks & 4 & 5 & 11 & 4 & \textbf{20} & \textbf{20} & \textbf{20} & \textbf{20.0} & \textbf{20.0} & 18.6 & 19.8 & \textbf{20.0} \\
 & slitherlink & \textbf{4} & \textbf{4} & 0 & 0.0 & 0.0 & 0.0 & 0.0 & 0 & 0 & 0 & 0 & 0 \\
 & \textit{ipc23} & 83 & 69 & 76 & 72 & 81 & 80 & 70 & \textbf{86} & \textbf{86} & 80.4 & 73.4 & 85.6 \\
\addlinespace & \textit{total} & 201 & 159 & 208 & 174 & 204 & 213.4 & 180 & 219 & \textbf{226} & 207.2 & 185 & \textbf{230.6} \\
\midrule\multirow{19}{*}{\rotatebox{90}{Extended IPC Score}} & agricola & \textbf{6.0} & 4.2 & 4.0 & \textbf{5.6} & 4.3 & 4.4 & 4.6 & 3.0 & 4.5 & 5.3 & 4.4 & 5.0 \\
 & caldera & 7.6 & 1.8 & 8.5 & 5.3 & 4.4 & 4.8 & 4.7 & 7.3 & \textbf{9.3} & \textbf{8.7} & 5.2 & 8.5 \\
 & data-net & 9.6 & 7.1 & 7.4 & 6.2 & 9.3 & \textbf{11.9} & 7.0 & 6.0 & 6.4 & 10.6 & 5.9 & \textbf{12.0} \\
 & flashfill & \textbf{9.9} & 9.4 & 7.4 & 0.0 & 9.3 & 8.7 & \textbf{9.7} & 7.4 & 6.5 & 3.6 & 5.9 & 7.9 \\
 & nurikabe & 0.0 & 0.0 & 8.5 & \textbf{10.3} & 7.0 & 7.0 & 6.4 & \textbf{9.3} & 8.7 & 7.9 & 6.3 & 7.0 \\
 & org-syn & 4.7 & 4.1 & \textbf{6.2} & 2.5 & 5.1 & 5.3 & 5.0 & 5.2 & \textbf{5.6} & 4.9 & 5.5 & 5.5 \\
 & settlers & 4.4 & 3.7 & 2.8 & 2.4 & \textbf{10.6} & \textbf{10.7} & 8.1 & 3.4 & 3.3 & 0.3 & 2.9 & 8.8 \\
 & snake & \textbf{14.8} & 11.2 & 6.5 & \textbf{11.5} & 2.6 & 5.9 & 3.0 & 5.0 & 4.8 & 11.0 & 6.0 & 6.4 \\
 & spider & 6.4 & 6.3 & \textbf{9.7} & 7.7 & 8.0 & 7.7 & 7.1 & \textbf{8.8} & 8.3 & 6.5 & 5.8 & 7.0 \\
 & termes & 2.1 & 4.7 & 8.8 & 6.7 & \textbf{10.7} & 9.9 & 8.0 & 9.7 & 6.1 & 8.9 & \textbf{10.3} & 9.9 \\
 & \textit{ipc18} & 65.4 & 52.5 & 69.8 & 58.1 & 71.0 & \textbf{76.3} & 63.6 & 65.0 & 63.5 & 67.8 & 58.4 & \textbf{77.9} \\
\addlinespace & folding & 2.4 & 3.5 & 3.6 & 4.7 & \textbf{5.3} & \textbf{5.3} & 3.6 & 5.0 & 4.0 & 2.7 & 5.0 & 4.2 \\
 & labyrinth & \textbf{6.5} & \textbf{6.5} & 0.4 & 1.1 & 0.1 & 0.1 & 0.0 & 0.4 & 0.3 & 0.6 & 0.2 & 0.5 \\
 & quantum & \textbf{19.9} & \textbf{19.9} & 18.5 & 19.3 & 17.5 & 17.6 & 18.3 & 18.4 & 18.0 & 17.4 & 17.8 & 18.8 \\
 & recharging & \textbf{9.5} & \textbf{9.7} & 4.9 & 9.2 & 5.5 & 5.8 & 5.7 & 6.5 & 6.7 & 4.8 & 3.3 & 5.2 \\
 & ricochet & 4.2 & 0.5 & 10.2 & \textbf{12.2} & 3.8 & 4.1 & 4.6 & \textbf{11.5} & 8.6 & 8.6 & 5.6 & 9.5 \\
 & rubiks & 4.0 & 4.3 & 6.1 & 4.0 & \textbf{14.5} & 14.3 & \textbf{16.2} & 11.9 & 11.6 & 9.8 & 10.8 & 10.6 \\
 & slitherlink & \textbf{2.0} & \textbf{2.3} & 0.0 & 0.0 & 0.0 & 0.0 & 0.0 & 0.0 & 0.0 & 0.0 & 0.0 & 0.0 \\
 & \textit{ipc23} & 48.5 & 46.8 & 43.8 & \textbf{50.4} & 46.6 & 47.2 & 48.4 & \textbf{53.7} & 49.2 & 44.0 & 42.7 & 48.9 \\
\addlinespace & \textit{total} & 113.9 & 99.2 & 113.7 & 108.6 & 117.6 & \textbf{123.4} & 112.0 & 118.7 & 112.8 & 111.8 & 101.0 & \textbf{126.8} \\
\bottomrule
\end{tabular}
\end{adjustbox}
\caption{
1800 seconds extended runs. Extended IPC score is measured by $\sum_i \max(1, 1-\nicefrac{\log t_i}{\log 1800})$.
Top-2 (including ties) are highlighted in \textbf{bold}.
}
\label{tbl:domain-wise-extended}
\end{table*}

\begin{table*}[p]
\centering
\begin{adjustbox}{max width=\linewidth}
\begin{tabular}{rc|rr|rr||rrrr|rrrrrc}
\toprule
 & IPC   & \multicolumn{2}{c|}{Lapkt-BFWS} & \multicolumn{2}{c||}{FD-BFWS} & \multicolumn{2}{c}{LAMA} & Dec  & NO & LAMAe  & \green{N2}\orange{DTC} & \green{N2}\orange{DTC} & \multirow{2}{*}{\coolname}\\
 & year  & Apx$^{\text{fd}}$ & $\vf_5$$^{\text{fd}}$ & $\vf_4$ & $\vf_5$ &  & +SM & Star & LAN                                      & +BFWS & +BFWS           & +LAMAe            & \\
\midrule\multirow{19}{*}{\rotatebox{90}{\# solved}} & agricola & 11.4 & 0.2 & 8.7 & \textbf{11.5} & 9.8 & 10.8 & 10.7 & 6.6 & 10.5 & \textbf{11.7} & 10.6 & 11.0 \\
 & caldera & 7.9 & 1.1 & \textbf{10.6} & 4.4 & 4.6 & 4.7 & 4.6 & 10.4 & \textbf{10.7} & 8.3 & 4.2 & 8.3 \\
 & data-net & \textbf{11.9} & 0.8 & 9.5 & 4.9 & 7.3 & 8.3 & 5.2 & 7.5 & 7.9 & 7.0 & 5.6 & \textbf{12.1} \\
 & flashfill & 11.8 & 0.6 & 11.2 & 0.0 & \textbf{13.7} & 12.4 & \textbf{12.9} & 10.6 & 8.1 & 3.3 & 7.0 & 10.8 \\
 & nurikabe & 0.0 & 0.0 & 9.8 & \textbf{12.2} & 8.3 & 8.3 & 6.0 & \textbf{11.8} & 9.8 & 7.6 & 7.8 & 8.5 \\
 & org-syn & 7.0 & 0.8 & \textbf{11.7} & 4.0 & 10.0 & 10.0 & 9.0 & 9.0 & \textbf{11.0} & 8.6 & 10.4 & 9.1 \\
 & settlers & 5.7 & 0.7 & 4.0 & 4.1 & \textbf{14.3} & \textbf{13.0} & 9.7 & 5.5 & 3.3 & 0.3 & 3.1 & 11.7 \\
 & snake & \textbf{17.0} & \textbf{17.8} & 5.4 & 15.8 & 2.7 & 5.4 & 2.6 & 7.5 & 4.9 & 6.1 & 4.5 & 5.8 \\
 & spider & 2.4 & \textbf{11.0} & 6.0 & 3.1 & 5.2 & 4.5 & 3.2 & \textbf{6.1} & 4.5 & 2.3 & 2.9 & 4.2 \\
 & termes & 2.5 & 6.9 & 4.7 & 6.2 & \textbf{9.3} & 8.6 & 5.8 & \textbf{8.8} & 4.3 & 1.4 & 3.4 & 7.4 \\
 & \textit{ipc18} & 77.6 & 39.9 & 81.6 & 66.3 & 85.3 & \textbf{86.0} & 69.6 & 83.7 & 75.0 & 56.6 & 59.7 & \textbf{88.9} \\
\addlinespace & folding & 0.2 & \textbf{7.0} & 3.2 & 1.1 & \textbf{3.8} & \textbf{3.8} & 1.3 & 1.4 & 2.1 & 0.5 & 2.8 & 1.4 \\
 & labyrinth & \textbf{6.4} & \textbf{15.0} & 0.0 & 2.0 & 0.0 & 0.0 & 0.0 & 0.0 & 0.0 & 0.0 & 0.1 & 0.1 \\
 & quantum & 16.0 & 15.8 & 17.0 & 16.2 & 16.3 & 16.3 & 15.6 & 16.6 & \textbf{17.2} & 11.7 & 16.1 & \textbf{17.1} \\
 & recharging & 2.9 & \textbf{4.0} & 2.8 & \textbf{4.0} & 3.0 & 2.9 & 3.0 & 3.8 & 3.8 & 2.8 & 1.9 & 3.0 \\
 & ricochet & 5.8 & 1.0 & 6.8 & \textbf{16.7} & 3.3 & 2.4 & 3.9 & \textbf{10.7} & 8.9 & 0.2 & 1.3 & 3.7 \\
 & rubiks & 4.0 & 5.0 & 5.4 & 4.0 & \textbf{13.5} & \textbf{13.6} & 11.8 & 12.0 & 13.0 & 7.5 & 7.6 & 9.2 \\
 & slitherlink & \textbf{3.0} & \textbf{3.0} & 0.0 & 0.0 & 0.0 & 0.0 & 0.0 & 0.0 & 0.0 & 0.0 & 0.0 & 0.0 \\
 & \textit{ipc23} & 38.3 & \textbf{50.8} & 35.1 & 44.1 & 39.9 & 39.1 & 35.5 & 44.6 & \textbf{45.0} & 22.7 & 29.7 & 34.4 \\
\addlinespace & \textit{total} & 115.9 & 90.6 & 116.7 & 110.4 & \textbf{125.2} & 125.1 & 105.1 & \textbf{128.3} & 120.1 & 79.4 & 89.4 & 123.4 \\
\bottomrule
\end{tabular}
\end{adjustbox}
\caption{
IPC Satisficing scores on 300-second agile run comparing \lsota planners.
Top-2 (including ties) are highlighted in \textbf{bold}.
}
\label{tbl:ipc-sat-300}
\end{table*}

\begin{table*}[p]
\centering
\begin{adjustbox}{max width=\linewidth}
\begin{tabular}{rc|rr|rr||rrrr|rrrrrc}
\toprule
 & IPC   & \multicolumn{2}{c|}{Lapkt-BFWS} & \multicolumn{2}{c||}{FD-BFWS} & \multicolumn{2}{c}{LAMA} & Dec  & NO & LAMAe  & \green{N2}\orange{DTC} & \green{N2}\orange{DTC} & \multirow{2}{*}{\coolname}\\
 & year  & Apx$^{\text{fd}}$ & $\vf_5$$^{\text{fd}}$ & $\vf_4$ & $\vf_5$ &  & +SM & Star & LAN                                      & +BFWS & +BFWS           & +LAMAe            & \\
\midrule\multirow{19}{*}{\rotatebox{90}{\# solved}} & agricola & 11.4 & 0.2 & 8.7 & \textbf{11.5} & 9.8 & 10.8 & 10.7 & 6.6 & 10.5 & \textbf{11.7} & 10.6 & 11.0 \\
 & caldera & 7.9 & 1.1 & \textbf{10.6} & 4.4 & 4.6 & 4.7 & 4.6 & 10.4 & \textbf{10.7} & 8.3 & 4.2 & 8.3 \\
 & data-net & \textbf{11.9} & 0.8 & 9.5 & 4.9 & 7.3 & 8.3 & 5.2 & 7.5 & 7.9 & 7.0 & 5.6 & \textbf{12.1} \\
 & flashfill & 11.8 & 0.6 & 11.2 & 0.0 & \textbf{13.7} & 12.4 & \textbf{12.9} & 10.6 & 8.1 & 3.3 & 7.0 & 10.8 \\
 & nurikabe & 0.0 & 0.0 & 9.8 & \textbf{12.2} & 8.3 & 8.3 & 6.0 & \textbf{11.8} & 9.8 & 7.6 & 7.8 & 8.5 \\
 & org-syn & 7.0 & 0.8 & \textbf{11.7} & 4.0 & 10.0 & 10.0 & 9.0 & 9.0 & \textbf{11.0} & 8.6 & 10.4 & 9.1 \\
 & settlers & 5.7 & 0.7 & 4.0 & 4.1 & \textbf{14.3} & \textbf{13.0} & 9.7 & 5.5 & 3.3 & 0.3 & 3.1 & 11.7 \\
 & snake & \textbf{17.0} & \textbf{17.8} & 5.4 & 15.8 & 2.7 & 5.4 & 2.6 & 7.5 & 4.9 & 6.1 & 4.5 & 5.8 \\
 & spider & 2.4 & \textbf{11.0} & 6.0 & 3.1 & 5.2 & 4.5 & 3.2 & \textbf{6.1} & 4.5 & 2.3 & 2.9 & 4.2 \\
 & termes & 2.5 & 6.9 & 4.7 & 6.2 & \textbf{9.3} & 8.6 & 5.8 & \textbf{8.8} & 4.3 & 1.4 & 3.4 & 7.4 \\
 & \textit{ipc18} & 77.6 & 39.9 & 81.6 & 66.3 & 85.3 & \textbf{86.0} & 69.6 & 83.7 & 75.0 & 56.6 & 59.7 & \textbf{88.9} \\
\addlinespace & folding & 0.2 & \textbf{7.0} & 3.2 & 1.1 & \textbf{3.8} & \textbf{3.8} & 1.3 & 1.4 & 2.1 & 0.5 & 2.8 & 1.4 \\
 & labyrinth & \textbf{6.4} & \textbf{15.0} & 0.0 & 2.0 & 0.0 & 0.0 & 0.0 & 0.0 & 0.0 & 0.0 & 0.1 & 0.1 \\
 & quantum & 16.0 & 15.8 & 17.0 & 16.2 & 16.3 & 16.3 & 15.6 & 16.6 & \textbf{17.2} & 11.7 & 16.1 & \textbf{17.1} \\
 & recharging & 2.9 & \textbf{4.0} & 2.8 & \textbf{4.0} & 3.0 & 2.9 & 3.0 & 3.8 & 3.8 & 2.8 & 1.9 & 3.0 \\
 & ricochet & 5.8 & 1.0 & 6.8 & \textbf{16.7} & 3.3 & 2.4 & 3.9 & \textbf{10.7} & 8.9 & 0.2 & 1.3 & 3.7 \\
 & rubiks & 4.0 & 5.0 & 5.4 & 4.0 & \textbf{13.5} & \textbf{13.6} & 11.8 & 12.0 & 13.0 & 7.5 & 7.6 & 9.2 \\
 & slitherlink & \textbf{3.0} & \textbf{3.0} & 0.0 & 0.0 & 0.0 & 0.0 & 0.0 & 0.0 & 0.0 & 0.0 & 0.0 & 0.0 \\
 & \textit{ipc23} & 38.3 & \textbf{50.8} & 35.1 & 44.1 & 39.9 & 39.1 & 35.5 & 44.6 & \textbf{45.0} & 22.7 & 29.7 & 34.4 \\
\addlinespace & \textit{total} & 115.9 & 90.6 & 116.7 & 110.4 & \textbf{125.2} & 125.1 & 105.1 & \textbf{128.3} & 120.1 & 79.4 & 89.4 & 123.4 \\
\bottomrule
\end{tabular}
\end{adjustbox}
\caption{
IPC Satisficing scores on 1800-second extended run comparing \lsota planners.
Top-2 (including ties) are highlighted in \textbf{bold}.
}
\label{tbl:ipc-sat-1800}
\end{table*}

\begin{figure*}[p]
 \centering
 \includegraphics[height=0.2\paperheight]{img+static+hist-elapsed+342c55ffbca24f01dd5e5fefeaaa23ef.pdf}
 \includegraphics[height=0.2\paperheight]{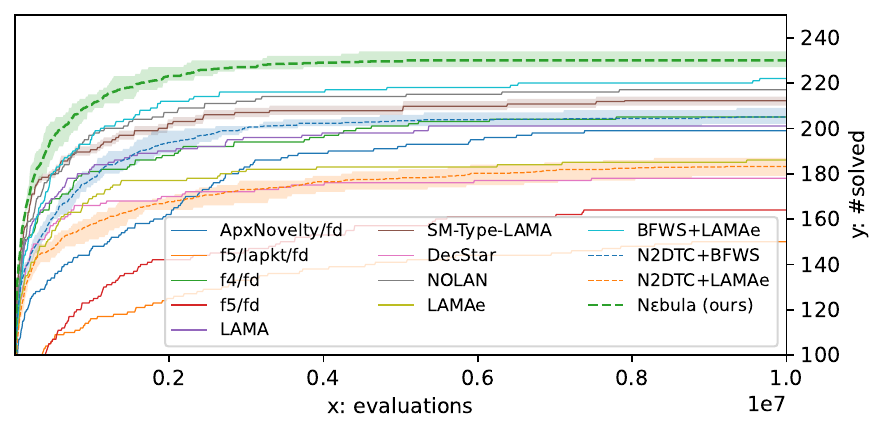}
 \includegraphics[height=0.2\paperheight]{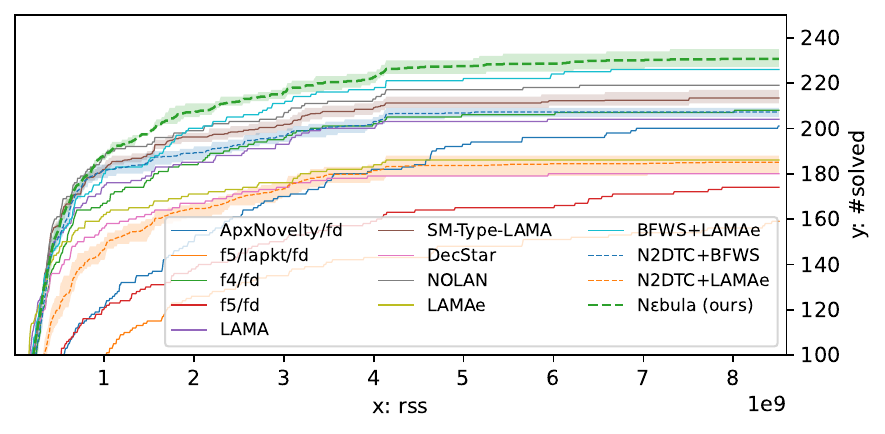}
 \caption{
 A histogram plot of the number of IPC18+IPC23 instances solved within 30 minutes.
 The $y$-axis indicates the number of instances solved within $x$ seconds (top), $x$ evaluations (middle), $x$ bytes of memory usage (bottom).
 The lines indicate the average of 5 seeds, while the band indicate the maximum and the minimum among the seeds.}
 \label{fig:histograms}
\end{figure*}

\end{document}